\definecolor{orange}{rgb}{0.0,0.0,0.75}
\definecolor{darkgreen}{rgb}{0.015,0.6215,0.4413}
\DeclareMathOperator*{\argmax}{arg\,max}
\DeclareMathOperator*{\argmin}{arg\,min}
\newcommand{\norm}[1]{\left\lVert#1\right\rVert}
\newtheorem{theorem}{Theorem}[section]
\newtheorem{corollary}{Corollary}[theorem]
\newtheorem{lemma}[theorem]{Lemma}
\renewcommand{\textuparrow}{$\uparrow$}
\renewcommand{\textdownarrow}{$\downarrow$}
\definecolor{edit}{rgb}{0., 0., 0.}
\definecolor{edit2}{rgb}{0., 0., 0.}
\title{AdvFlow: Inconspicuous Black-box Adversarial Attacks using Normalizing Flows}
\author{%
  Hadi M.~Dolatabadi, Sarah Erfani, Christopher Leckie\\
  School of Computing and Information Systems\\
  The University of Melbourne\\
  Parkville, Victoria, Australia \\
  \texttt{hadi.mohagheghdolatabadi@student.unimelb.edu.au} \\
}
\begin{document}
	
\maketitle

\begin{abstract}
    Deep learning classifiers are susceptible to well-crafted, imperceptible variations of their inputs, known as adversarial attacks.
	In this regard, the study of powerful attack models sheds light on the sources of vulnerability in these classifiers, hopefully leading to more robust ones.
	In this paper, we introduce AdvFlow: a novel black-box adversarial attack method on image classifiers that exploits the power of normalizing flows to model the density of adversarial examples around a given target image.
	We see that the proposed method generates adversaries that closely follow the clean data distribution, a property which makes their detection less likely.
	Also, our experimental results show competitive performance of the proposed approach with some of the existing attack methods on defended classifiers.
	The code is available at \url{https://github.com/hmdolatabadi/AdvFlow}.
\end{abstract}

\section{Introduction}\label{sec:introduction}

Deep neural networks~(DNN) have been successfully applied to a wide variety of machine learning tasks.
For instance, trained neural networks can reach human-level accuracy in image classification~\citep{russakovsky2015imagenet}.
However, \citet{szegedy2014intriguing} showed that such classifiers can be fooled by adding an imperceptible perturbation to the input image.
Since then, there has been \textcolor{edit}{extensive} research in this area known as \textit{adversarial machine learning}, trying to design more powerful attacks and devising more robust neural networks.
Today, this area encompasses a broader type of data than images, with video~\citep{jiang2019black}, graphs~\citep{zugner2018adversarial}, text~\citep{liang2018deep}, and other types of data classifiers being attacked.

In this regard, the design of stronger adversarial attacks plays a crucial role in understanding the nature of possible real-world threats.
The ultimate goal of such studies is to help neural networks \textcolor{edit}{become more robust} against such adversaries.
This line of research is extremely important as even the slightest flaw in some real-world applications of DNNs such as self-driving cars can \textcolor{edit}{have} severe, irreparable consequences~\citep{eykholt2018robust}.

In general, adversarial attack approaches \textcolor{edit}{can be} classified into two broad categories: white-box and black-box.
In \textit{white-box} adversarial attacks, the assumption is that the threat model has full access to the target DNN.
This way, adversaries can leverage their knowledge about the target model to generate adversarial examples (for instance, by taking the gradient of the neural network).
In contrast, \textit{black-box} attacks assume that they do not know the internal structure of the target model a priori.
Instead, they can only \textit{query} the model about some inputs, and work with the labels or confidence levels associated with them~\citep{yuan2019adversarial}.
Thus, black-box attacks seem to be making more realistic assumptions. 
In the beginning, black-box attacks were mostly thought of as the transferability of white-box adversarial examples to unseen models~\citep{papernot2016transfer}.
Recently, however, there has been more research to attack black-box models directly.

In this paper, we introduce AdvFlow: a black-box adversarial attack that makes use of pre-trained normalizing flows to generate adversarial examples.
In particular, we utilize flow-based methods pre-trained on clean data to model the probability distribution of possible adversarial examples around a given image.
Then, by exploiting the notion of \textit{search gradients} from \textit{natural evolution strategies~(NES)}~\citep{wierstra2008nes, wierstra2014natural}, we solve the black-box optimization problem associated with adversarial example generation to adjust this distribution.
At the end of this process, we wind up having a data distribution whose realizations are likely to be adversarial.
Since this density is constructed on the top of the original data distribution estimated by normalizing flows, we see that the generated perturbations take \textcolor{edit}{on} the structure of data rather than an additive noise (see Figure~\ref{fig:celeba_example}).
This property impedes distinguishing AdvFlow examples from clean data for adversarial example detectors, as they often assume that the adversaries come from a different distribution \textcolor{edit}{than the} clean data.
Moreover, we prove a lemma to conclude that adversarial perturbations generated by the proposed approach can be approximated by a normal distribution with dependent components.
We then put our model under test and show its effectiveness in generating adversarial examples with 1) less detectability, 2) higher success rate, 3) lower number of queries, and 4) higher rate of transferability on defended models compared to the similar method of $\mathcal{N}$\textsc{Attack}~\citep{li2019nattack}.

In summary, we make the following contributions:
\begin{itemize}
    \item We introduce AdvFlow, a black-box adversarial attack that leverages the power of normalizing flows in modeling data distributions. To the best of our knowledge, this is the first work that explores the use of flow-based models in the design of adversarial attacks.
	\item We prove a lemma about the adversarial perturbations generated by AdvFlow. As a result of this lemma, we deduce that AdvFlows can generate perturbations with dependent elements, while this is not the case for $\mathcal{N}$\textsc{Attack}~\citep{li2019nattack}.
	\item We show the power of the proposed approach in generating adversarial examples that have a similar distribution to the data. As a result, our method is able to mislead adversarial example detectors for they often assume adversaries come from a different distribution than the clean data. We then see the performance of the proposed approach in attacking some of the most recent adversarial training defense techniques. 
\end{itemize}

\begin{figure}[tb!]
	\centering
	\begin{subfigure}{.16\textwidth}
		\centering
		\includegraphics[width=1.\textwidth]{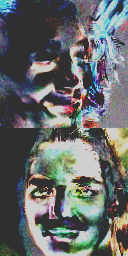}
		\caption*{(a)}
	\end{subfigure}
	\hspace*{-0.46em}
	\begin{subfigure}{.16\textwidth}
		\centering
		\includegraphics[width=1.\textwidth]{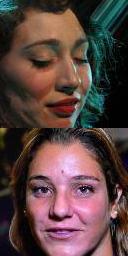}
		\caption*{(b)}
	\end{subfigure}
	\hspace*{-0.46em}
	\begin{subfigure}{.16\textwidth}
		\centering
		\includegraphics[width=1.\textwidth]{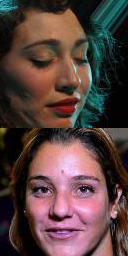}
		\caption*{(c)}
	\end{subfigure}
	\hspace*{-0.46em}
	\begin{subfigure}{.16\textwidth}
		\centering
		\includegraphics[width=1.\textwidth]{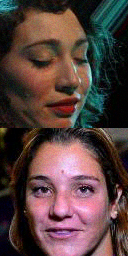}
		\caption*{(d)}
	\end{subfigure}
	\hspace*{-0.46em}
	\begin{subfigure}{.16\textwidth}
		\centering
		\includegraphics[width=1.\textwidth]{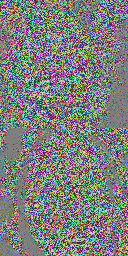}
		\caption*{(e)}
	\end{subfigure}
	\caption{Adversarial perturbations generated by AdvFlow take the structure of the original image into account, resulting in less detectable adversaries compared to $\mathcal{N}$\textsc{Attack}~\citep{li2019nattack} (see Section~\ref{sec:sec:detectability}).
	The classifier is a VGG19~\citep{simonyan2015vgg} trained to detect smiles in CelebA~\citep{liu2015deep} faces.
    (a) AdvFlow magnified difference (b) AdvFlow adversarial example (c) clean image (d) $\mathcal{N}$\textsc{Attack} adversarial example (e) $\mathcal{N}$\textsc{Attack} magnified difference.}
	\label{fig:celeba_example}
\end{figure}

\section{Related Work}\label{sec:related_works}

In this section, we review some of the most \textcolor{edit}{closely} related work to our proposed approach.
For a complete review of (black-box) adversarial attacks, we refer the interested reader to~\citep{yuan2019adversarial, bhambri2019study}.

\paragraph{Black-box Adversarial Attacks.}
In one of the earliest black-box approaches, \citet{chen2017zoo} used the idea of Zeroth Order Optimization and came up with a method called \textit{ZOO}.
In particular, ZOO uses the target neural network queries to build up a zero-order gradient estimator.
Then, it utilizes the estimated gradient to minimize a \textcolor{edit}{Carlini and Wagner} (C\&W) loss~\citep{carlini2017towards} and find an adversarial image.
Later and inspired by~\citep{wierstra2014natural, salimans2017evolution}, \citet{ilyas2018black} tried to estimate the DNN gradient using a normally distributed search density.
In particular, they estimate the gradient of the classifier ${\mathcal{C}(\mathbf{x})}$ with
\begin{equation}\label{eq:QL}\nonumber
\nabla_{\mathbf{x}}\mathcal{C}(\mathbf{x})\approx\mathbb{E}_{\mathcal{N}(\mathbf{z}|\mathbf{x}, \sigma^2 I)}\left[\mathcal{C}(\mathbf{z}) \nabla_{\mathbf{x}}\log\big(\mathcal{N}(\mathbf{z}|\mathbf{x}, \sigma^2 I)\big)\right],
\end{equation}
which only requires querying the black-box model ${\mathcal{C}(\mathbf{x})}$.
Having the DNN gradient estimate, \citet{ilyas2018black} then take a \textit{projected gradient descent (PGD)} step to minimize their objective for generating an adversarial example.
This idea is further developed in the construction of \textit{$\mathcal{N}$\textsc{Attack}}~\citep{li2019nattack}.
Specifically, instead of trying to minimize the adversarial example generation objective directly, they aim to fit a distribution around the clean data so that its realizations are likely to be adversarial (\textcolor{edit}{see} Section~\ref{sec:sec:NES_NATTACK} for more details).
In another piece of work, \citet{ilyas2019prior} observe that the gradients used in adversarial example generation by PGD exhibit a high correlation both in time and across data.
Thus, the number of queries to attack a black-box model can be reduced if one incorporates this prior knowledge about the gradients.
To this end, \citet{ilyas2019prior} uses a bandit-optimization technique to integrate these priors into their attack, resulting in a method called \textit{Bandits \& Priors}.
Finally, \textit{Simple Black-box Attack}~(SimBA)~\citep{guo2019simba} is a straightforward, intuitive approach to construct black-box adversarial examples.
It is first argued that for any particular direction $\mathbf{q}$ and step size~$\epsilon>0$, either $\mathbf{x} - \epsilon \mathbf{q}$ or $\mathbf{x} + \epsilon \mathbf{q}$ is going to decrease the probability of detecting the correct class label of the input image $\mathbf{x}$.
Thus, we are likely to find an adversary by iteratively taking such steps.
The vectors $\mathbf{q}$ are selected from a set of orthonormal candidate vectors $Q$.
\citet{guo2019simba} use Discrete Cosine Transform~(DCT) to construct such a set, exploiting the observation that ``random noise in low-frequency space is more likely to be adversarial''~\citep{guo2019lowfreq}.

\paragraph{Adversarial Attacks using Generative Models.}

There has been some prior work that utilizes the power of generative models (mostly generative adversarial networks~(GAN)) to model adversarial perturbations and attack DNNs~\citep{baluja2018learning, xiao2018generating, wang2019adirect, huang2019black}.
The target of these models is mainly white-box attacks.
They require training of their parameters to produce adversarial perturbations using a cost function that involves taking the gradient of a target network.
To adapt themselves to black-box settings, they try to replace this target network with either a distilled version of it~\citep{xiao2018generating}, or a substitute source model~\cite{huang2019black}.
However, as we will see in Section~\ref{sec:proposed_method}, the flow-based part of our model is only pre-trained on some clean training data using the maximum likelihood objective of Eq.~\eqref{eq:ML}.
Thus, AdvFlow can be adapted to any target classifier of the same dataset, without the need to train it again.
Moreover, while prior work is mainly concerned with generating the adversarial perturbations (for example~\citep{tu2019autozoom}), here we use the normalizing flows output as the adversarial example directly.
In this sense, our work is more similar to~\cite{song2018constructing} that generates unrestricted adversarial examples using GANs in a white-box setting, and falls under functional adversarial attacks~\citep{laidlaw2019functional}.
However, besides being black-box, in AdvFlow we restrict the output to be in the vicinity of the original image.

\section{Proposed Method}\label{sec:proposed_method}

In this section, we propose our attack method.
First, we define the problem of black-box adversarial attacks formally.
\textcolor{edit}{Next, we go over normalizing flows and see how we can train a flow-based model.}
Then, we review the idea of Natural Evolution Strategies~(NES)~\citep{wierstra2008nes, wierstra2014natural} and $\mathcal{N}$\textsc{Attack}~\citep{li2019nattack}.
Afterward, we show how normalizing flows can be mixed with NES in the context of black-box adversarial attacks, resulting in a method we call AdvFlow.
Finally, we prove a lemma about the nature of the perturbations generated by the proposed approach and show that $\mathcal{N}$\textsc{Attack} cannot produce the adversarial perturbations generated by AdvFlow.
Our results in Section \ref{sec:experimental_results} support this lemma.
There, we see that AdvFlow can generate adversarial examples that are less detectable than the ones generated by $\mathcal{N}$\textsc{Attack}~\citep{li2019nattack} due to its perturbation structure.

\subsection{Problem Statement}\label{sec:sec:problem_statement}

Let ${\mathcal{C}(\cdot): \mathcal{X}^{d} \rightarrow \mathcal{P}^{k}}$ denote a DNN classifier.
Assume that the classifier \textcolor{edit}{takes} a ${d}$-dimensional \textcolor{edit}{input} ${\mathbf{x} \in \mathcal{X}^{d}}$, and outputs a vector ${\mathbf{p} \in \mathcal{P}^{k}}$.
Each element of the vector ${\mathbf{p}}$ indicates the probability of the input belonging to one of the ${k}$ classes that the classifier is trying to distinguish.
Furthermore, \textcolor{edit}{let} ${y}$ denote the correct class label of the data.
In other words, if the ${y}$-th element of the classifier output ${\mathbf{p}}$ is larger than the rest, then the input has been correctly classified.
Finally, let the well-known Carlini and Wagner (C\&W) loss~\citep{carlini2017towards} be defined as\footnote{
	Note that although we are defining our objective function $\mathcal{L}(\mathbf{x}')$ for un-targeted adversarial attacks, it can be easily modified to targeted attacks. To this end, it suffices to replace ${\max_{c \neq y} \log \mathcal{C}(\mathbf{x}')_{c}}$ in Eq. \eqref{eq:CW_loss} with ${\log \mathcal{C}(\mathbf{x}')_{t}}$, where ${t}$ shows the target class output. In this paper, we only consider un-targeted attacks.}
\begin{equation}\label{eq:CW_loss}
	\mathcal{L}(\mathbf{x}')=\max\big(0, \log \mathcal{C}(\mathbf{x}')_{y} - \max_{c \neq y} \log \mathcal{C}(\mathbf{x}')_{c}\big),	
\end{equation}  
where ${\mathcal{C}(\mathbf{x}')_{y}}$ indicates the ${y}$-th element of the classifier output.
In \textcolor{edit}{the} C\&W objective, we always have ${\mathcal{L}(\mathbf{x}') \geq 0}$.
The minimum occurs when ${\mathcal{C}(\mathbf{x}')_{y} \leq \max_{c \neq y} \mathcal{C}(\mathbf{x}')_{c}}$, which is an indication that our classifier has been fooled.
Thus, finding an adversarial example for the input data ${\mathbf{x}}$ can be written as~\citep{li2019nattack}:
\begin{equation}\label{eq:adv_example}
{\mathbf{x}}_{adv}=\argmin_{\mathbf{x}' \in \mathcal{S}(\mathbf{x})} \mathcal{L}(\mathbf{x}').
\end{equation}
Here, ${\mathcal{S}(\mathbf{x})}$ denotes a set that contains similar data to ${\mathbf{x}}$ in an appropriate manner.
For example, it is common to define
\begin{equation}\label{eq:set}
	\mathcal{S}(\mathbf{x}) = \left\{\mathbf{x}' \in \mathcal{X}^{d}~\big\rvert~\norm{\mathbf{x}' - \mathbf{x}}_{p}\leq \epsilon_{\max} \right\}
\end{equation}
for image data.
In this paper, we define ${\mathcal{S}(\mathbf{x})}$ as in Eq.~\eqref{eq:set} since we deal with the application of our attack on images.

\subsection{\textcolor{edit}{Flow-based Modeling}}\label{sec:background}


\paragraph{Normalizing Flows.}\label{sec:sec:normalizing_flows}

Normalizing flows~(NF)~\citep{tabak2013family, dinh2015nice, rezende2015variational} are a family of generative models that aim at modeling the probability distribution of a given dataset.
To this end, they make use of the well-known \textit{change of variables} formula.
In particular, let ${\mathbf{Z} \in \mathbb{R}^{d}}$ denote a random vector with a straightforward, known distribution such as uniform or standard normal.
The \textit{change of variables} formula states that if we apply an invertible and differentiable transformation ${\mathbf{f}\left(\cdot\right): \mathbb{R}^{d} \rightarrow \mathbb{R}^{d}}$ on ${\mathbf{Z}}$ to obtain a new random vector ${\mathbf{X} \in \mathbb{R}^{d}}$, the relationship between their corresponding distributions can be written as:
\begin{equation}\label{eq:change_of_variable}
p(\mathbf{x}) = p(\mathbf{z})\left|\mathrm{det}\Big(\dfrac{\partial \mathbf{f}}{\partial \mathbf{z}}\Big)\right|^{-1}.
\end{equation} 
Here, ${p(\mathbf{x})}$ and ${p(\mathbf{z})}$ denote the probability distributions of ${\mathbf{X}}$ and ${\mathbf{Z}}$, respectively.
Moreover, the multiplicative term on the right-hand side is called the absolute value of the Jacobian determinant.
This term accounts for the changes in the volume of ${\mathbf{Z}}$ due to applying the transformation ${\mathbf{f}(\cdot)}$.
Flow-based methods model the transformation ${\mathbf{f}(\cdot)}$ using stacked layers of invertible neural networks~(INN).
They then apply this transformation on a \textit{base random vector} ${\mathbf{Z}}$ to model the data density.
In this paper, we assume that the base random vector has a standard normal distribution.

\paragraph{Maximum Likelihood Estimation.}

To fit the parameters of INNs to the i.i.d.~data observations ${\mathbf{x}_1,\mathbf{x}_2, \dots, \mathbf{x}_n}$, NFs use the following maximum likelihood objective~\citep{rezende2015variational}:
\begin{equation}\label{eq:ML}
{\boldsymbol{\theta}}^{*}=\argmax_{\boldsymbol{\theta}} \dfrac{1}{n} \sum_{i=1}^{n} \log p_{\boldsymbol{\theta}}\left(\mathbf{x}_i\right).
\end{equation}
Here, ${\boldsymbol{\theta}}$ denotes the parameter set of the model and $p_{\boldsymbol{\theta}}$ is the density defined in Eq.~\eqref{eq:change_of_variable}. 
Note that INNs should be modeled such that they allow for efficient computation of their Jacobian determinant.
Otherwise, this issue can impose a severe hindrance in the application of NFs to high-dimensional data given the cubic complexity of determinant computation.
For a more detailed review of normalizing flows, we refer the interested reader to~\citep{papamakarios2019normalizing, kobyzev2019nfs} and the references within.

\paragraph{Training the Flow-based Models.}

We assume that we have access to some training data \textcolor{edit}{of the same domain} to pre-train our flow-based model.
However, at the time of adversarial example generation, we use unseen test data.
Note that while in our experiments we use the same training data as the classifier itself, we are not obliged to do so.
We observed that our results remain almost the same even if we separate the flow-based model training data from what the classifier is trained on.
We argue that using the same training data is valid since our flow-based models do not extract discriminative features, and they are only trained on clean data.
This is in contrast to other generative approaches used for adversarial example generation~\citep{baluja2018learning, xiao2018generating, wang2019adirect, huang2019black}.
\textcolor{edit2}{In Appendix~\ref{sec:ap:training_data}, we empirically show that not only is this statement accurate, but we can get almost the same performance by using similar datasets to the true one.}

\subsection{Natural Evolution Strategies and ${\mathcal{N}}$\textsc{Attack}}\label{sec:sec:NES_NATTACK}

\paragraph{Natural Evolution Strategies~(NES).}

Our goal is to solve the optimization problem of Eq.~\eqref{eq:adv_example} in a black-box setting, meaning that we only have access to the inputs and outputs of the classifier ${\mathcal{C}(\cdot)}$.
Natural Evolution Strategies~(NES) use the idea of \textit{search gradients} to optimize Eq.~\eqref{eq:adv_example}~\citep{wierstra2008nes, wierstra2014natural}.
To this end, a so-called \textit{search distribution} is first defined, and then the expected value of the original objective is optimized under this distribution.

In particular, let ${p(\mathbf{x'}|\boldsymbol{\psi})}$ denote the search distribution with parameters ${\boldsymbol{\psi}}$.
Then, in NES we aim to minimize
\begin{equation}\label{eq:nes}
	J(\boldsymbol{\psi})=\mathbb{E}_{p(\mathbf{x'}|\boldsymbol{\psi})}\left[\mathcal{L}(\mathbf{x}')\right]
\end{equation}  
over ${\boldsymbol{\psi}}$ as a surrogate for ${\mathcal{L}(\mathbf{x}')}$~\citep{wierstra2014natural}.
To minimize Eq.~\eqref{eq:nes} using gradient descent, one needs to compute the Jacobian of ${J(\boldsymbol{\psi})}$ with respect to ${\boldsymbol{\psi}}$.
To this end, NES makes use of the ``log-likelihood trick''~\citep{wierstra2014natural} (see Appendix~\ref{sec:ap:ll_trick})
\begin{align}\label{eq:jac_J}
	\nabla_{\boldsymbol{\psi}}J(\boldsymbol{\psi}) = \mathbb{E}_{p(\mathbf{x'}|\boldsymbol{\psi})}\left[\mathcal{L}(\mathbf{x}') \nabla_{\boldsymbol{\psi}}\log\big(p(\mathbf{x'}|\boldsymbol{\psi})\big)\right].
\end{align}
Finally, the parameters of the model are updated using a gradient descent step with learning rate ${\alpha}$:\footnote{Note that this update procedure is not what NES precisely stands for.
	It is rather a \textit{canonical gradient search algorithm} as is called by~\citet{wierstra2014natural}, which only makes use of a \textit{vanilla} gradient~\citep{wierstra2008nes} for evolution strategies.
	In fact, the \textit{natural} term in \textit{natural evolution strategies} represents an update of the form ${\boldsymbol{\psi} \leftarrow \boldsymbol{\psi} - \alpha \tilde{\nabla}_{\boldsymbol{\psi}}J(\boldsymbol{\psi})}$, where  ${\tilde{\nabla}_{\boldsymbol{\psi}}J(\boldsymbol{\psi}) = \mathbf{F}^{-1}\nabla_{\boldsymbol{\psi}}J(\boldsymbol{\psi})}$ is called the \textit{natural gradient}.
	Here, the matrix ${\mathbf{F}}$ is the \textit{Fischer information matrix} of the search distribution ${p(\mathbf{x'}|\boldsymbol{\psi})}$.
	However, since NES in the adversarial learning literature~\citep{ilyas2018black, ilyas2019prior, li2019nattack} points to Eq.~\eqref{eq:nes_update}, we use the same convention here.} 
\begin{equation}\label{eq:nes_update}
	\boldsymbol{\psi} \leftarrow \boldsymbol{\psi} - \alpha \nabla_{\boldsymbol{\psi}}J(\boldsymbol{\psi}).
\end{equation}

\paragraph{${\mathcal{N}}$\textsc{Attack}.}

To find an adversarial example for an input ${\mathbf{x}}$, $\mathcal{N}$\textsc{Attack}~\citep{li2019nattack} tries to find a distribution ${p(\mathbf{x'}|\boldsymbol{\psi})}$ over the set of legitimate adversaries ${\mathcal{S}(\mathbf{x})}$ in Eq.~\eqref{eq:set}.
Therefore, it models $\mathbf{x}'$ as
\begin{equation}\label{eq:NATTACK}
	\mathbf{x}'=\mathrm{proj}_{\mathcal{S}}\big(\tfrac{1}{2}(\tanh(\mathbf{z}) + 1)\big),
\end{equation}
where $\mathbf{z} \sim \mathcal{N}(\mathbf{z}|\boldsymbol{\mu}, \sigma^2 I)$ is an isometric normal distribution with mean~$\boldsymbol{\mu}$ and standard deviation~$\sigma$.
Moreover, $\mathrm{proj}_{\mathcal{S}}(\cdot)$ projects its input back into the set of legitimate adversaries ${\mathcal{S}(\mathbf{x})}$.
\citet{li2019nattack} define their model parameters as ${\boldsymbol{\psi}=\{\boldsymbol{\mu}, \sigma\}}$. 
Then, they find ${\sigma}$ using grid-search and ${\boldsymbol{\mu}}$ by the update rule of Eq.~\eqref{eq:nes_update} exploiting NES.

\subsection{\textcolor{edit}{Our Approach:} AdvFlow}\label{sec:sec:AdvFlow}

Recently, there has been some effort to detect adversarial examples from clean data.
The primary assumption of these methods is often that the adversaries come from a different distribution than the data itself; for instance, see \citep{ma2018characterizing, lee2018simple, zisselman2020resflow}.
Thus, to come up with more powerful adversarial attacks, it seems reasonable to construct adversaries that have a similar distribution to the clean data.
To this end, we propose \textit{AdvFlow}: a black-box adversarial attack that seeks to build inconspicuous adversaries by leveraging the power of normalizing flows~(NF) in exact likelihood modeling of the data~\citep{dinh2016density}.

Let ${\mathbf{f}(\cdot)}$ denote a pre-trained, invertible and differentiable NF model on the clean training data.
To reach our goal of decreasing the attack's detectability, we propose using this pre-trained, fixed NF transformation to model the adversaries.
In an analogy with Eq.~\eqref{eq:NATTACK}, we assume that our adversarial example comes from a distribution that is modeled by
\begin{equation}\label{eq:advflow_dist}
	\mathbf{x}'=\mathrm{proj}_{\mathcal{S}}\big(\mathbf{f}(\mathbf{z})\big),\qquad\mathbf{z}\sim\mathcal{N}(\mathbf{z}|\boldsymbol{\mu}, \sigma^2 I)
\end{equation}
where $\mathrm{proj}_{\mathcal{S}}(\cdot)$ is a projection rule that keeps the generated examples in the set of legitimate adversaries ${\mathcal{S}(\mathbf{x})}$.
By the change of variables formula from Eq.~\eqref{eq:change_of_variable}, we know that ${\mathbf{f}(\mathbf{z})}$ in Eq.~\eqref{eq:advflow_dist} is distributed similar to the clean data distribution.
The only difference is that the base density is transformed by an affine mapping, i.e., from $\mathcal{N}(\mathbf{z}|\mathbf{0}, I)$ to $\mathcal{N}(\mathbf{z}|\boldsymbol{\mu}, \sigma^2 I)$.
This small adjustment can result in an overall distribution for which the generated samples are likely to be adversarial.

Putting the rule of the \textit{lazy statistician}~\citep{wasserman2013all} together with our attack definition in Eq.~\eqref{eq:advflow_dist}, we can write down the objective function of Eq.~\eqref{eq:nes} as
\begin{equation}\label{eq:advflow_cost}
	J(\boldsymbol{\mu}, \sigma)=\mathbb{E}_{p(\mathbf{x'}|\boldsymbol{\mu}, \sigma)}\left[\mathcal{L}(\mathbf{x}')\right]
					    =\mathbb{E}_{\mathcal{N}(\mathbf{z}|\boldsymbol{\mu}, \sigma^2 I)}\left[\mathcal{L}\bigg(\mathrm{proj}_{\mathcal{S}}\big(\mathbf{f}(\mathbf{z})\big)\bigg)\right].
\end{equation}
As in $\mathcal{N}$\textsc{Attack}~\citep{li2019nattack}, we will consider ${\sigma}$ to be a hyperparameter.\footnote{Indeed, we can also optimize $\sigma$ alongside ${\boldsymbol{\mu}}$ to enhance the attack strength. However, since $\mathcal{N}$\textsc{Attack}~\citep{li2019nattack} only optimizes ${\boldsymbol{\mu}}$, we also stick with the same setting.}
Thus, we are only required to minimize ${J(\boldsymbol{\mu}, \sigma)}$ with respect to ${\boldsymbol{\mu}}$.
Using the ``log-likelihood trick'' of Eq.~\eqref{eq:jac_J}, we can derive the Jacobian of ${J(\boldsymbol{\mu}, \sigma)}$ as
\begin{equation}\label{eq:jac_advflow}
	\nabla_{\boldsymbol{\mu}}J(\boldsymbol{\mu}, \sigma) = \mathbb{E}_{\mathcal{N}(\mathbf{z}|\boldsymbol{\mu}, \sigma^2 I)}\left[\mathcal{L}\bigg(\mathrm{proj}_{\mathcal{S}}\big(\mathbf{f}(\mathbf{z})\big)\bigg)	\nabla_{\boldsymbol{\mu}}\log \mathcal{N}(\mathbf{z}|\boldsymbol{\mu}, \sigma^2 I)\right].
\end{equation}
This expectation can then be estimated by sampling from a distribution ${\mathcal{N}(\mathbf{z}|\boldsymbol{\mu}, \sigma^2 I)}$ and forming their sample average.
Next, we update the parameter ${\boldsymbol{\mu}}$ by performing a gradient descent step
\begin{equation}\label{eq:advflow_update}
\boldsymbol{\mu} \leftarrow \boldsymbol{\mu} - \alpha \nabla_{\boldsymbol{\mu}}J(\boldsymbol{\mu}, \sigma).
\end{equation}
In the end, we generate our adversarial example by sampling from Eq.~\eqref{eq:advflow_dist}.

\paragraph{Practical Considerations.}

To help our model to start its search from an appropriate point, we first transform the clean data to its latent space representation.
Then, we aim to find a small additive latent space perturbation in the form of a normal distribution.
Moreover, as suggested in~\citep{li2019nattack}, instead of working with ${\mathcal{L}\big(\mathrm{proj}_{\mathcal{S}}\big(\mathbf{f}(\mathbf{z})\big)\big)}$ directly, we normalize them so that they have zero mean and unit variance to help AdvFlows convergence faster.
Finally, among different flow-based models, it is preferable to choose those that have a straightforward inverse, such as~\citep{dinh2016density, kingma2018glow, durkan2019neural, dolatabadi2020lrs}.
This way, we can efficiently go back and forth between the original data and their base distribution representation. 
Algorithm~\ref{alg:advflow} in Appendix~\ref{sec:ap:advflow} summarizes our black-box attack method.
\textcolor{edit2}{Other variations of AdvFlow can also be found in Appendix~\ref{sec:ap:algs}.
These variations include our solution to high-resolution images and investigation of un-trained AdvFlow.}

\subsubsection{AdvFlow Interpretation}

We can interpret AdvFlow from two different perspectives.

First, there exists a probabilistic view: we use the flow-based model transformation of the original data, and then try to adjust it using an affine transformation on its base distribution.
The amount of this change is determined by our urge to minimize the C\&W cost of Eq.~\eqref{eq:CW_loss} such that we get the minimum value on average.
Thus, if it is successful, we will end up having a distribution whose samples are likely to be adversarial.
Meanwhile, since this distribution is initialized with that of clean data, it resembles the clean data density closely.

Second, we can think of AdvFlow as a search over the latent space of flow-based models.
We map the clean image to the latent space and then try to search in the vicinity of that point to find an adversarial example.
This search is ruled by the objective of Eq.~\eqref{eq:advflow_cost}.
Since our approach exploits a fully invertible, pre-trained flow-based model, we would expect to get an adversarial example that resembles the original image in the structure and look less noisy.
This adjustment gives our model the flexibility to produce perturbations that take the structure of clean data into account (see Figure~\ref{fig:celeba_example}).

\subsubsection{Uniqueness of AdvFlow Perturbations}\label{sec:sec:advflow_lemma}

In this section, we present a lemma about the nature of perturbations generated by AdvFlow and $\mathcal{N}$\textsc{Attack}~\citep{li2019nattack}.
As a direct result of this lemma, we can easily deduce that the adversaries generated by AdvFlow can be approximated by a normal distribution whose components are dependent.
However, this is not the case for $\mathcal{N}$\textsc{Attack} as they always have independent elements.
In this sense, we can then rigorously conclude that the AdvFlow perturbations are unique and cannot be generated by $\mathcal{N}$\textsc{Attack}.
Thus, we cannot expect $\mathcal{N}$\textsc{Attack} to be able to generate perturbations that look like the original data. 
This result can also be generalized to many other attack methods as they often use an additive, independent perturbation. Proofs can be found in Appendix~\ref{sec:ap:proofs}.

\begin{lemma}\label{lemma}
	Let ${\mathbf{f}(\mathbf{x})}$ be an invertible, differentiable function.
	For a small perturbation ${\boldsymbol{\delta}_z}$ we have
	\begin{equation}\label{eq:lemma}\nonumber
		\boldsymbol{\delta} = \mathbf{f}\big(\mathbf{f}^{-1}(\mathbf{x}) + \boldsymbol{\delta}_z\big) - \mathbf{x}\approx\big(\nabla\mathbf{f}^{-1}(\mathbf{x})\big)^{-1}\boldsymbol{\delta}_z.
	\end{equation}
\end{lemma}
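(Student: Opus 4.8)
The plan is to reduce the statement to a single first-order Taylor expansion combined with the inverse function theorem. First I would introduce the latent representation $\mathbf{z} = \mathbf{f}^{-1}(\mathbf{x})$, so that by invertibility $\mathbf{x} = \mathbf{f}(\mathbf{z})$ and the quantity of interest becomes $\boldsymbol{\delta} = \mathbf{f}(\mathbf{z} + \boldsymbol{\delta}_z) - \mathbf{f}(\mathbf{z})$, i.e., the increment of $\mathbf{f}$ produced by the small latent displacement $\boldsymbol{\delta}_z$.

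Next I would linearize $\mathbf{f}$ about $\mathbf{z}$. Since $\mathbf{f}$ is differentiable, a first-order Taylor expansion gives $\mathbf{f}(\mathbf{z} + \boldsymbol{\delta}_z) = \mathbf{f}(\mathbf{z}) + \nabla\mathbf{f}(\mathbf{z})\,\boldsymbol{\delta}_z + o(\norm{\boldsymbol{\delta}_z})$, so that $\boldsymbol{\delta} = \nabla\mathbf{f}(\mathbf{z})\,\boldsymbol{\delta}_z + o(\norm{\boldsymbol{\delta}_z})$. Dropping the higher-order remainder (which is what the symbol $\approx$ encodes for small $\boldsymbol{\delta}_z$) yields $\boldsymbol{\delta} \approx \nabla\mathbf{f}\big(\mathbf{f}^{-1}(\mathbf{x})\big)\,\boldsymbol{\delta}_z$.

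Finally I would convert the Jacobian of $\mathbf{f}$ into the Jacobian of $\mathbf{f}^{-1}$ using the inverse function theorem. Because $\mathbf{f}$ is invertible and differentiable with nonsingular Jacobian, differentiating the identity $\mathbf{f}^{-1}(\mathbf{f}(\mathbf{z})) = \mathbf{z}$ and applying the chain rule gives $\nabla\mathbf{f}^{-1}(\mathbf{x})\,\nabla\mathbf{f}(\mathbf{z}) = I$, hence $\nabla\mathbf{f}\big(\mathbf{f}^{-1}(\mathbf{x})\big) = \big(\nabla\mathbf{f}^{-1}(\mathbf{x})\big)^{-1}$. Substituting this back produces exactly the claimed identity $\boldsymbol{\delta} \approx \big(\nabla\mathbf{f}^{-1}(\mathbf{x})\big)^{-1}\boldsymbol{\delta}_z$.

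The calculation itself is routine; the only point requiring care is the status of the approximation. The hard part — to the extent there is one — is being precise about what $\approx$ means: the identity is exact only at first order, with an $o(\norm{\boldsymbol{\delta}_z})$ error, so the result should be read as the leading-order behavior valid for small latent perturbations $\boldsymbol{\delta}_z$, which is precisely the regime AdvFlow operates in. I would also flag the implicit regularity assumption that $\nabla\mathbf{f}^{-1}(\mathbf{x})$ is invertible, which is guaranteed by the inverse function theorem under the stated invertibility and differentiability of $\mathbf{f}$.
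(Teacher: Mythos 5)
Your proof is correct and follows essentially the same route as the paper's: a first-order Taylor expansion of $\mathbf{f}$ about $\mathbf{z}=\mathbf{f}^{-1}(\mathbf{x})$, followed by the inverse function theorem to rewrite $\nabla\mathbf{f}\big(\mathbf{f}^{-1}(\mathbf{x})\big)$ as $\big(\nabla\mathbf{f}^{-1}(\mathbf{x})\big)^{-1}$. Your added remarks making the $o(\norm{\boldsymbol{\delta}_z})$ remainder and the Jacobian-invertibility assumption explicit are a modest refinement of the paper's argument, not a different one.
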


\begin{corollary}\label{corollary}
	The adversarial perturbations generated by AdvFlow have dependent components.
	In contrast, ${\mathcal{N}}$\textsc{Attack} perturbation components are independent.
\end{corollary}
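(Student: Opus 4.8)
The plan is to treat the two claims separately, using the Lemma only for the AdvFlow half. For $\mathcal{N}$\textsc{Attack} I would argue that the statement is in fact exact and needs no linearization: the squashing map $g(\mathbf{z})=\tfrac{1}{2}(\tanh(\mathbf{z})+1)$ in Eq.~\eqref{eq:NATTACK} acts coordinatewise, so $g(\mathbf{z})_i=\tfrac{1}{2}(\tanh(z_i)+1)$ is a function of $z_i$ alone. Since $\mathbf{z}\sim\mathcal{N}(\mathbf{z}\,|\,\boldsymbol{\mu},\sigma^2 I)$ has independent coordinates, the coordinates of $g(\mathbf{z})$ are independent, and subtracting the fixed clean image $\mathbf{x}$ is a deterministic per-coordinate shift that preserves independence. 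Working in the interior of $\mathcal{S}(\mathbf{x})$, where $\mathrm{proj}_{\mathcal{S}}$ is locally the identity (and where, in the $\ell_\infty$ case, any active clipping is itself coordinatewise), the perturbation $\boldsymbol{\delta}=\mathbf{x}'-\mathbf{x}$ therefore has independent components.

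For AdvFlow I would first put Eq.~\eqref{eq:advflow_dist} into the form the Lemma requires. Following the practical initialization, $\boldsymbol{\mu}$ starts at the latent code $\mathbf{f}^{-1}(\mathbf{x})$, so setting $\boldsymbol{\delta}_z=\mathbf{z}-\mathbf{f}^{-1}(\mathbf{x})$ gives $\boldsymbol{\delta}_z\sim\mathcal{N}\big(\boldsymbol{\delta}_z\,|\,\boldsymbol{\mu}-\mathbf{f}^{-1}(\mathbf{x}),\,\sigma^2 I\big)$, a Gaussian with diagonal covariance and small variance, consistent with the small-perturbation hypothesis. The Lemma then yields $\boldsymbol{\delta}\approx A\,\boldsymbol{\delta}_z$ with $A=\big(\nabla\mathbf{f}^{-1}(\mathbf{x})\big)^{-1}=\nabla\mathbf{f}\big(\mathbf{f}^{-1}(\mathbf{x})\big)$, the forward Jacobian of the flow at the latent code. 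Since an affine image of a Gaussian is again Gaussian, to first order $\boldsymbol{\delta}\approx\mathcal{N}\big(A(\boldsymbol{\mu}-\mathbf{f}^{-1}(\mathbf{x})),\,\sigma^2 A A^{\top}\big)$.

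The final step is to read off dependence from this covariance. For a Gaussian vector, independence of the components is equivalent to the covariance $\sigma^2 A A^{\top}$ being diagonal, i.e.\ to the rows of $A$ being mutually orthogonal. This is precisely the structural contrast with $\mathcal{N}$\textsc{Attack}: there the analogous $A$ is the diagonal Jacobian of a coordinatewise map, so $A A^{\top}$ is automatically diagonal; for AdvFlow, $A=\nabla\mathbf{f}$ is the Jacobian of a deep invertible network (block-triangular for coupling layers), whose rows are not orthogonal, so $A A^{\top}$ carries nonzero off-diagonal entries and the components of $\boldsymbol{\delta}$ are dependent.

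I expect the genuine difficulty to sit in this last step, namely in justifying that $\nabla\mathbf{f}$ is not a permutation-and-scaling of an orthogonal matrix. This is a structural, genericity-type assertion about the trained flow rather than a calculation: I would argue that any nontrivial stack of coupling layers produces a Jacobian with genuine off-diagonal coupling, so orthogonality of its rows fails except on a negligible set of parameters. The honest reading is therefore that AdvFlow perturbations are \emph{generically} dependent, and the cleanest presentation is to exhibit $A A^{\top}$ explicitly for a single coupling block, observe its off-diagonal structure, and note that the same mechanism propagates through the composed layers.
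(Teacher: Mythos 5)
Your proposal is correct and, for the AdvFlow half, follows the same route as the paper: apply Lemma~\ref{lemma}, observe that an affine image of an isometric Gaussian is Gaussian, and read off dependence from the covariance. It differs in two places, both worth noting. First, for $\mathcal{N}$\textsc{Attack} the paper also goes through the lemma, linearizing ${\mathbf{f}(\mathbf{z})=\tfrac{1}{2}(\tanh(\mathbf{z})+1)}$ and noting that its inverse Jacobian is the diagonal matrix $\mathrm{diag}\big(2\mathbf{x}\odot(1-\mathbf{x})\big)^{-1}$, so the linearized $\boldsymbol{\delta}$ has diagonal covariance; your exact coordinatewise argument (independent coordinates pushed through a coordinatewise map remain independent, and coordinatewise clipping preserves this) is strictly stronger since it needs no small-perturbation approximation. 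Second, and more substantively, at the final step the paper infers dependence from $\nabla\mathbf{f}^{-1}(\mathbf{x})$ being \emph{non-diagonal} (because the Real~NVP Jacobian is a product of triangular matrices), and stops there. As you correctly point out, that inference has a gap: non-diagonality of $A=\big(\nabla\mathbf{f}^{-1}(\mathbf{x})\big)^{-1}$ does not by itself force $AA^{\top}$ off the diagonal — a rotation gives $AA^{\top}=I$ and hence independent components. Your criterion (covariance $\sigma^2 AA^{\top}$ diagonal iff the rows of $A$ are mutually orthogonal) is the right one, and your honest framing — that excluding row-orthogonality of $\nabla\mathbf{f}$ at the latent code is a genericity assertion about the trained flow rather than something the triangular structure alone delivers — identifies precisely the assumption the paper's proof glosses over. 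So your version buys exactness on the $\mathcal{N}$\textsc{Attack} side and rigor at the dependence step, at the modest cost of having to invoke (or verify on a single coupling block, as you suggest) a genericity claim that the paper leaves implicit.
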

 
\section{Experimental Results}\label{sec:experimental_results}

In this section, we present our experimental results.
First, we see how the adversarial examples generated by the proposed model can successfully mislead adversarial example detectors.
Then, we show the attack success rate and the number of queries required to attack both vanilla and defended models.
Finally, we examine the transferability of the generated attacks between defended classifiers.
To see the details of the experiments, please refer to Appendix~\ref{sec:ap:imp_det}.
Also, more simulation results can be found in Appendices~\ref{sec:ap:ext_sim_res} and~\ref{sec:ap:algs}.

For each dataset, we pre-train a flow-based model and fix it across the experiments.
To this end, we use a modified version of Real~NVP~\citep{dinh2016density} as introduced in~\citep{ardizzone2019guided}, the details of which can be found in Appendix~\ref{sec:ap:nf_arch}.
Once trained, we then try to attack target classifiers in a black-box setting using AdvFlow~(Algorithm~\ref{alg:advflow}).

\subsection{Detectability}\label{sec:sec:detectability}

One approach to defend pre-trained classifiers is to employ adversarial example detectors.
This way, a detector is trained and put on top of the classifier.
Before feeding inputs to the un-defended classifier, every input has to be checked whether it is adversarial or not.
One common assumption among such detectors is that the adversaries come from a different distribution than the clean data~\citep{ma2018characterizing, lee2018simple}.
Thus, the performance of these detectors seems to be a suitable measure to quantify the success of our model in generating adversarial examples that have the same distribution as the original data.
To this end, we choose LID~\citep{ma2018characterizing}, Mahalanobis~\citep{lee2018simple}, and Res-Flow~\cite{zisselman2020resflow} adversarial attack detectors to assess the performance of the proposed approach. We compare our results with $\mathcal{N}$\textsc{Attack}~\citep{li2019nattack} \textit{that also approaches the black-box adversarial attack from a distributional perspective} for a fair comparison.
As an ablation study, we also consider the un-trained version of AdvFlows where the weights of the NF models are set randomly.
This way, we can observe the effect of the clean data distribution in misleading adversarial example detectors more precisely. 
We first generate a set of adversarial examples alongside some noisy ones using the test set.
Then, we use $10\%$ of the adversarial, noisy, and clean image data to train adversarial attack detectors.
Details of our experiments in this section can be found in Appendix~\ref{sec:ap:mah_det}.

We report the area under the receiver operating characteristic curve~(AUROC) and the detection accuracy for each case in Table~\ref{tab:Detectability}.
As seen, in almost all the cases the selected adversarial detectors struggle to detect the attacks generated by AdvFlow in contrast to $\mathcal{N}$\textsc{Attack}.
These results support our statement earlier about the distribution of the attacks being more similar to that of data, hence the failure of adversarial example detectors.
Also, we see that pre-training the AdvFlow using clean data is crucial in fooling adversarial example detectors.   

Finally, Figure~\ref{fig:latent_separation} shows the relative change in the base distribution of the flow-based model for adversarial examples of Table~\ref{tab:Detectability}.
Interestingly, we see that AdvFlow adversaries are distinctively closer to the clean data compared to $\mathcal{N}$\textsc{Attack}~\cite{li2019nattack}.
These results highlight the need to reconsider the underlying assumption that adversaries come from a different distribution than the clean data.
Also, it can motivate training classifiers that learn data distributions, as our results reveal this is not currently the case.
\begin{table*}[tb!]
	\caption{Area under the receiver operating characteristic curve~(AUROC) and accuracy of detecting adversarial examples generated by $\mathcal{N}$\textsc{Attack}~\citep{li2019nattack} and AdvFlow (un. for un-trained and tr. for pre-trained NF) using LID~\citep{ma2018characterizing}, Mahalanobis~\citep{lee2018simple}, and Res-Flow~\cite{zisselman2020resflow} adversarial attack detectors.
	In each case, the classifier has a ResNet-34~\cite{he2016deep} architecture.}
	\label{tab:Detectability}
	
	\begin{center}
		\begin{small}
			\resizebox{\columnwidth}{!}{
				\begin{tabular}{cccccccc}
					\toprule
					\parbox[t]{2mm}{\multirow{2}{*}{\rotatebox[origin=c]{90}{Data}}}
					&Metric                                      & \multicolumn{3}{c}{AUROC(\%) \textuparrow}   & \multicolumn{3}{c}{Detection Acc.(\%) \textuparrow}\\
					\cmidrule(lr){3-5}\cmidrule(lr){6-8}
					&Method                                      & $\mathcal{N}$\textsc{Attack}    & AdvFlow (un.)  & AdvFlow (tr.)   & $\mathcal{N}$\textsc{Attack}    & AdvFlow (un.)  & AdvFlow (tr.)\\
					\midrule
					\parbox[t]{2mm}{\multirow{3}{*}{\rotatebox[origin=c]{90}{\scriptsize{CIFAR-10}}}}
					&LID~\citep{ma2018characterizing}            & $78.69$ & $84.39$ & $\mathbf{57.59}$    & $72.12$ & $77.11$ & $\mathbf{55.74}$\\
					&Mahalanobis~\citep{lee2018simple}           & $97.95$ & $99.50$ & $\mathbf{66.85}$    & $95.59$ & $97.46$ & $\mathbf{62.21}$\\
					&Res-Flow~\citep{zisselman2020resflow}       & $97.90$ & $99.40$ & $\mathbf{67.03}$    & $94.55$ & $97.21$ & $\mathbf{62.60}$\\
					\midrule
					\parbox[t]{2mm}{\multirow{3}{*}{\rotatebox[origin=c]{90}{\footnotesize{SVHN}}}}
					&LID~\citep{ma2018characterizing}            & $\mathbf{57.70}$  & $58.92$ &$61.11$    & $\mathbf{55.60}$  & $56.43$ & $58.21$\\
					&Mahalanobis~\citep{lee2018simple}           & $73.17$ & $74.67$ & $\mathbf{64.72}$    & $68.20$ & $69.46$ & $\mathbf{60.88}$\\
					&Res-Flow~\citep{zisselman2020resflow}       & $69.70$ & $74.86$ & $\mathbf{64.68}$    & $64.53$ & $68.41$ & $\mathbf{61.13}$\\
					\bottomrule
				\end{tabular}}
		\end{small}
	\end{center}
	
\end{table*}

\begin{figure}[tb!]
	\centering
	\begin{subfigure}{.44\textwidth}
		\centering
		\includegraphics[width=1.0\textwidth]{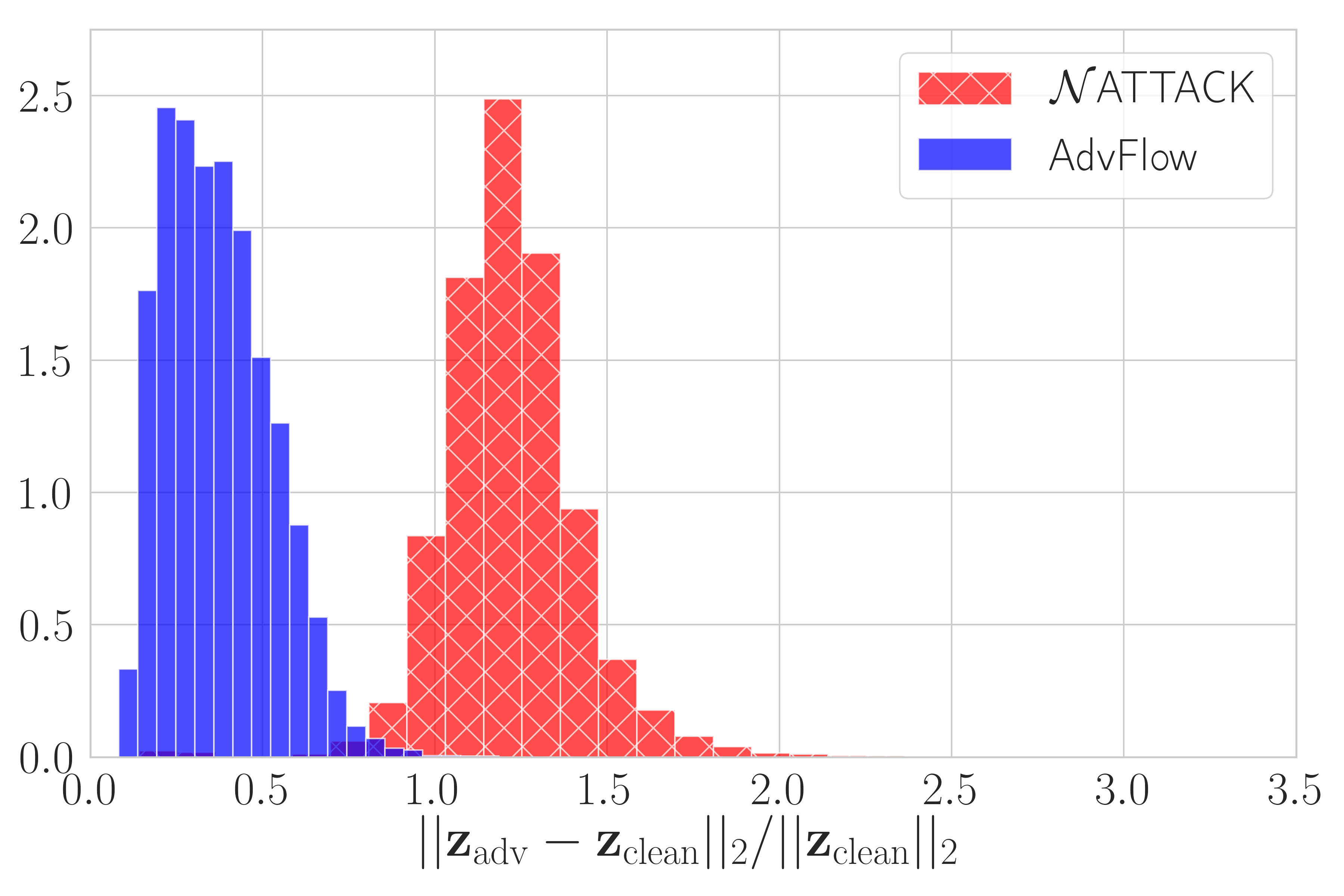}
	\end{subfigure}\hspace*{0.2em}
	\begin{subfigure}{.44\textwidth}
		\centering
		\includegraphics[width=1.0\textwidth]{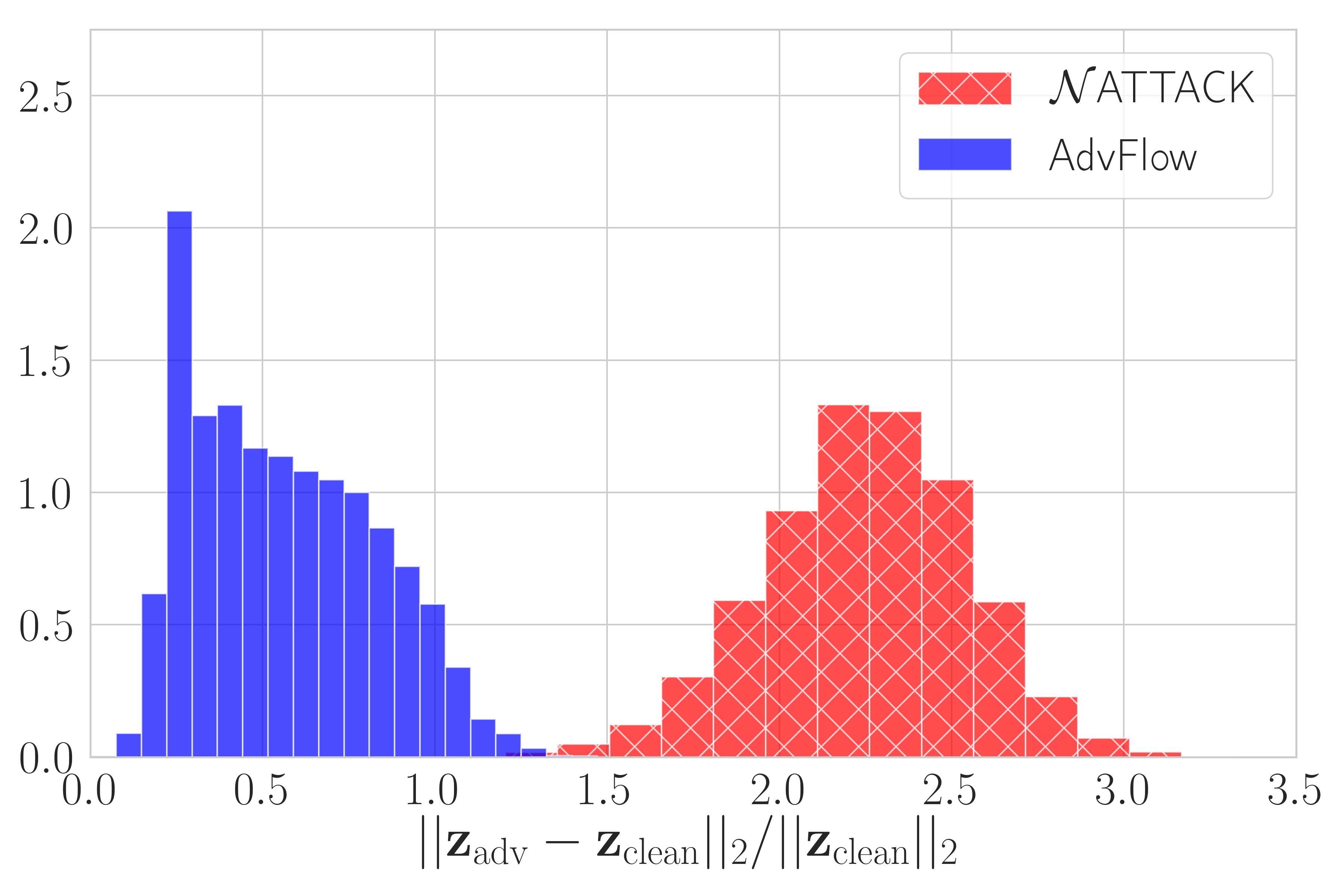}
	\end{subfigure}
	\caption{Relative change in the base distribution of the flow-based model for adversarial examples generated by AdvFlow and $\mathcal{N}$\textsc{Attack} for CIFAR-10~\cite{krizhevsky2009learning} (left) and SVHN~\cite{netzer2011reading} (right) classifiers of Table~\ref{tab:Detectability}.}
	\label{fig:latent_separation}
\end{figure}

\subsection{Success Rate and Number of Queries}\label{sec:sec:succ_query}

Next, we investigate the performance of the proposed model in attacking vanilla and defended image classifiers.
It was shown previously that $\mathcal{N}$\textsc{Attack} struggles to break into adversarially trained models more than any other defense~\citep{li2019nattack}.
Thus, we select some of the most recent defense techniques that are built upon adversarial training~\citep{madry2018towards}.
This selection also helps us in quantifying the attack transferability in our next experiment.
Therefore, we select Free~\citep{shafahi2019free} and Fast~\citep{wong2020fast} adversarial training, alongside adversarial training with auxiliary rotations~\cite{hendrycks2019using} as the defense mechanisms that our classifiers employ.
Note that these models are the most recent defenses built upon adversarial training.
For a brief explanation of each one of these methods, please refer to Appendix~\ref{sec:ap:defense}.
We then train target classifiers on CIFAR-10~\citep{krizhevsky2009learning} and SVHN~\citep{netzer2011reading} datasets.
The architecture that we use here is the well-known Wide-ResNet-32~\cite{zagoruyko2016wresnet} with width $10$.
We then try to attack these classifiers by generating adversarial examples on the test set.
We compare our proposed model with bandits with time and data-dependent priors~\citep{ilyas2019prior}, $\mathcal{N}$\textsc{Attack}~\citep{li2019nattack}, and SimBA~\cite{guo2019simba}. \footnote{Note that SimBA~\citep{guo2019simba} is originally designed for efficient $\ell_2$ attacks, and it may not use the entire $10,000$ query quota for small images. Anyways, we included SimBA~\citep{guo2019simba} in the paper as per one of the reviewers' suggestions.}
To simulate a realistic environment, we set the maximum number of queries to $10,000$.
Moreover, for $\mathcal{N}$\textsc{Attack} and AdvFlow we use a population size of $20$.
More details on the defense methods as well as attack hyperparameters can be found in Appendices~\ref{sec:ap:defense} and~\ref{sec:ap:attack_hyper}.

Tables~\ref{tab:success} and~\ref{tab:query}  show the success rate as well as the average and median number of queries required to successfully attack a vanilla/defended classifier.
Also, Figure~\ref{fig:ap_succ_query} in Appendix~\ref{sec:ap:ext_sim_res} shows the attack success rate for AdvFlow and $\mathcal{N}$\textsc{Attack}~\citep{li2019nattack} versus the maximum number of queries for defended models.
As can be seen, AdvFlows can improve upon the performance of $\mathcal{N}$\textsc{Attack}~\citep{li2019nattack} in all of the defended models in terms of the number of queries and attack success rate.
Also, it should be noted that although our performance on vanilla classifiers is worse than $\mathcal{N}$\textsc{Attack}~\citep{li2019nattack}, we are still generating adversaries that are not easily detectable by adversarial example detectors and come from a similar distribution to the clean data.

\begin{table*}[tb!]
	\caption{Attack success rate of black-box adversarial attacks on CIFAR-10~\citep{krizhevsky2009learning} and SVHN~\citep{netzer2011reading} Wide-ResNet-32~\cite{zagoruyko2016wresnet} classifiers.
	All attacks are with respect to $\ell_{\infty}$ norm with $\epsilon_{\max}=8/255$.}
	\label{tab:success}
	\begin{center}
		\begin{small}
				\begin{tabular}{ccccccc}
					\toprule
					\parbox[t]{2mm}{\multirow{2}{*}{\rotatebox[origin=c]{90}{Data}}} &&& \multicolumn{4}{c}{Success Rate(\%) \textuparrow}\\
					\cmidrule(lr){4-7}
					&Defense                                & Acc.(\%)                & Bandits\citep{ilyas2019prior}  & $\mathcal{N}$\textsc{Attack}~\citep{li2019nattack} & SimBA~\citep{guo2019simba} & AdvFlow \\
					\midrule
					\parbox[t]{2mm}{\multirow{4}{*}{\rotatebox[origin=c]{90}{CIFAR-10}}}
					&Vanilla~\citep{zagoruyko2016wresnet}   & $91.77$                 & $98.81$	                       & $\mathbf{100}$                                     & $99.99$                    & $99.42$\\
					&FreeAdv~\citep{shafahi2019free}        & $81.29$                 & $37.12$	                       & $38.97$	                                        & $35.52$	                 & $\mathbf{41.21}$\\
					&FastAdv~\citep{wong2020fast}           & $86.33$                 & $36.60$                        & $36.90$                                           	& $35.07$	                 & $\mathbf{40.22}$\\
					&RotNetAdv~\citep{hendrycks2019using}   & $86.58$                 & $37.73$	                       & $38.04$	                                        & $35.63$	                 & $\mathbf{40.67}$\\
					\midrule
					\parbox[t]{2mm}{\multirow{4}{*}{\rotatebox[origin=c]{90}{SVHN}}}
					&Vanilla~\citep{zagoruyko2016wresnet}   & $96.45$                 & $87.84$	                       & $\mathbf{98.76}$	                                & $97.26$                    & $90.31$\\
					&FreeAdv~\citep{shafahi2019free}        & $86.47$                 & $49.64$	                       & $50.28$	                                        & $46.28$	                 & $\mathbf{50.76}$\\
					&FastAdv~\citep{wong2020fast}           & $93.90$                 & $40.43$	                       & $35.42$	                                        & $36.19$	                 & $\mathbf{41.49}$\\
					&RotNetAdv~\citep{hendrycks2019using}   & $90.33$                 & $43.47$	                       & $41.49$                                            & $39.01$	                 & $\mathbf{44.22}$\\
					\bottomrule
				\end{tabular}
		\end{small}
	\end{center}
\end{table*}

\begin{table*}[tb!]
	\caption{Average (median) of the number of queries needed to generate an adversarial example for CIFAR-10~\citep{krizhevsky2009learning} and SVHN~\citep{netzer2011reading} Wide-ResNet-32~\cite{zagoruyko2016wresnet} classifiers of Table~\ref{tab:success}.
		For a fair comparison, we first find the samples where all the attack methods are successful, and then compute the average (median) of queries for these samples. 
		Note that for $\mathcal{N}$\textsc{Attack} and AdvFlow we check whether we arrived at an adversarial point every $200$ queries, and hence, the medians are a multiples of $200$.}
	\label{tab:query}
	\begin{center}
		\begin{small}
			\begin{tabular}{cccccc}
				\toprule
				\parbox[t]{2mm}{\multirow{2}{*}{\rotatebox[origin=c]{90}{Data}}} && \multicolumn{4}{c}{Query Average (Median) on Mutually Successful Attacks \textdownarrow}\\
				\cmidrule(lr){3-6}
				&Defense                                & Bandits\citep{ilyas2019prior}    & $\mathcal{N}$\textsc{Attack}~\citep{li2019nattack} & SimBA~\citep{guo2019simba}        & AdvFlow \\
				\midrule
				\parbox[t]{2mm}{\multirow{4}{*}{\rotatebox[origin=c]{90}{CIFAR-10}}}
				&Vanilla~\citep{zagoruyko2016wresnet}   & $552.69~(182)$	               & $\mathbf{237.58}~(200)$	                        & $237.70~(\mathbf{126})$           & $949.31~(400)$\\
				&FreeAdv~\citep{shafahi2019free}        & $1062.7~(354)$	               & $874.91~(400)$                                     & $463.09~(244)$	                & $\mathbf{421.63}~(\mathbf{200})$\\
				&FastAdv~\citep{wong2020fast}           & $1065.92~(358)$                  & $973.05~(400)$                                     & $\mathbf{428.81}~(234)$           & $436.8~(\mathbf{200})$\\
				&RotNetAdv~\citep{hendrycks2019using}   & $1085.43~(408)$                  & $941.67~(400)$                                     & $471.99~(259)$                    & $\mathbf{424.95}~(\mathbf{200})$\\
				\midrule
				\parbox[t]{2mm}{\multirow{4}{*}{\rotatebox[origin=c]{90}{SVHN}}}
				&Vanilla~\citep{zagoruyko2016wresnet}   & $1750.65~(1128)$                 & $408.75~(200)$                                   	& $\mathbf{202.07}~(\mathbf{107})$  & $1572.24~(600)$\\
				&FreeAdv~\citep{shafahi2019free}        & $819.98~(250)$                   & $903.12~(400)$                                     & $\mathbf{365.42}~(216)$           & $692.73~(\mathbf{200})$\\
				&FastAdv~\citep{wong2020fast}           & $755.23~(284)$                   & $1243.38~(600)$	                                & $\mathbf{307.73}~(216)$           & $526.37~(\mathbf{200})$\\
				&RotNetAdv~\citep{hendrycks2019using}   & $663.07~(202)$                   & $756.48~(400)$	                                    & $\mathbf{319.93}~(\mathbf{186})$  & $480.02~(200)$\\
				\bottomrule
			\end{tabular}
		\end{small}
	\end{center}
\end{table*}

\subsection{Transferability}\label{sec:sec:transferability}

Finally, we examine the transferability of the generated attacks for each of the classifiers in Table~\ref{tab:success}.
In other words, we generate attacks using a substitute classifier, and then try to attack another target model.
The results of this experiment are shown in Figure~\ref{fig:ap_confusion_trans} of Appendix~\ref{sec:ap:ext_sim_res}.
As seen, the generated attacks by AdvFlow transfer to other defended models easier than the vanilla one.
This observation precisely matches our intuition about the mechanics of AdvFlow.
More specifically, we know that in AdvFlow the model is learning a distribution that is more expressive than the one used by $\mathcal{N}$\textsc{Attack}.
Also, we have seen in Section~\ref{sec:sec:advflow_lemma} that the perturbations generated by AdvFlow have dependent elements in contrast to $\mathcal{N}$\textsc{Attack}.
As a result, AdvFlow learns to attack classifiers using higher-level features (Figure~\ref{fig:celeba_example}).  
Thus, since vanilla classifiers use different features for classification than the defended ones, AdvFlows are less transferable from defended models to vanilla ones.
In contrast, the expressiveness of AdvFlows enables the attacks to be transferred more successfully between adversarially trained classifiers, and from vanilla to defended ones.
  
\section{Conclusion and Future Directions}\label{sec:conclusion}

In this paper, we introduced AdvFlow: a novel adversarial attack model that utilizes the capacity of normalizing flows in representing data distributions.
We saw that the adversarial perturbations generated by the proposed approach can be approximated using normal distributions with dependent components.
In this sense, $\mathcal{N}$\textsc{Attack}~\citep{li2019nattack} cannot generate such adversaries.
As a result, AdvFlows are less conspicuous to adversarial example detectors in contrast to their $\mathcal{N}$\textsc{Attack}~\citep{li2019nattack} counterpart.
This success is due to AdvFlow being pre-trained on the data distribution, resulting in adversaries that look like the clean data.
We also saw the capability of the proposed method in improving the performance of bandits~\cite{ilyas2019prior}, $\mathcal{N}$\textsc{Attack}~\citep{li2019nattack}, and SimBA~\cite{guo2019simba} on adversarially trained classifiers.
This improvement is in terms of both attack success rate and the number of queries.

Flow-based modeling is an active area of research.
There are numerous extensions to the current work that can be investigated upon successful expansion of normalizing flow models in their range and power.
For example, while $\mathcal{N}$\textsc{Attack}~\citep{li2019nattack} and other similar approaches~\citep{ilyas2018black, ilyas2019prior} are specifically designed for use on image data, the current work can potentially be expanded to entail other forms of data such as graphs~\citep{liu2019gnf, shi2020graphaf}.
Also, since normalizing flows can effectively model probability distributions, finding the distribution of well-known perturbations may lead to increasing classifier robustness against adversarial examples.
We hope that this work can provide a stepping stone to exploiting such powerful models for adversarial machine learning.

\clearpage  
\section*{Broader Impact}
In this paper, we introduce a novel adversarial attack algorithm called AdvFlow.
It uses pre-trained normalizing flows to generate adversarial examples.
This study is crucial as it indicates the vulnerability of deep neural network~(DNN) classifiers to adversarial attacks.

More precisely, our study reveals that the common assumption made by adversarial example detectors (such as the Mahalanobis detector~\citep{lee2018simple}) that the adversaries come from a different distribution than the data may not be an accurate one.
In particular, we show that we can generate adversaries that come from a close distribution to the data, yet they intend to mislead the classifier decision.
Thus, we emphasize that adversarial example detectors need to adjust their assumption about the distribution of adversaries before being deployed in real-world situations.

Furthermore, since our adversarial examples are closely related to the data distribution, our method shows that DNN classifiers are not learning to classify the data based on their underlying distribution.
Otherwise, they would have resisted the attacks generated by AdvFlow.
Thus, it can bring the attention of the machine learning community to training their DNN classifiers in a distributional sense.

All in all, we pinpoint a failure of DNN classifiers to the rest of the community so that they can become familiar with the limitations of the status-quo.
This study, and similar ones, could raise awareness among researchers about the real-world pitfalls of DNN classifiers, with the aim of consolidating them against such threats in the future.

\begin{ack}
We would like to thank the reviewers for their valuable feedback on our work, helping us to improve the final manuscript.
We also would like to thank the authors and maintainers of PyTorch~\citep{paszke2017automatic}, NumPy~\cite{harris2020array}, and Matplotlib~\cite{hunter2007matplotlib}.

This research was undertaken using the LIEF HPC-GPGPU Facility hosted at the University of Melbourne. This Facility was established with the assistance of LIEF Grant LE170100200.
\end{ack}

\bibliographystyle{apalike}
\bibliography{references}

\begin{thebibliography}{}

\bibitem[Ardizzone et~al., 2019]{ardizzone2019guided}
Ardizzone, L., L{\"{u}}th, C., Kruse, J., Rother, C., and K{\"{o}}the, U.
  (2019).
\newblock Guided image generation with conditional invertible neural networks.
\newblock {\em CoRR}, abs/1907.02392.

\bibitem[Baluja and Fischer, 2018]{baluja2018learning}
Baluja, S. and Fischer, I. (2018).
\newblock Learning to attack: Adversarial transformation networks.
\newblock In {\em Proceedings of the 32nd {AAAI} Conference on Artificial
  Intelligence}, pages 2687--2695.

\bibitem[Bhambri et~al., 2019]{bhambri2019study}
Bhambri, S., Muku, S., Tulasi, A., and Buduru, A.~B. (2019).
\newblock A survey of black-box adversarial attacks on computer vision models.
\newblock {\em CoRR}, abs/1912.01667.

\bibitem[Carlini and Wagner, 2017]{carlini2017towards}
Carlini, N. and Wagner, D. (2017).
\newblock Towards evaluating the robustness of neural networks.
\newblock In {\em Proceedings of the 2017 {IEEE} {S}ymposium on {S}ecurity and
  {P}rivacy ({SP})}, pages 39--57.

\bibitem[Chen et~al., 2017]{chen2017zoo}
Chen, P.-Y., Zhang, H., Sharma, Y., Yi, J., and Hsieh, C.-J. (2017).
\newblock {ZOO}: Zeroth order optimization based black-box attacks to deep
  neural networks without training substitute models.
\newblock In {\em Proceedings of the 10th {ACM} Workshop on Artificial
  Intelligence and Security}, pages 15--26.

\bibitem[Coleman et~al., 2017]{coleman2017dawnbench}
Coleman, C., Narayanan, D., Kang, D., Zhao, T., Zhang, J., Nardi, L., Bailis,
  P., Olukotun, K., R{\'e}, C., and Zaharia, M. (2017).
\newblock {DAWNB}ench: An end-to-end deep learning benchmark and competition.
\newblock {\em {NeurIPS} {ML} Systems Workshop}.

\bibitem[Dinh et~al., 2015]{dinh2015nice}
Dinh, L., Krueger, D., and Bengio, Y. (2015).
\newblock {NICE:} non-linear independent components estimation.
\newblock In {\em Workshop Track Proceedings of the 3rd International
  Conference on Learning Representations ({ICLR})}.

\bibitem[Dinh et~al., 2017]{dinh2016density}
Dinh, L., Sohl{-}Dickstein, J., and Bengio, S. (2017).
\newblock Density estimation using real {NVP}.
\newblock In {\em Proceedings of the 5th International Conference on Learning
  Representations ({ICLR})}.

\bibitem[Dolatabadi et~al., 2020a]{dolatabadi2020greedy}
Dolatabadi, H.~M., Erfani, S.~M., and Leckie, C. (2020a).
\newblock Black-box adversarial example generation with normalizing flows.
\newblock In {\em Second workshop on Invertible Neural Networks, Normalizing
  Flows, and Explicit Likelihood Models, 37th International Conference on
  Machine Learning ({ICML})}.

\bibitem[Dolatabadi et~al., 2020b]{dolatabadi2020lrs}
Dolatabadi, H.~M., Erfani, S.~M., and Leckie, C. (2020b).
\newblock Invertible generative modeling using linear rational splines.
\newblock In {\em Proceedings of the 23rd International Conference on
  Artificial Intelligence and Statistics ({AISTATS})}, pages 4236--4246.

\bibitem[Durkan et~al., 2019]{durkan2019neural}
Durkan, C., Bekasov, A., Murray, I., and Papamakarios, G. (2019).
\newblock Neural spline flows.
\newblock In {\em Proceedings of the Advances in Neural Information Processing
  Systems 32: Annual Conference on Neural Information Processing Systems
  ({NeurIPS})}, pages 7511--7522.

\bibitem[Eykholt et~al., 2018]{eykholt2018robust}
Eykholt, K., Evtimov, I., Fernandes, E., Li, B., Rahmati, A., Xiao, C.,
  Prakash, A., Kohno, T., and Song, D. (2018).
\newblock Robust physical-world attacks on deep learning visual classification.
\newblock In {\em Proceeding of the 2018 {IEEE} Conference on Computer Vision
  and Pattern Recognition {CVPR}}, pages 1625--1634.

\bibitem[Gidaris et~al., 2018]{gidaris2018unsupervised}
Gidaris, S., Singh, P., and Komodakis, N. (2018).
\newblock Unsupervised representation learning by predicting image rotations.
\newblock In {\em Proceedings of the 6th International Conference on Learning
  Representations ({ICLR})}.

\bibitem[Goodfellow et~al., 2015]{goodfellow2014explaining}
Goodfellow, I.~J., Shlens, J., and Szegedy, C. (2015).
\newblock Explaining and harnessing adversarial examples.
\newblock In {\em Proceedings of the 3rd International Conference on Learning
  Representations ({ICLR})}.

\bibitem[Guo et~al., 2019a]{guo2019lowfreq}
Guo, C., Frank, J.~S., and Weinberger, K.~Q. (2019a).
\newblock Low frequency adversarial perturbation.
\newblock In {\em Proceedings of the 35th Conference on Uncertainty in
  Artificial Intelligence ({UAI})}, page 411.

\bibitem[Guo et~al., 2019b]{guo2019simba}
Guo, C., Gardner, J.~R., You, Y., Wilson, A.~G., and Weinberger, K.~Q. (2019b).
\newblock Simple black-box adversarial attacks.
\newblock In {\em Proceedings of the 36th International Conference on Machine
  Learning ({ICML})}, pages 2484--2493.

\bibitem[Harris et~al., 2020]{harris2020array}
Harris, C.~R., Millman, K.~J., van~der Walt, S.~J., Gommers, R., Virtanen, P.,
  Cournapeau, D., Wieser, E., Taylor, J., Berg, S., Smith, N.~J., Kern, R.,
  Picus, M., Hoyer, S., van Kerkwijk, M.~H., Brett, M., Haldane, A., del
  R{'{\i}}o, J.~F., Wiebe, M., Peterson, P., G{'{e}}rard-Marchant, P.,
  Sheppard, K., Reddy, T., Weckesser, W., Abbasi, H., Gohlke, C., and Oliphant,
  T.~E. (2020).
\newblock Array programming with {NumPy}.
\newblock {\em Nature}, 585(7825):357--362.

\bibitem[He et~al., 2016]{he2016deep}
He, K., Zhang, X., Ren, S., and Sun, J. (2016).
\newblock Deep residual learning for image recognition.
\newblock In {\em Proceedings of the 2016 {IEEE} Conference on Computer Vision
  and Pattern Recognition ({CVPR})}, pages 770--778.

\bibitem[Hendrycks et~al., 2019]{hendrycks2019using}
Hendrycks, D., Mazeika, M., Kadavath, S., and Song, D. (2019).
\newblock Using self-supervised learning can improve model robustness and
  uncertainty.
\newblock In {\em Proceedings of the Advances in Neural Information Processing
  Systems 32: Annual Conference on Neural Information Processing Systems
  ({NeurIPS})}, pages 15637--15648.

\bibitem[Huang and Zhang, 2020]{huang2019black}
Huang, Z. and Zhang, T. (2020).
\newblock Black-box adversarial attack with transferable model-based embedding.
\newblock In {\em Proceedings of the 8th International Conference on Learning
  Representations ({ICLR})}.

\bibitem[Hunter, 2007]{hunter2007matplotlib}
Hunter, J.~D. (2007).
\newblock Matplotlib: A {2D} graphics environment.
\newblock {\em Computing in Science \& Engineering}, 9(3):90--95.

\bibitem[Ilyas et~al., 2018]{ilyas2018black}
Ilyas, A., Engstrom, L., Athalye, A., and Lin, J. (2018).
\newblock Black-box adversarial attacks with limited queries and information.
\newblock In {\em Proceedings of the 35th International Conference on Machine
  Learning ({ICML})}, pages 2142--2151.

\bibitem[Ilyas et~al., 2019]{ilyas2019prior}
Ilyas, A., Engstrom, L., and Madry, A. (2019).
\newblock Prior convictions: Black-box adversarial attacks with bandits and
  priors.
\newblock In {\em Proceedings of the 7th International Conference on Learning
  Representations ({ICLR})}.

\bibitem[Jacobsen et~al., 2018]{jacobsen2018irevnet}
Jacobsen, J., Smeulders, A. W.~M., and Oyallon, E. (2018).
\newblock i-{R}ev{N}et: Deep invertible networks.
\newblock In {\em Proceedings of the 6th International Conference on Learning
  Representations ({ICLR})}.

\bibitem[Jiang et~al., 2019]{jiang2019black}
Jiang, L., Ma, X., Chen, S., Bailey, J., and Jiang, Y. (2019).
\newblock Black-box adversarial attacks on video recognition models.
\newblock In {\em Proceedings of the 27th {ACM} International Conference on
  Multimedia}, pages 864--872.

\bibitem[Kingma and Ba, 2015]{kingma2015adam}
Kingma, D.~P. and Ba, J. (2015).
\newblock Adam: {A} method for stochastic optimization.
\newblock In {\em Proceedings of the 3rd International Conference on Learning
  Representations ({ICLR})}.

\bibitem[Kingma and Dhariwal, 2018]{kingma2018glow}
Kingma, D.~P. and Dhariwal, P. (2018).
\newblock Glow: Generative flow with invertible 1x1 convolutions.
\newblock In {\em Proceedings of the Advances in Neural Information Processing
  Systems 31: Annual Conference on Neural Information Processing Systems
  ({NeurIPS})}, pages 10236--10245.

\bibitem[Kobyzev et~al., 2020]{kobyzev2019nfs}
Kobyzev, I., Prince, S., and Brubaker, M.~A. (2020).
\newblock Normalizing flows: An introduction and review of current methods.
\newblock {\em {IEEE} {T}ransactions on {P}attern {A}nalysis and {M}achine
  {I}ntelligence}.

\bibitem[Kolter and Madry, 2018]{madry2018adversarial}
Kolter, Z. and Madry, A. (2018).
\newblock Adversarial robustness: Theory and practice.
\newblock \url{https://adversarial-ml-tutorial.org/}.
\newblock Tutorial in the Advances in Neural Information Processing Systems 31:
  Annual Conference on Neural Information Processing Systems ({NeurIPS}).

\bibitem[Krizhevsky and Hinton, 2009]{krizhevsky2009learning}
Krizhevsky, A. and Hinton, G. (2009).
\newblock Learning multiple layers of features from tiny images.
\newblock Master's thesis, Department of Computer Science, University of
  Toronto.

\bibitem[Laidlaw and Feizi, 2019]{laidlaw2019functional}
Laidlaw, C. and Feizi, S. (2019).
\newblock Functional adversarial attacks.
\newblock In {\em Proceedings of the Advances in Neural Information Processing
  Systems 32: Annual Conference on Neural Information Processing Systems
  ({NeurIPS})}, pages 10408--10418.

\bibitem[Lee et~al., 2018]{lee2018simple}
Lee, K., Lee, K., Lee, H., and Shin, J. (2018).
\newblock A simple unified framework for detecting out-of-distribution samples
  and adversarial attacks.
\newblock In {\em Proceedings of the Advances in Neural Information Processing
  Systems 31: Annual Conference on Neural Information Processing Systems
  ({NeurIPS})}, pages 7167--7177.

\bibitem[Li et~al., 2019]{li2019nattack}
Li, Y., Li, L., Wang, L., Zhang, T., and Gong, B. (2019).
\newblock {NATTACK:} learning the distributions of adversarial examples for an
  improved black-box attack on deep neural networks.
\newblock In {\em Proceedings of the 36th International Conference on Machine
  Learning ({ICML})}, pages 3866--3876.

\bibitem[Liang et~al., 2018]{liang2018deep}
Liang, B., Li, H., Su, M., Bian, P., Li, X., and Shi, W. (2018).
\newblock Deep text classification can be fooled.
\newblock In {\em Proceedings of the 27th International Joint Conference on
  Artificial Intelligence ({IJCAI})}, pages 4208--4215.

\bibitem[Liu et~al., 2019]{liu2019gnf}
Liu, J., Kumar, A., Ba, J., Kiros, J., and Swersky, K. (2019).
\newblock Graph normalizing flows.
\newblock In {\em Proceedings of the Advances in Neural Information Processing
  Systems 32: Annual Conference on Neural Information Processing Systems
  ({NeurIPS})}, pages 13556--13566.

\bibitem[Liu et~al., 2015]{liu2015deep}
Liu, Z., Luo, P., Wang, X., and Tang, X. (2015).
\newblock Deep learning face attributes in the wild.
\newblock In {\em Proceedings of the 2015 {IEEE} International Conference on
  Computer Vision ({ICCV})}, pages 3730--3738.

\bibitem[Ma et~al., 2018]{ma2018characterizing}
Ma, X., Li, B., Wang, Y., Erfani, S.~M., Wijewickrema, S. N.~R., Schoenebeck,
  G., Song, D., Houle, M.~E., and Bailey, J. (2018).
\newblock Characterizing adversarial subspaces using local intrinsic
  dimensionality.
\newblock In {\em Proceedings of the 6th International Conference on Learning
  Representations ({ICLR})}.

\bibitem[Madry et~al., 2018]{madry2018towards}
Madry, A., Makelov, A., Schmidt, L., Tsipras, D., and Vladu, A. (2018).
\newblock Towards deep learning models resistant to adversarial attacks.
\newblock In {\em Proceedings of the 6th International Conference on Learning
  Representations ({ICLR})}.

\bibitem[Moon et~al., 2019]{moon2019parsimonous}
Moon, S., An, G., and Song, H.~O. (2019).
\newblock Parsimonious black-box adversarial attacks via efficient
  combinatorial optimization.
\newblock In {\em Proceedings of the 36th International Conference on Machine
  Learning ({ICML})}, pages 4636--4645.

\bibitem[Netzer et~al., 2011]{netzer2011reading}
Netzer, Y., Wang, T., Coates, A., Bissacco, A., Wu, B., and Ng, A.~Y. (2011).
\newblock Reading digits in natural images with unsupervised feature learning.
\newblock In {\em NeurIPS Workshop on Deep Learning and Unsupervised Feature
  Learning}.

\bibitem[Papamakarios et~al., 2019]{papamakarios2019normalizing}
Papamakarios, G., Nalisnick, E., Rezende, D.~J., Mohamed, S., and
  Lakshminarayanan, B. (2019).
\newblock Normalizing flows for probabilistic modeling and inference.
\newblock {\em CoRR}, abs/1912.02762.

\bibitem[Papernot et~al., 2016]{papernot2016transfer}
Papernot, N., McDaniel, P.~D., and Goodfellow, I.~J. (2016).
\newblock Transferability in machine learning: from phenomena to black-box
  attacks using adversarial samples.
\newblock {\em CoRR}, abs/1605.07277.

\bibitem[Paszke et~al., 2017]{paszke2017automatic}
Paszke, A., Gross, S., Chintala, S., Chanan, G., Yang, E., DeVito, Z., Lin, Z.,
  Desmaison, A., Antiga, L., and Lerer, A. (2017).
\newblock Automatic differentiation in {PyTorch}.
\newblock In {\em NeurIPS Autodiff Workshop}.

\bibitem[Rezende and Mohamed, 2015]{rezende2015variational}
Rezende, D.~J. and Mohamed, S. (2015).
\newblock Variational inference with normalizing flows.
\newblock In {\em Proceedings of the 32nd International Conference on Machine
  Learning ({ICML})}, pages 1530--1538.

\bibitem[Rudin et~al., 1964]{rudin1964principles}
Rudin, W. et~al. (1964).
\newblock {\em Principles of {M}athematical {A}nalysis}, volume~3.
\newblock Mc{G}raw-{H}ill {N}ew {Y}ork.

\bibitem[Russakovsky et~al., 2015]{russakovsky2015imagenet}
Russakovsky, O., Deng, J., Su, H., Krause, J., Satheesh, S., Ma, S., Huang, Z.,
  Karpathy, A., Khosla, A., Bernstein, M.~S., Berg, A.~C., and Li, F. (2015).
\newblock Image{N}et large scale visual recognition challenge.
\newblock {\em International Journal of Computer Vision}, 115(3):211--252.

\bibitem[Salimans et~al., 2017]{salimans2017evolution}
Salimans, T., Ho, J., Chen, X., and Sutskever, I. (2017).
\newblock Evolution strategies as a scalable alternative to reinforcement
  learning.
\newblock {\em CoRR}, abs/1703.03864.

\bibitem[Shafahi et~al., 2019]{shafahi2019free}
Shafahi, A., Najibi, M., Ghiasi, A., Xu, Z., Dickerson, J.~P., Studer, C.,
  Davis, L.~S., Taylor, G., and Goldstein, T. (2019).
\newblock Adversarial training for free!
\newblock In {\em Proceedings of the Advances in Neural Information Processing
  Systems 32: Annual Conference on Neural Information Processing Systems
  ({NeurIPS})}, pages 3353--3364.

\bibitem[Shi* et~al., 2020]{shi2020graphaf}
Shi*, C., Xu*, M., Zhu, Z., Zhang, W., Zhang, M., and Tang, J. (2020).
\newblock Graphaf: a flow-based autoregressive model for molecular graph
  generation.
\newblock In {\em Proceedings of the 8th International Conference on Learning
  Representations ({ICLR})}.

\bibitem[Simonyan and Zisserman, 2015]{simonyan2015vgg}
Simonyan, K. and Zisserman, A. (2015).
\newblock Very deep convolutional networks for large-scale image recognition.
\newblock In {\em Proceedings of the 3rd International Conference on Learning
  Representations ({ICLR})}.

\bibitem[Song et~al., 2018]{song2018constructing}
Song, Y., Shu, R., Kushman, N., and Ermon, S. (2018).
\newblock Constructing unrestricted adversarial examples with generative
  models.
\newblock In {\em Proceedings of the Advances in Neural Information Processing
  Systems 31: Annual Conference on Neural Information Processing Systems
  ({NeurIPS})}, pages 8322--8333.

\bibitem[Szegedy et~al., 2016]{szegedy2016rethinking}
Szegedy, C., Vanhoucke, V., Ioffe, S., Shlens, J., and Wojna, Z. (2016).
\newblock Rethinking the inception architecture for computer vision.
\newblock In {\em Proceedings of the 2016 {IEEE} Conference on Computer Vision
  and Pattern Recognition ({CVPR})}, pages 2818--2826.

\bibitem[Szegedy et~al., 2014]{szegedy2014intriguing}
Szegedy, C., Zaremba, W., Sutskever, I., Bruna, J., Erhan, D., Goodfellow,
  I.~J., and Fergus, R. (2014).
\newblock Intriguing properties of neural networks.
\newblock In {\em Proceedings of the 2nd International Conference on Learning
  Representations ({ICLR})}.

\bibitem[Tabak and Turner, 2013]{tabak2013family}
Tabak, E.~G. and Turner, C.~V. (2013).
\newblock A family of nonparametric density estimation algorithms.
\newblock {\em Communications on Pure and Applied Mathematics}, 66(2):145--164.

\bibitem[Tu et~al., 2019]{tu2019autozoom}
Tu, C., Ting, P., Chen, P., Liu, S., Zhang, H., Yi, J., Hsieh, C., and Cheng,
  S. (2019).
\newblock Auto{ZOOM}: Autoencoder-based zeroth order optimization method for
  attacking black-box neural networks.
\newblock In {\em Proceedings of the 33rd {AAAI} Conference on Artificial
  Intelligence}, pages 742--749.

\bibitem[Wang and Yu, 2019]{wang2019adirect}
Wang, H. and Yu, C. (2019).
\newblock A direct approach to robust deep learning using adversarial networks.
\newblock In {\em Proceedings of the 7th International Conference on Learning
  Representations ({ICLR})}.

\bibitem[Wasserman, 2013]{wasserman2013all}
Wasserman, L. (2013).
\newblock {\em All of statistics: a concise course in statistical inference}.
\newblock Springer Science \& Business Media.

\bibitem[Wierstra et~al., 2014]{wierstra2014natural}
Wierstra, D., Schaul, T., Glasmachers, T., Sun, Y., Peters, J., and
  Schmidhuber, J. (2014).
\newblock Natural evolution strategies.
\newblock {\em Journal of Machine Learning Research ({JMLR})}, 15(1):949--980.

\bibitem[Wierstra et~al., 2008]{wierstra2008nes}
Wierstra, D., Schaul, T., Peters, J., and Schmidhuber, J. (2008).
\newblock Natural evolution strategies.
\newblock In {\em Proceedings of the 2008 IEEE Congress on Evolutionary
  Computation (IEEE World Congress on Computational Intelligence)}, pages
  3381--3387.

\bibitem[Wong et~al., 2020]{wong2020fast}
Wong, E., Rice, L., and Kolter, J.~Z. (2020).
\newblock Fast is better than free: Revisiting adversarial training.
\newblock In {\em Proceedings of the 8th International Conference on Learning
  Representations ({ICLR})}.

\bibitem[Xiao et~al., 2018]{xiao2018generating}
Xiao, C., Li, B., Zhu, J., He, W., Liu, M., and Song, D. (2018).
\newblock Generating adversarial examples with adversarial networks.
\newblock In {\em Proceedings of the 27th International Joint Conference on
  Artificial Intelligence ({IJCAI})}, pages 3905--3911.

\bibitem[Yuan et~al., 2019]{yuan2019adversarial}
Yuan, X., He, P., Zhu, Q., and Li, X. (2019).
\newblock Adversarial examples: Attacks and defenses for deep learning.
\newblock {\em IEEE {T}ransactions on {N}eural {N}etworks and {L}earning
  {S}ystems}, 30(9):2805--2824.

\bibitem[Zagoruyko and Komodakis, 2016]{zagoruyko2016wresnet}
Zagoruyko, S. and Komodakis, N. (2016).
\newblock Wide residual networks.
\newblock In {\em Proceedings of the British Machine Vision Conference
  ({BMVC})}.

\bibitem[Zisselman and Tamar, 2020]{zisselman2020resflow}
Zisselman, E. and Tamar, A. (2020).
\newblock Deep residual flow for out of distribution detection.
\newblock In {\em Proceedings of the 2020 {IEEE} Conference on Computer Vision
  and Pattern Recognition ({CVPR})}, pages 13991--14000.

\bibitem[Z{\"{u}}gner et~al., 2018]{zugner2018adversarial}
Z{\"{u}}gner, D., Akbarnejad, A., and G{\"{u}}nnemann, S. (2018).
\newblock Adversarial attacks on neural networks for graph data.
\newblock In {\em Proceedings of the 24th {ACM} {SIGKDD} International
  Conference on Knowledge Discovery {\&} Data Mining ({KDD})}, pages
  2847--2856.

\end{thebibliography}

\clearpage
\appendix
\textbf{\Large Supplementary Materials}

\textcolor{edit2}{This supplementary document includes the following content to support the material presented in the paper:
\begin{itemize}
    \item In Section~\ref{sec:ap:math}, we present the ``log-likelihood trick" and the proof to our lemma and corollary.
	\item In Section~\ref{sec:ap:imp_det}, we give the implementation details of our algorithm and experiments. Besides introducing the flow-based model architecture, we present a detailed explanation of classifier architectures and defense mechanisms used to evaluate our method. The set of hyperparameters used in each defense and attack model is also given.
	\item In Section~\ref{sec:ap:ext_sim_res}, we present an extended version of our simulation results. Moreover, we investigate the effect of the training data on our algorithm's performance.
	\item In Section~\ref{sec:ap:algs}, we see important extensions to the current work. These extensions include our solution to high-resolution images and un-trained AdvFlow. 
\end{itemize}}
 
\section{Mathematical Details}\label{sec:ap:math}

\subsection{The Log-likelihood Trick}\label{sec:ap:ll_trick}

Here we provide the complete proof of the ``log-likelihood trick'' as presented in~\cite{wierstra2014natural}:
\begin{align}\label{eq:ap:LLtrick}\nonumber
	\nabla_{\boldsymbol{\psi}}J(\boldsymbol{\psi})  &= \nabla_{\boldsymbol{\psi}}\mathbb{E}_{p(\mathbf{x'}|\boldsymbol{\psi})}\left[\mathcal{L}(\mathbf{x}')\right]\\\nonumber
													&= \nabla_{\boldsymbol{\psi}} \int \mathcal{L}(\mathbf{x}') p(\mathbf{x'}|\boldsymbol{\psi})\mathrm{d}\mathbf{x}'\\\nonumber
													&= \int \mathcal{L}(\mathbf{x}') \nabla_{\boldsymbol{\psi}}p(\mathbf{x'}|\boldsymbol{\psi})\mathrm{d}\mathbf{x}'\\\nonumber
													&= \int \mathcal{L}(\mathbf{x}') \frac{\nabla_{\boldsymbol{\psi}}p(\mathbf{x'}|\boldsymbol{\psi})}{p(\mathbf{x'}|\boldsymbol{\psi})}p(\mathbf{x'}|\boldsymbol{\psi})\mathrm{d}\mathbf{x}'\\\nonumber
													&= \int \mathcal{L}(\mathbf{x}') \nabla_{\boldsymbol{\psi}}\log\big(p(\mathbf{x'}|\boldsymbol{\psi})\big) p(\mathbf{x'}|\boldsymbol{\psi})\mathrm{d}\mathbf{x}'\\\nonumber
													&= \mathbb{E}_{p(\mathbf{x'}|\boldsymbol{\psi})}\left[\mathcal{L}(\mathbf{x}') \nabla_{\boldsymbol{\psi}}\log\big(p(\mathbf{x'}|\boldsymbol{\psi})\big)\right].
\end{align}

\subsection{Proofs}\label{sec:ap:proofs}

\begin{lemma}\label{lemma_ap}
	Let ${\mathbf{f}(\mathbf{x})}$ be an invertible, differentiable function.
	For a small perturbation ${\boldsymbol{\delta}_z}$ we have
	\begin{equation}\label{eq:ap:lemma}\nonumber
	\boldsymbol{\delta} = \mathbf{f}\big(\mathbf{f}^{-1}(\mathbf{x}) + \boldsymbol{\delta}_z\big) - \mathbf{x}\approx\big(\nabla\mathbf{f}^{-1}(\mathbf{x})\big)^{-1}\boldsymbol{\delta}_z.
	\end{equation}
\end{lemma}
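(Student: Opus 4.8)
The plan is to reduce the statement to a first-order Taylor expansion together with the inverse-function-theorem identity relating the Jacobian of $\mathbf{f}$ to that of $\mathbf{f}^{-1}$. First I would introduce the latent point $\mathbf{z} = \mathbf{f}^{-1}(\mathbf{x})$, so that $\mathbf{x} = \mathbf{f}(\mathbf{z})$ and the quantity of interest becomes $\boldsymbol{\delta} = \mathbf{f}(\mathbf{z} + \boldsymbol{\delta}_z) - \mathbf{f}(\mathbf{z})$. Since $\mathbf{f}$ is differentiable, expanding it around $\mathbf{z}$ gives $\mathbf{f}(\mathbf{z} + \boldsymbol{\delta}_z) = \mathbf{f}(\mathbf{z}) + \nabla\mathbf{f}(\mathbf{z})\,\boldsymbol{\delta}_z + o(\norm{\boldsymbol{\delta}_z})$, so that to first order $\boldsymbol{\delta} \approx \nabla\mathbf{f}(\mathbf{z})\,\boldsymbol{\delta}_z = \nabla\mathbf{f}\big(\mathbf{f}^{-1}(\mathbf{x})\big)\,\boldsymbol{\delta}_z$. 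This is exactly where the smallness hypothesis on $\boldsymbol{\delta}_z$ enters: the remainder is quadratic in $\norm{\boldsymbol{\delta}_z}$ and is what the symbol $\approx$ discards.

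The second step is to rewrite $\nabla\mathbf{f}\big(\mathbf{f}^{-1}(\mathbf{x})\big)$ in terms of $\mathbf{f}^{-1}$ alone. Differentiating the identity $\mathbf{f}\big(\mathbf{f}^{-1}(\mathbf{x})\big) = \mathbf{x}$ with respect to $\mathbf{x}$ and applying the chain rule yields $\nabla\mathbf{f}\big(\mathbf{f}^{-1}(\mathbf{x})\big)\,\nabla\mathbf{f}^{-1}(\mathbf{x}) = I$, whence $\nabla\mathbf{f}\big(\mathbf{f}^{-1}(\mathbf{x})\big) = \big(\nabla\mathbf{f}^{-1}(\mathbf{x})\big)^{-1}$. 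Invertibility of the Jacobian $\nabla\mathbf{f}^{-1}(\mathbf{x})$ here is guaranteed because $\mathbf{f}$ is an invertible, differentiable map. Substituting this identity into the first-order expansion gives $\boldsymbol{\delta} \approx \big(\nabla\mathbf{f}^{-1}(\mathbf{x})\big)^{-1}\boldsymbol{\delta}_z$, which is the claim.

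I do not anticipate a genuine obstacle, since both ingredients are standard; the only points needing care are bookkeeping ones, and that is where I would concentrate the write-up. I would state precisely in what sense $\approx$ holds (equality up to $o(\norm{\boldsymbol{\delta}_z})$), and I would be explicit that the two Jacobians are evaluated at \emph{different} arguments, namely $\nabla\mathbf{f}$ at $\mathbf{f}^{-1}(\mathbf{x})$ versus $\nabla\mathbf{f}^{-1}$ at $\mathbf{x}$, so that the inverse-function-theorem identity is applied at matching points. As a cross-check that avoids the chain-rule step, one may instead expand $\mathbf{f}^{-1}$: writing $\mathbf{x} + \boldsymbol{\delta} = \mathbf{f}(\mathbf{z} + \boldsymbol{\delta}_z)$ and applying $\mathbf{f}^{-1}$ to both sides gives $\mathbf{f}^{-1}(\mathbf{x} + \boldsymbol{\delta}) = \mathbf{f}^{-1}(\mathbf{x}) + \boldsymbol{\delta}_z$, and a first-order expansion of the left-hand side yields $\nabla\mathbf{f}^{-1}(\mathbf{x})\,\boldsymbol{\delta} \approx \boldsymbol{\delta}_z$; inverting the Jacobian recovers the same result and makes the dependence on $\mathbf{f}^{-1}$ transparent.
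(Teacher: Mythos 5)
Your proposal is correct and matches the paper's own proof essentially step for step: a first-order Taylor expansion of $\mathbf{f}$ at $\mathbf{z}=\mathbf{f}^{-1}(\mathbf{x})$ followed by the inverse function theorem identity $\nabla\mathbf{f}\big(\mathbf{f}^{-1}(\mathbf{x})\big)=\big(\nabla\mathbf{f}^{-1}(\mathbf{x})\big)^{-1}$. Your added bookkeeping (the $o(\norm{\boldsymbol{\delta}_z})$ sense of $\approx$ and the explicit evaluation points of the two Jacobians) is sound and slightly more careful than the paper's write-up.
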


\begin{proof}\label{lemma_proof_ap}
	By the first-order Taylor series for ${\mathbf{f}\left(\cdot\right): \mathbb{R}^{d} \rightarrow \mathbb{R}^{d}}$ we have
	\begin{equation}\label{eq:ap:taylor}\nonumber
	\mathbf{f}(\mathbf{z} + \boldsymbol{\delta}_z) \approx \mathbf{f}(\mathbf{z}) + \nabla\mathbf{f}(\mathbf{z})\boldsymbol{\delta}_z,
	\end{equation}
	where ${\nabla\mathbf{f}(\mathbf{z})}$ is the ${d \times d}$ Jacobian matrix of the function ${\mathbf{f}(\cdot)}$.
	Now, by substituting ${\mathbf{z} = \mathbf{f}^{-1}(\mathbf{x})}$, and using the \textit{inverse function theorem}~\citep{rudin1964principles}, we can write
	\begin{align}\label{eq:ap:inverse_func}\nonumber
	\mathbf{f}(\mathbf{f}^{-1}(\mathbf{x}) + \boldsymbol{\delta}_z) \approx \mathbf{f}(\mathbf{f}^{-1}(\mathbf{x})) + \big(\nabla\mathbf{f}^{-1}(\mathbf{x})\big)^{-1}\boldsymbol{\delta}_z
	\end{align}
	which gives us the result immediately.
\end{proof}

\begin{corollary}\label{corollary_ap}
	The adversarial perturbations generated by AdvFlow have dependent components.
	In contrast, ${\mathcal{N}}$\textsc{Attack} perturbation components are independent.
\end{corollary}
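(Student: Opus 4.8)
The plan is to feed both attack constructions through Lemma~\ref{lemma} and then simply read off the covariance structure of the resulting (approximately Gaussian) perturbations. For AdvFlow I would identify $\mathbf{f}$ with the pre-trained flow and write the latent perturbation as $\boldsymbol{\delta}_z = \mathbf{z} - \boldsymbol{\mu} \sim \mathcal{N}(\mathbf{0}, \sigma^2 I)$, where $\mathbf{z}$ is sampled as in Eq.~\eqref{eq:advflow_dist}. Since we initialize the search at the latent code of the clean image, i.e. $\boldsymbol{\mu} \approx \mathbf{f}^{-1}(\mathbf{x})$, the lemma gives $\boldsymbol{\delta} = \mathbf{x}' - \mathbf{x} \approx A\,\boldsymbol{\delta}_z$ with $A = \big(\nabla \mathbf{f}^{-1}(\mathbf{x})\big)^{-1}$. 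A linear image of a Gaussian is Gaussian, so $\boldsymbol{\delta}$ is approximately distributed as $\mathcal{N}(\mathbf{0}, \sigma^2 A A^{\top})$.

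Next I would repeat the argument for $\mathcal{N}$\textsc{Attack}. Its generator $g(\mathbf{z}) = \tfrac{1}{2}(\tanh(\mathbf{z}) + 1)$ in Eq.~\eqref{eq:NATTACK} acts coordinate-wise, so its Jacobian $\nabla g(\mathbf{z})$ is diagonal; substituting $\mathbf{f} = g$ into the lemma (equivalently, linearizing $g$ directly) yields $\boldsymbol{\delta} \approx D\,\boldsymbol{\delta}_z \sim \mathcal{N}(\mathbf{0}, \sigma^2 D^2)$ with $D$ diagonal. The contrast is then immediate. For $\mathcal{N}$\textsc{Attack} the covariance $\sigma^2 D^2$ is diagonal, so the components are uncorrelated, and for jointly Gaussian variables uncorrelated implies \emph{independent}. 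For AdvFlow the covariance $\sigma^2 A A^{\top}$ is in general a full matrix, so the components are correlated and hence \emph{dependent}.

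The crux is justifying that $A A^{\top}$ is genuinely non-diagonal, i.e. that the flow actually couples coordinates. I would argue this from the flow architecture: the coupling layers of a Real~NVP-style model mix the input dimensions, so $\nabla \mathbf{f}^{-1}(\mathbf{x})$, and therefore its inverse $A$, are dense rather than diagonal, forcing off-diagonal entries into $A A^{\top}$. A secondary issue is the projection $\mathrm{proj}_{\mathcal{S}}$: I would note that for perturbations small enough to lie inside $\mathcal{S}(\mathbf{x})$ the projection is locally the identity and does not affect the covariance structure, so it can be dropped in the first-order analysis. Finally, I would emphasize that all the Gaussian statements are approximations valid to first order in $\boldsymbol{\delta}_z$, which is exactly the regime in which Lemma~\ref{lemma} holds.
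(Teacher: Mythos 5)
Your proposal is correct and follows essentially the same route as the paper's proof: both apply Lemma~\ref{lemma} to each attack, observe that $\mathcal{N}$\textsc{Attack}'s coordinate-wise map $\mathbf{f}(\mathbf{z})=\tfrac{1}{2}(\tanh(\mathbf{z})+1)$ gives the diagonal Jacobian $\nabla\mathbf{f}^{-1}(\mathbf{x})=\mathrm{diag}\big(2\mathbf{x}\odot(1-\mathbf{x})\big)$ and hence an approximately Gaussian perturbation with independent components, while the coupling structure of the Real~NVP flow makes $\nabla\mathbf{f}^{-1}(\mathbf{x})$ non-diagonal and the resulting Gaussian correlated, hence dependent. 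Your extra touches --- writing the covariance explicitly as $\sigma^2 AA^{\top}$, noting the projection is locally the identity, and flagging the non-diagonality of $AA^{\top}$ as the crux --- merely make explicit details the paper's proof leaves implicit.
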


\begin{proof}\label{corollat_proof_ap}
	By Lemma~\ref{lemma} we know that
	\begin{equation}\nonumber
	\boldsymbol{\delta}\approx\big(\nabla\mathbf{f}^{-1}(\mathbf{x})\big)^{-1}\boldsymbol{\delta}_z,
	\end{equation}
	where ${\boldsymbol{\delta}_z}$ is distributed according to an isometric normal distribution.
	For $\mathcal{N}$\textsc{Attack}, we have ${\mathbf{f}(\mathbf{z})=\frac{1}{2}\big(\tanh(\mathbf{z}) + 1\big)}$.
	Thus, it can be shown that $\nabla\mathbf{f}^{-1}(\mathbf{x}) = \mathrm{diag}\big(2\mathbf{x}\odot(1-\mathbf{x})\big)$.
	As a result, the random vector $\boldsymbol{\delta}$ will be still normally distributed with a diagonal covariance matrix, and hence, have independent components.
	In contrast, we know that for an effective flow-based model $\nabla\mathbf{f}^{-1}(\mathbf{x})$ is not always diagonal.
	Otherwise, this means that our NF is simply a data-independent affine transformation.
	For example, in Real~NVP~\citep{dinh2016density} which we use, this matrix is a product of lower and upper triangular matrices.
	Hence, for a normalizing flow model $\mathbf{f}(\cdot)$ we have non-diagonal $\nabla\mathbf{f}^{-1}(\mathbf{x})$.
	Thus, it will make the random variable $\boldsymbol{\delta}$ normal with correlated (dependent) components.
\end{proof}

\section{Implementation Details}\label{sec:ap:imp_det}

In this section, we present the implementation details of our algorithm and experiments.
Note that all of the experiments were run using a single NVIDIA Tesla V100-SXM2-16GB GPU.

\subsection{Normalizing Flows}\label{sec:ap:nf_arch}

For the flow-based models used in AdvFlow, we implement Real~NVP~\cite{dinh2016density} by using the framework of~\citet{ardizzone2019guided}.\footnote{\href{https://github.com/VLL-HD/FrEIA}{github.com/VLL-HD/FrEIA}}
In particular, let ${\mathbf{z}=[\mathbf{z}_1, \mathbf{z}_2]}$ denote the input to one layer of a normalizing flow transformation.
If we denote the output of this layer by ${\mathbf{x}=[\mathbf{x}_1, \mathbf{x}_2]}$, then the Real~NVP transformation between the input and output of this particular layer can be written as:
 \begin{align}\label{eq:ap:realnvp_ardizzone}
	 \mathbf{x}_1 &= \mathbf{z}_1 \odot \exp\big(\mathbf{s}_1\left({\mathbf{z}_2}\right)\big) + \mathbf{t}_1\left({\mathbf{z}_2}\right) \nonumber \\
	 \mathbf{x}_2 &= \mathbf{z}_2 \odot \exp\big(\mathbf{s}_2\left({\mathbf{x}_1}\right)\big) + \mathbf{t}_2\left({\mathbf{x}_1}\right),\nonumber
 \end{align}
where ${\odot}$ is an element-wise multiplication.
The functions ${\mathbf{s}_{1,2}(\cdot)}$ and ${\mathbf{t}_{1,2}(\cdot)}$ are called the scaling and translation functions.
Since the invertibility of the transformation does not depend on these functions, they are implemented using ordinary neural networks.
To help with the stability of the transformation, \citet{ardizzone2019guided} suggest using a soft-clamp before passing the output of scaling networks ${\mathbf{s}_{1,2}(\cdot)}$ to exponential function.
This soft-clamp function is implemented by
\begin{equation}\label{eq:ap:soft_clamp}\nonumber
s_{\rm clamp} = \frac{2\alpha}{\pi}\arctan\big(\frac{s}{\alpha}\big),
\end{equation}
where ${\alpha}$ is a hyperparameter that controls the amount of softening.
In our experiments, we set ${\alpha = 1.5}$. 
Moreover, at the end of each layer of transformation, we permute the output so that we end up getting different partitions of the data as $\mathbf{z}_1$ and $\mathbf{z}_2$.
The pattern by which the data is permuted is set at random at the beginning of the training process and kept fixed onwards.

After passing the data through some high-resolution transformations, we downsample it using {i-RevNet} downsamplers~\citep{jacobsen2018irevnet}.
Specifically, the high-resolution input is downsampled so that each one of them constitutes a different channel of the low-resolution data.

To help the normalizing flow model learn useful features, we use a fixed $1\times 1$ convolution at the beginning of each low-resolution layer.
This adjustment is done with the same spirit as in Glow~\cite{kingma2018glow}.
However, instead of having a trainable $1\times 1$ convolution, here we initialize them at the beginning of the training and keep them fixed afterward.

Finally, we used a multi-scale structure~\cite{dinh2016density} to reduce the computational complexity of the flow-based model.
Specifically, we pass the input through several layers of invertible transformations constructed using convolutional neural networks as ${\mathbf{s}_{1,2}(\cdot)}$ and ${\mathbf{t}_{1,2}(\cdot)}$.  
Then, we send three-quarters of the data directly to the ultimate output.
The rest goes through other rounds of mappings, which use fully-connected networks.
This way, one can reduce the computational burden of flow-based models as they keep the data dimension fixed.

For training, we use an Adam~\citep{kingma2015adam} optimizer with weight decay $10^{-5}$.
Besides, we set the learning rate according to an exponential scheduler starting from $10^{-4}$ and ending to $10^{-6}$.
Also, to dequantize the image pixels, we add a small Gaussian noise with ${\sigma = 0.02}$ to the pictures.
Table~\ref{tab:ap:nf_details} summarizes the hyperparameters and architecture of the flow-based model used in AdvFlow.

\begin{table}[htp]
	\caption{Hyperparameter and architecture details for normalizing flow part of AdvFlow.}
	\label{tab:ap:nf_details}
	
	\begin{center}
		\begin{small}
				\begin{tabular}{lc}
					\toprule
					Optimizer                          & Adam\\
					Scheduler                          & Exponential\\
					Initial lr                         & $10^{-4}$\\
					Final lr                           & $10^{-6}$\\
					Batch Size                         & $64$\\
					Epochs                             & $350$\\
					\midrule
					Added Noise Std.                   & $0.02$\\
					\midrule
					Multi-scale Levels                 & $2$\\
					Each Level Network Type            & CNN-FC\\
					High-res Transformation Blocks     & $4$\\
					Low-res Transformation Blocks      & $6$\\
					FC Transformation Blocks           & $6$\\
					$\alpha$ (clamping hyperparameter)& $1.5$\\
					CNN Layers Hidden Channels         & $128$\\
					FC Layers Internal Width           & $128$\\
					Activation Function                & Leaky ReLU\\
					Leaky Slope                        & $0.1$\\
					\bottomrule
				\end{tabular}
		\end{small}
	\end{center}

\end{table}

\subsection{Adversarial Example Detectors}\label{sec:ap:mah_det}

In this section, we provide the details of the LID~\citep{ma2018characterizing}, Mahalanobis~\cite{lee2018simple}, and ResFlow~\cite{zisselman2020resflow} adversarial example detectors.
All of the methods use logistic regression as their classifier, and the way that they construct their training and evaluation sets is the same. 
The only difference among these methods is the way each one extracts their features, which we review below.
The training set used to train the logistic regression classifier consists of three types of data: clean, noisy, and adversarial.
We take the test portion of each target dataset and add a slight noise to them to make the noisy data.
The clean and noisy data are going to be used as the positive samples of the logistic regression.
For the adversarial part of the data, we then use a nominated adversarial attack method and generate adversarial examples that are later used as negative samples of the logistic regression.
After constructing the entire dataset, $10\%$ of it is used as the logistic regression training set, and the rest for evaluation.
Also, the hyperparameters of the detectors are set using nested cross-validation.

\paragraph{LID Detectors.}
\citet{ma2018characterizing} use the concept of \textit{Local Intrinsic Dimensionality}~(LID) to characterize adversarial subspaces.
It is argued that for a data point that resides on some high-dimensional submanifold, its adversarially generated sample is likely to lie outside this submanifold.
As such, \citet{ma2018characterizing} argue that the intrinsic dimensionality of the adversarial examples in a local neighborhood is going to be higher than the clean or noisy data (see Figure~1 of~\cite{ma2018characterizing}).
Thus, LID can be a good measure for differentiating adversarial examples from clean data.
\citet{ma2018characterizing} then estimate the LID measure for mini-batches of data using the extreme value theory.
To this end, they extract features of the input images using a DNN classifier.
They then compute the LID score for all these features across the training and evaluation sets.
After extracting these scores for all the data, they train and evaluate the logistic regression classifier as described above.
Here, we use the PyTorch implementation of LID detectors given by \citet{lee2018simple}. 
 
\paragraph{Mahalanobis Detectors.}
\citet{lee2018simple} propose an adversarial example detector based on a Mahalanobis distance-based confidence score.
To this end, the authors extract features from different hidden layers of a nominated DNN classifier.
Assuming that these features are distributed according to class-conditional Gaussian densities, the detector aims at estimating the mean and covariance matrix associated with each one of the features across the training set.
These densities are then used to train the logistic regression classifier based on the Mahalanobis distance confidence score between a given image feature and its closest distribution.
In this paper, we use the official implementation of the Mahalanobis adversarial example detector available online.\footnote{\href{https://github.com/pokaxpoka/deep_Mahalanobis_detector}{github.com/pokaxpoka/deep\_Mahalanobis\_detector}}
For more information about these detectors, see~\citep{lee2018simple}.

\paragraph{ResFlow Detectors.}
\citet{zisselman2020resflow} generalize the Mahalanobis detectors~\citep{lee2018simple} using normalizing flows.
It is first argued that modeling the activation distributions as Gaussian densities may not be accurate.
To find a better non-Gaussian distribution, \citet{zisselman2020resflow} exploit flow-based models to construct an architecture they call Residual Flow (ResFlow).
The same procedure as in the Mahalanobis detectors is then utilized to extract features that are later used to train the logistic regression detectors.
We use the official PyTorch implementation of ResFlow available online.\footnote{\href{https://github.com/EvZissel/Residual-Flow}{github.com/EvZissel/Residual-Flow}}
Note that in the original paper, ResFlows are only used for out-of-distribution detection.
For our purposes, we generalized their implementation to adversarial example detection using the Mahalanobis detector implementation.

\subsection{Defense Methods}\label{sec:ap:defense}

In this section, we briefly review the defense techniques used in our experiments.
We will then present the set of parameters used in the training of each one of the classifiers.
Note that we only utilized these methods for the sake of evaluating our attack models, and our results cannot be regarded as a close case-study, or comparison, of the nominated defense methods.

\subsubsection{Review of Defense Methods}

\paragraph{Adversarial Training.}
Adversarial training~\citep{madry2018towards} is a method to train robust classifiers.
To achieve robustness, this method tries to incorporate adversarial examples into the training process.  
In particular, adversarial training aims at minimizing the following objective function for classifier $\mathcal{C}(\cdot)$ with parameters $\boldsymbol{\theta}$:
\begin{equation}\label{eq:ap:adv_train}
	\min_{\boldsymbol{\theta}}\sum_{i}\max_{\norm{\boldsymbol{\delta}} \leq \epsilon}\ell\big(\mathcal{C}(\mathbf{x}_{i} + \boldsymbol{\delta}), y_{i}\big).
\end{equation}
Here, $\mathbf{x}_i$ and $y_i$ are the training examples and their associated correct labels.
Also, $\ell(\cdot)$ is an appropriate cost function for classifiers, such as the standard cross-entropy loss.
The inner maximization objective in Eq.~\eqref{eq:ap:adv_train} is the cost function used to generate adversarial examples.
Thus, we can interpret Eq.~\eqref{eq:ap:adv_train} as training a model that can predict the labels correctly, even in the presence of additive perturbations.
However, finding the exact solution to the inner optimization problem is not straightforward, and in most real-world cases cannot be done efficiently~\citep{madry2018adversarial}.
To circumvent this problem, \citet{madry2018towards} proposed approximately solving it by using a Projected Gradient Descent algorithm.
This method is widely known as adversarial training.

\paragraph{Adversarial Training for Free.}
The main disadvantage of adversarial training, as proposed in~\citep{madry2018towards}, is that solving the inner optimization problem makes the algorithm much slower than standard classifier training.
This problem arises because solving the inner maximization objective requires back-propagating through the DNN.  
To address this issue, \citet{shafahi2019free} exploits a Fast Gradient Sign Method (FGSM)~\cite{goodfellow2014explaining} with step-size $\epsilon$ to compute an approximate solution to the inner maximization objective and then update the DNN parameters.
This procedure is repeated $m$ times on the same minibatch.
Finally, the total number of epochs is divided by a factor of $m$ to account for repeated minibatch training.
We use the PyTorch code available on the official repository of the \textit{free adversarial training} to train our classifiers.\footnote{\href{https://github.com/mahyarnajibi/FreeAdversarialTraining}{github.com/mahyarnajibi/FreeAdversarialTraining}} 

\paragraph{Fast Adversarial Training.}
To make adversarial training even faster, \citet{wong2020fast} came up with a method called ``fast" adversarial training.
In this approach, they combine FGSM adversarial training with the idea of random initialization to train robust DNN classifiers.
To make the proposed algorithm even faster, \citet{wong2020fast} also utilize several fast training techniques (such as cyclic learning rate and mixed-precision arithmetic) from DAWNBench competition~\cite{coleman2017dawnbench}.
In this paper, we replace the FGSM adversarial training with PGD.
However, we still use the cyclic learning rate and mixed-precision arithmetic.
For this method, we used the official PyTorch code available online.\footnote{\href{https://github.com/anonymous-sushi-armadillo}{github.com/anonymous-sushi-armadillo}}
\paragraph{Adversarial Training with Auxiliary Rotations.}

\citet{gidaris2018unsupervised} showed that Convolutional Neural Networks can learn useful image features in an unsupervised fashion by predicting the amount of rotation applied to a given image.
Throughout their experiments, they observed that these features can improve classification performance.
Motivated by these observations, \citet{hendrycks2019using} suggest exploiting the idea of self-supervised feature learning to improve the robustness of classifiers against adversaries.
Specifically, it is proposed to train a so-called ``head'' alongside the original classifier.
This auxiliary head takes the penultimate features of the classifier and aims at predicting the amount of rotation applied to an image from four possible angles ($0^{\circ}$, $90^{\circ}$, $180^{\circ}$, or $270^{\circ}$).
It was shown that this simple addition can improve the performance of adversarially trained classifiers.
To train our models, we make use of the PyTorch code for \textit{adversarial learning with auxiliary rotations} available online.\footnote{\href{https://github.com/hendrycks/ss-ood}{github.com/hendrycks/ss-ood}}

\subsubsection{Hyperparameters of Defense Methods}

Tables~\ref{tab:ap:defense_details_cifar10} and~\ref{tab:ap:defense_details_svhn} summarize the hyperparameters used for training our defended classifiers.

\begin{table}[htp]
	\caption{Hyperparameters of defense methods for training CIFAR-10~\citep{krizhevsky2009learning} classifiers.
		Numbers 1 and 2 correspond to Wide-ResNet-32~\citep{zagoruyko2016wresnet} and ResNet-50~\citep{he2016deep} architectures, respectively.}
	\label{tab:ap:defense_details_cifar10}
	
	\begin{center}
		\begin{small}
				\begin{tabular}{lcccccc}
					\toprule
					Classifier                         & Free-1         & Free-2       & Fast-1        & Fast-2        & RotNet-1       & RotNet-2\\ 
					\midrule
					Optimizer                          & SGD            & SGD          & SGD           & SGD           & SGD            & SGD\\
					lr                                 & $0.1$          & $0.005$      & $0.21$        & $0.21$        & $0.1$          & $0.1$\\
					Momentum                           & $0.9$          & $0.9$        & $0.9$         & $0.9$         & $0.9$          & $0.9$\\
					Weight Decay                       & $0.0002$       & $0.0002$     & $0.0005$      & $0.0005$      & $0.0005$       & $0.0005$\\
					Nesterov                           & N              & N            & N             & N             & Y              & Y\\
					Batch Size                         & $128$          & $64$         & $64$          & $128$         & $128$          & $128$\\
					Epochs                             & $125$          & $100$        & $100$         & $100$         & $100$          & $100$\\
					\midrule
					Inner Optimization                 & FGSM           & FGSM         & PGD           & PGD           & PGD            & PGD\\
					$\epsilon$                         & $8/255$        & $8/255$      & $8/255$       & $8/255$       & $8/255$        & $8/255$\\
					Step Size                          & $8/255$        & $8/255$      & $2/255$       & $2/255$       & $2/255$        & $2/255$\\
					Number of Steps (Repeats)          & $8$            & $8$          & $5$           & $5$           & $10$           & $10$\\
					\bottomrule
				\end{tabular}
		\end{small}
	\end{center}
	
\end{table}

\begin{table}[htp]
	\caption{Hyperparameters of defense methods for training SVHN~\citep{netzer2011reading} classifiers.
		Numbers 1 and 2 correspond to Wide-ResNet-32~\citep{zagoruyko2016wresnet} and ResNet-50~\citep{he2016deep} architectures, respectively.}
	\label{tab:ap:defense_details_svhn}

	\begin{center}
		\begin{small}
				\begin{tabular}{lcccccc}
					\toprule
					Classifier                         & Free-1         & Free-2       & Fast-1        & Fast-2        & RotNet-1       & RotNet-2\\ 
					\midrule
					Optimizer                          & SGD            & SGD          & SGD           & SGD           & SGD            & SGD\\
					lr                                 & $0.0001$       & $0.01$       & $0.21$        & $0.21$        & $0.1$          & $0.1$\\
					Momentum                           & $0.9$          & $0.9$        & $0.9$         & $0.9$         & $0.9$          & $0.9$\\
					Weight Decay                       & $0.0002$       & $0.0002$     & $0.0005$      & $0.0005$      & $0.0005$       & $0.0005$\\
					Nesterov                           & N              & N            & N             & N             & Y              & Y\\
					Batch Size                         & $128$          & $128$        & $64$          & $128$         & $128$          & $128$\\
					Epochs                             & $100$          & $100$        & $100$         & $100$         & $100$          & $100$\\
					\midrule
					Inner Optimization                 & FGSM           & FGSM         & PGD           & PGD           & PGD            & PGD\\
					$\epsilon$                         & $8/255$        & $8/255$      & $8/255$       & $8/255$       & $8/255$        & $8/255$\\
					Step Size                          & $8/255$        & $8/255$      & $2/255$       & $2/255$       & $2/255$        & $2/255$\\
					Number of Steps (Repeats)          & $8$            & $8$          & $5$           & $5$           & $10$           & $10$\\
					\bottomrule
				\end{tabular}
		\end{small}
	\end{center}

\end{table}

\subsection{Hyperparameters of Attack Methods}\label{sec:ap:attack_hyper}

In this part, we present the set of hyperparameters used for each attack method.
For $\mathcal{N}$\textsc{Attack}~\citep{li2019nattack} and AdvFlow, we tune the hyperparameters on a development set so that they result in the best performance for an un-defended CIFAR-10 classifier.
In the case of bandits with time and data-dependent priors~\cite{ilyas2019prior}, we use two sets of hyperparameters tuned for these methods.
For the vanilla classifiers we use the hyperparameters set in~\citep{ilyas2019prior}, while for defended classifiers we use those set in~\citep{moon2019parsimonous}.
For SimBA~\citep{guo2019simba}, we used the hyperparameters set in the official repository. 
We only changed the stride from $7$ to $6$ to allow for the correct computation of block reordering. 
Once set, we keep the hyperparameters fixed throughout the rest of experiments.
Tables~\ref{tab:ap:bandits_details}-\ref{tab:ap:AdvFlow_details} summarize the hyperparameters used for each attack method in our experiments.

\begin{table}[H]
	\caption{Hyperparameters of bandits with time and data-dependent priors~\cite{ilyas2019prior}.}
	\label{tab:ap:bandits_details}

	\begin{center}
		\begin{small}
				\begin{tabular}{lcc}
					\toprule
					Hyperparameter                     & Vanilla              & Defended\\
					\midrule
					OCO learning rate                  & $100$                & $0.1$\\
					Image learning rate                & $0.01$               & $0.01$\\
					Bandit exploration                 & $0.1$                & $0.1$\\
					Finite difference probe            & $0.1$                & $0.1$\\
					Tile size                          & $(6\mathrm{px})^{2}$ & $(4\mathrm{px})^{2}$\\
					\bottomrule
				\end{tabular}
		\end{small}
	\end{center}

\end{table}
\begin{table}[H]
	\caption{Hyperparameters of $\mathcal{N}$\textsc{Attack}~\citep{li2019nattack}.}
	\label{tab:ap:NATTACK_details}

	\begin{center}
		\begin{small}
				\begin{tabular}{lc}
					\toprule
					Hyperparameter          & Value\\
					\midrule
					$\sigma$ (noise std.)   & $0.1$\\
					Sample size             & $20$\\
					Learning rate           & $0.02$\\
					Maximum iteration       & $500$\\
					\bottomrule
				\end{tabular}
		\end{small}
	\end{center}

\end{table}
\begin{table}[H]
	\caption{Hyperparameters of SimBA~\citep{guo2019simba}.}
	\label{tab:ap:SimBA_details}

	\begin{center}
		\begin{small}
			\begin{tabular}{lc}
				\toprule
				Hyperparameter          & Value\\
				\midrule
				$\epsilon$              & $0.2$\\
				Freq. Dimensionality    & $14$\\
				Order                   & Strided\\
				Stride                  & $6$\\
				\bottomrule
			\end{tabular}
		\end{small}
	\end{center}

\end{table}
\begin{table}[H]
	\caption{Hyperparameters of AdvFlow (ours).}
	\label{tab:ap:AdvFlow_details}

	\begin{center}
		\begin{small}
				\begin{tabular}{lc}
					\toprule
					Hyperparameter          & Value\\
					\midrule
					$\sigma$ (noise std.)   & $0.1$\\
					Sample size             & $20$\\
					Learning rate           & $0.02$\\
					Maximum iteration       & $500$\\
					\bottomrule
				\end{tabular}
		\end{small}
	\end{center}

\end{table}

\clearpage
\section{Extended Experimental Results}\label{sec:ap:ext_sim_res}

In this section, we present an extended version of our experimental results.

\subsection{Table of Attack Success Rate and Number of Queries}

Table~\ref{tab:ap:success} presents attack success rate, as well as average and median of the number of queries for AdvFlow alongside bandits~\cite{ilyas2019prior} and $\mathcal{N}$\textsc{Attack}~\cite{li2019nattack}.
In each case, we have also shown the clean data accuracy and success rate of the white-box PGD-100 attack for reference.
Details of classifier training and defense mechanism can be found in Appendix~\ref{sec:ap:defense}.
As can be seen, when it comes to attacking defended models, AdvFlow can outperform the baselines in both the number of queries and attack success rate. 

\subsection{Success Rate vs. Number of Queries}

Figure~\ref{fig:ap_succ_query} shows the success rate of AdvFlow and $\mathcal{N}$\textsc{Attack}~\citep{li2019nattack} as a function of the maximum number of queries for defended models.
As can be seen, given a fixed number of queries, AdvFlow can generate more successful attacks.

\subsection{Confusion Matrices of Transferability}

Figure~\ref{fig:ap_confusion_trans} shows the transferability rate of generated attacks to various classifiers.
Each entry shows the success rate of adversarial examples intended to attack the row-wise classifier in attacking the column-wise classifier.
There are a few points worth mentioning regarding these results:
\begin{itemize}
	\item AdvFlow attacks are more transferable between defended models than vanilla to defended models.
	We argue that the underlying reason is the fact that AdvFlow learns a higher-level perturbation to attack DNNs.
	As a result, since vanilla classifiers use different features than the defended ones, they are less adaptable to attack defended classifiers.
	In contrast, since ${\mathcal{N}}$\textsc{Attack} acts on a pixel level, they are less susceptible to this issue.
	\item  Generally, adversarial examples generated by AdvFlow are more transferable between different architectures than ${\mathcal{N}}$\textsc{Attack}.
	The same argument as in our previous point applies here.
	\item Transferability of black-box attacks is not as important as in the white-box setting.
	The reason is that in the case of black-box attacks, since no assumption is made about the model architecture, we can try to generate new adversarial examples to attack a new target classifier.
	However, for white-box attacks, transferability is somehow related to their success in attacking previously unseen target networks.
	Thus, it is essential to have a high rate of transferability if a white-box attack is meant to be deployed in real-world situations where often, we do not have any access to internal nodes of a classifier.
\end{itemize}

\subsection{Samples of Adversarial Examples}

Figure~\ref{fig:samples} shows samples of adversarial examples generated by AdvFlow and $\mathcal{N}$\textsc{Attack}~\citep{li2019nattack}, intended to attack a vanilla Wide-ResNet-32~\citep{zagoruyko2016wresnet}.
As the images show, AdvFlow can generate adversarial perturbations that often take the shape of the original data.
This property makes AdvFlows less detectable to adversarial example detectors.
In contrast, it is clear that the perturbations generated by $\mathcal{N}$\textsc{Attack} come from a different distribution than the data itself.
As a result, they can be detected easily by adversarial example detectors.  

\clearpage

\begin{sidewaystable}[tb!]
	\caption{Attack success rate, average and median of the number of queries to generate an adversarial example for CIFAR-10~\citep{krizhevsky2009learning} and SVHN~\citep{netzer2011reading}.
		For a fair comparison, we first find the samples where all the attack methods are successful, and then compute the average (median) of queries for these samples.
		Note that for $\mathcal{N}$\textsc{Attack} and AdvFlow we check whether we arrived at an adversarial point every $200$ queries, and hence, the medians are multiples of $200$.
		Clean data accuracy and PGD-100 attack success rate are also shown for reference.
	    All attacks are with respect to $\ell_{\infty}$ norm with $\epsilon_{\max}=8/255$.}
	\label{tab:ap:success}
	\begin{center}
		\begin{small}
				\begin{tabular}{cccccccc}
					\toprule
					\parbox[t]{2mm}{\multirow{2}{*}{\rotatebox[origin=c]{90}{Arch}}} & \parbox[t]{2mm}{\multirow{2}{*}{\rotatebox[origin=c]{90}{Data}}} &&Attack & PGD-100 & \multicolumn{3}{c}{Bandits~\cite{ilyas2019prior} / $\mathcal{N}$\textsc{Attack}~\cite{li2019nattack} / SimBA~\cite{guo2019simba} / AdvFlow (ours)}\\
					\cmidrule(lr){6-8}
					&&Defense & Clean Acc.(\%) & {Success Rate(\%) \textuparrow} & \multicolumn{1}{c}{Success Rate(\%) \textuparrow}   & \multicolumn{1}{c}{Avg. of Queries \textdownarrow} & \multicolumn{1}{c}{Med. of Queries \textdownarrow}\\
					\cmidrule(lr){1-8}
					\parbox[t]{2mm}{\multirow{8}{*}{\rotatebox[origin=c]{90}{WideResNet32~\citep{zagoruyko2016wresnet}}}}
					&\parbox[t]{2mm}{\multirow{4}{*}{\rotatebox[origin=c]{90}{CIFAR-10}}}
					&Vanilla                               & $91.77$    & $100$   & $98.81$ / $\mathbf{100}$ / $99.99$ / $99.42$    & $552.69$	/ $\mathbf{237.58}$ / $237.70$ / $949.31$ & $182$ / $200$ / $\mathbf{126}$ / $400$\\
					&&FreeAdv~\citep{shafahi2019free}      & $81.29$    & $47.52$ & $37.12$ / $38.97$ / $35.52$ / $\mathbf{41.21}$  & $1062.70$	/ $874.91$ / $463.09$ / $\mathbf{421.63}$ & $354$ / $400$ / $244$ / $\mathbf{200}$\\
					&&FastAdv~\citep{wong2020fast}         & $86.33$    & $46.37$ & $36.60$ / $36.90$ / $35.07$ / $\mathbf{40.22}$  & $1065.92$	/ $973.05$ / $\mathbf{428.81}$ / $436.80$ & $358$ / $400$ / $234$ / $\mathbf{200}$\\
					&&RotNetAdv~\citep{hendrycks2019using} & $86.58$    & $46.59$ & $37.73$ / $38.04$ / $35.63$ / $\mathbf{40.67}$  & $1085.43$	/ $941.67$ / $471.99$ / $\mathbf{424.95}$ & $408$ / $400$ / $259$ / $\mathbf{200}$\\
					\cmidrule(lr){2-8}
					&\parbox[t]{2mm}{\multirow{4}{*}{\rotatebox[origin=c]{90}{SVHN}}} 
					&Vanilla                               & $96.45$    & $99.81$ & $87.84$	/ $\mathbf{98.76}$ / $97.26$ / $90.31$ & $1750.65$ / $408.75$ /	$\mathbf{202.07}$ /	$1572.24$ & $1128$ / $200$ / $\mathbf{107}$ / $600$\\
					&&FreeAdv~\citep{shafahi2019free}      & $86.47$    & $57.22$ & $49.64$	/ $50.28$ /	$46.28$	/ $\mathbf{50.76}$ & $819.98$ / $903.12$ / $\mathbf{365.42}$ / $692.73$   & $250$ / $400$ / $216$ / $\mathbf{200}$\\
					&&FastAdv~\citep{wong2020fast}         & $93.90$    & $46.76$ & $40.43$	/ $35.42$ /	$36.19$	/ $\mathbf{41.49}$ & $755.23$ / $1243.38$ /	$\mathbf{307.73}$ / $526.37$  & $284$ / $600$ / $216$ / $\mathbf{200}$\\
					&&RotNetAdv~\citep{hendrycks2019using} & $90.33$    & $48.67$ & $43.47$	/ $41.49$ /	$39.01$	/ $\mathbf{44.22}$ & $663.07$ / $756.48$ / $\mathbf{319.93}$ / $480.02$   & $202$ / $400$ / $\mathbf{186}$ / $200$\\
					\cmidrule(lr){1-8}
					\parbox[t]{2mm}{\multirow{8}{*}{\rotatebox[origin=c]{90}{ResNet50~\citep{he2016deep}}}}
					&\parbox[t]{2mm}{\multirow{4}{*}{\rotatebox[origin=c]{90}{CIFAR-10}}}
					&Vanilla                               & $91.75$    & $100$   & $96.75$ / $99.85$ / $\mathbf{99.96}$ / $99.37$  & $795.28$ / $\mathbf{252.13}$ / $286.05$ / $1051.18$ & $280$ / $200$ / $\mathbf{163}$ / $600$\\
					&&FreeAdv~\citep{shafahi2019free}      & $75.17$    & $54.54$ & $45.64$ / $46.49$ / $43.14$ / $\mathbf{49.46}$  & $842.56$ / $836.81$ / $383.56$ / $\mathbf{371.81}$  & $248$ / $400$ / $206$ / $\mathbf{200}$\\
					&&FastAdv~\citep{wong2020fast}         & $79.09$    & $53.45$ & $45.20$ / $45.19$ / $43.57$ / $\mathbf{49.08}$  & $891.54$ / $901.44$ / $374.58$ / $\mathbf{359.21}$  & $248$ / $400$ / $\mathbf{184}$ / $200$\\
					&&RotNetAdv~\citep{hendrycks2019using} & $76.39$    & $52.04$ & $45.80$ / $46.41$ / $42.65$ / $\mathbf{50.10}$  & $826.60$ / $774.24$ / $376.30$ / $\mathbf{292.74}$  & $232$ / $400$ / $\mathbf{184}$ / $200$\\
					\cmidrule(lr){2-8}
					&\parbox[t]{2mm}{\multirow{4}{*}{\rotatebox[origin=c]{90}{SVHN}}}
					&Vanilla                               & $96.23$    & $99.38$ & $92.63$	/ $\mathbf{96.73}$ / $93.14$ / $83.67$ & $1338.30$ / $487.32$ / $\mathbf{250.02}$ / $1749.48$ & $852$ / $200$ /$\mathbf{126}$ / $800$\\
					&&FreeAdv~\citep{shafahi2019free}      & $87.67$    & $46.50$ & $42.27$	/ $43.99$ / $39.83$ / $\mathbf{44.66}$ & $793.30$ /	$703.76$ / $\mathbf{327.30}$ / $565.2$    & $\mathbf{198}$ / $400$ / $207$ / $200$\\
					&&FastAdv~\citep{wong2020fast}         & $92.67$    & $50.25$ & $43.26$	/ $36.99$ / $38.98$ / $\mathbf{45.11}$ & $739.40$ /	$1255.24$ / $\mathbf{286.71}$ / $436.83$  & $294$ / $600$ / $202$ / $\mathbf{200}$\\
					&&RotNetAdv~\citep{hendrycks2019using} & $90.15$    & $48.30$ & $43.17$	/ $40.37$ / $39.00$ / $\mathbf{43.96}$ & $660.81$ /	$891.44$ / $\mathbf{312.47}$ / $497.74$   & $\mathbf{190}$ / $400$ / $195$ / $200$\\
					\bottomrule
				\end{tabular}
		\end{small}
	\end{center}
	\vskip -0.1in
\end{sidewaystable}

\clearpage
\begin{sidewaysfigure}[tb!]
	\centering
	\begin{subfigure}{.45\textwidth}
		\centering
		\includegraphics[width=0.9\textwidth]{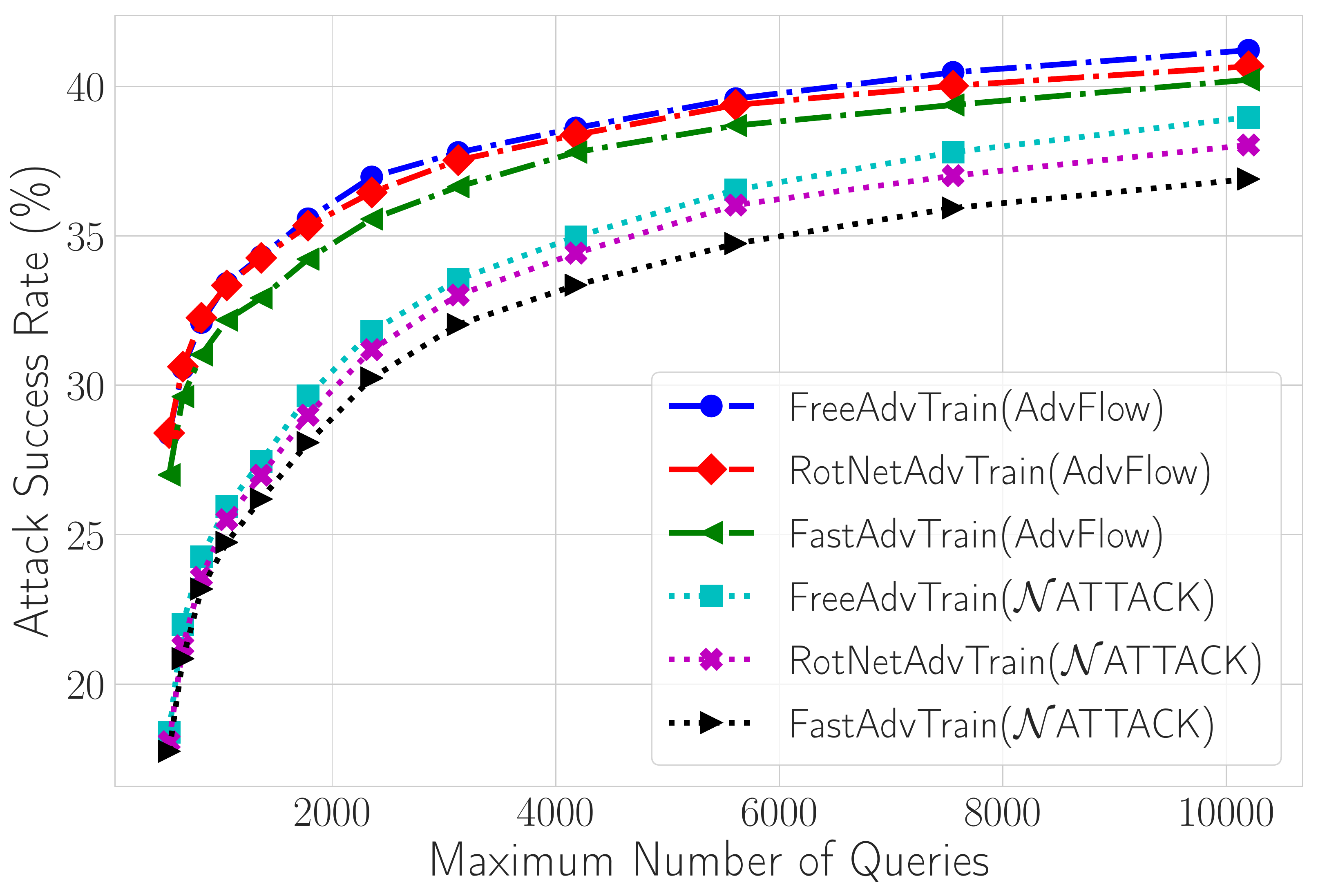}
		\caption*{{CIFAR-10 (Wide-ResNet-32)}}
	\end{subfigure}
	\begin{subfigure}{.45\textwidth}
		\centering
		\includegraphics[width=0.9\textwidth]{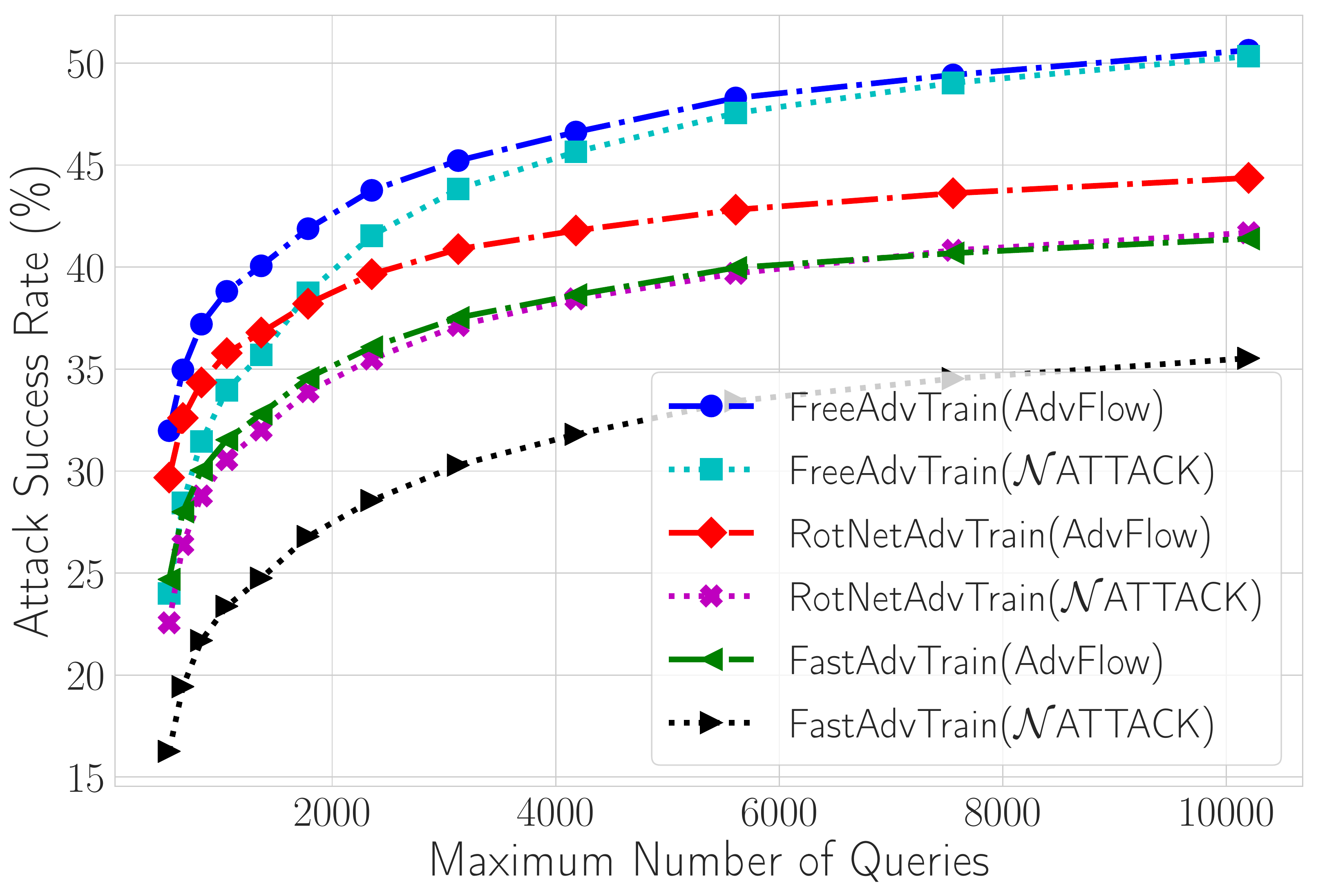}
		\caption*{{SVHN (Wide-ResNet-32)}}
	\end{subfigure}\\\vspace*{2em}
	\begin{subfigure}{.45\textwidth}
		\centering
		\includegraphics[width=0.9\textwidth]{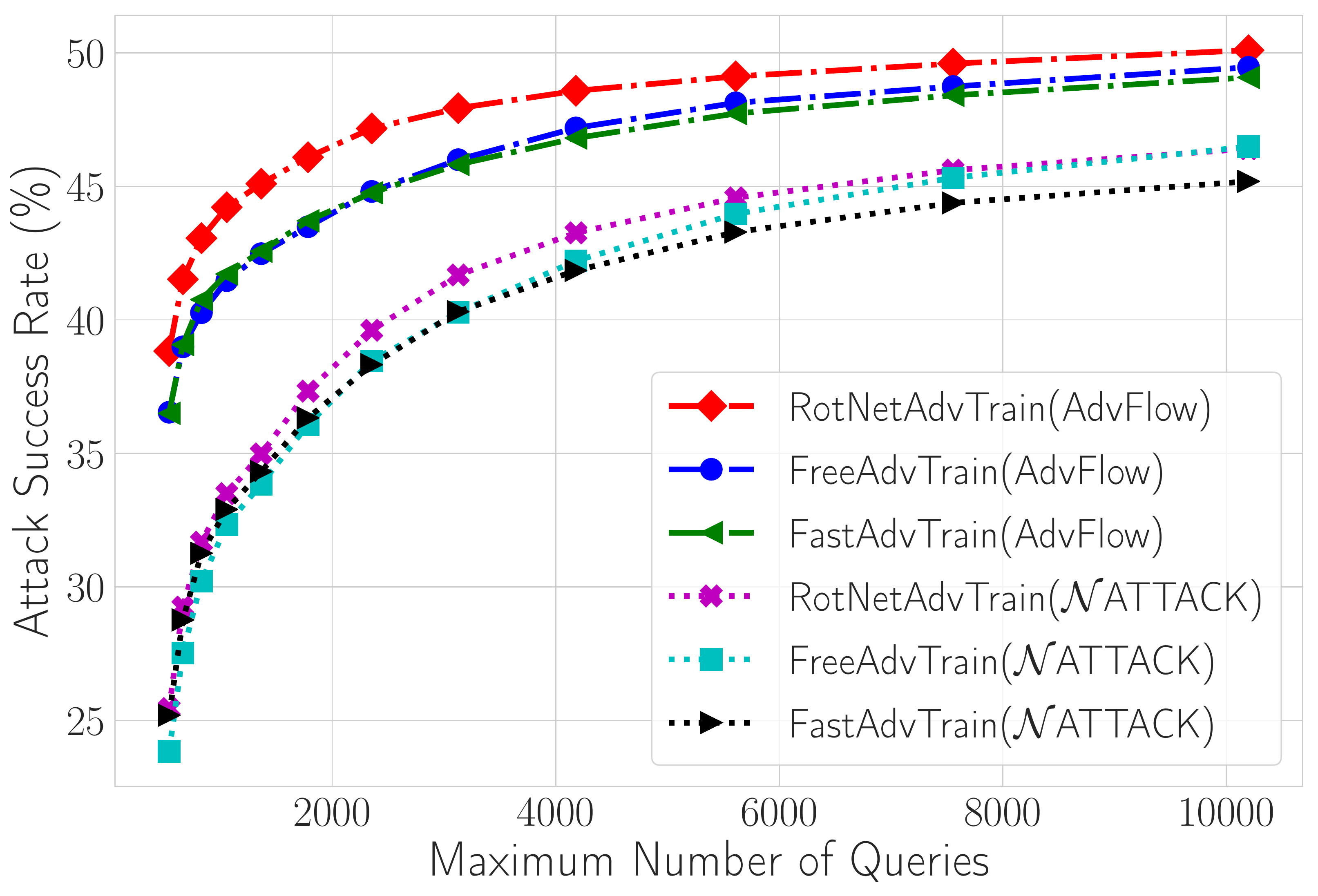}
		\caption*{{CIFAR-10 (ResNet-50)}}
	\end{subfigure}
	\begin{subfigure}{.45\textwidth}
		\centering
		\includegraphics[width=0.9\textwidth]{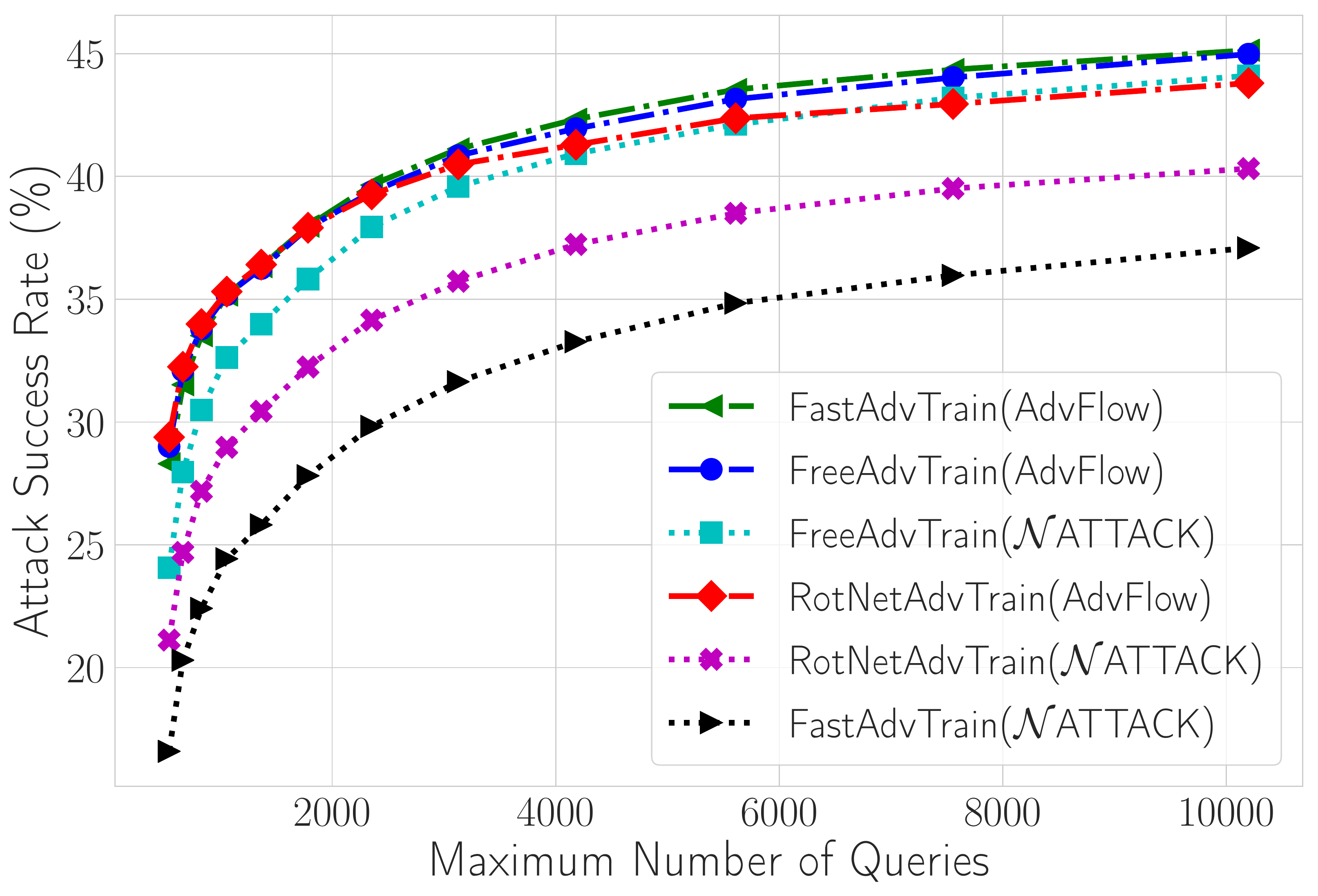}
		\caption*{{SVHN (ResNet-50)}}
	\end{subfigure}\vspace*{2em}
	\caption{Success rate vs. maximum number of queries to attack CIFAR-10~\cite{krizhevsky2009learning} and SVHN~\cite{netzer2011reading} classifiers with Wide-ResNet-32~\cite{zagoruyko2016wresnet} and ResNet-50~\cite{he2016deep} architectures.}
	\label{fig:ap_succ_query}
\end{sidewaysfigure}

\begin{figure*}[htp]
	\centering
	\begin{subfigure}{.50\textwidth}
		\centering
		\includegraphics[width=1.0\linewidth]{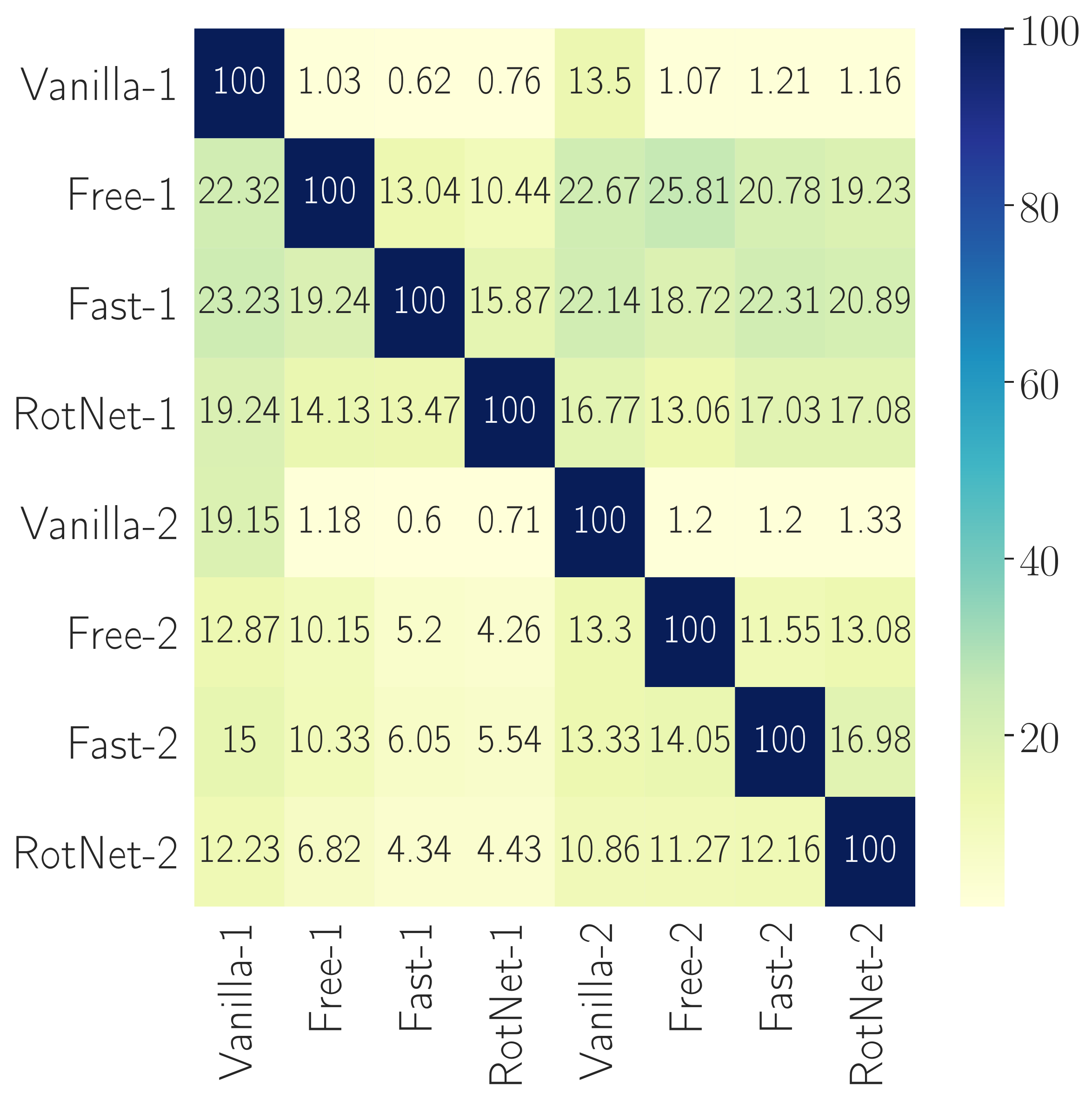}
		\caption*{CIFAR-10 (${\mathcal{N}}$\textsc{Attack})}
	\end{subfigure}\hspace*{1.5em}
	\begin{subfigure}{.50\textwidth}
		\centering
		\includegraphics[width=1.0\linewidth]{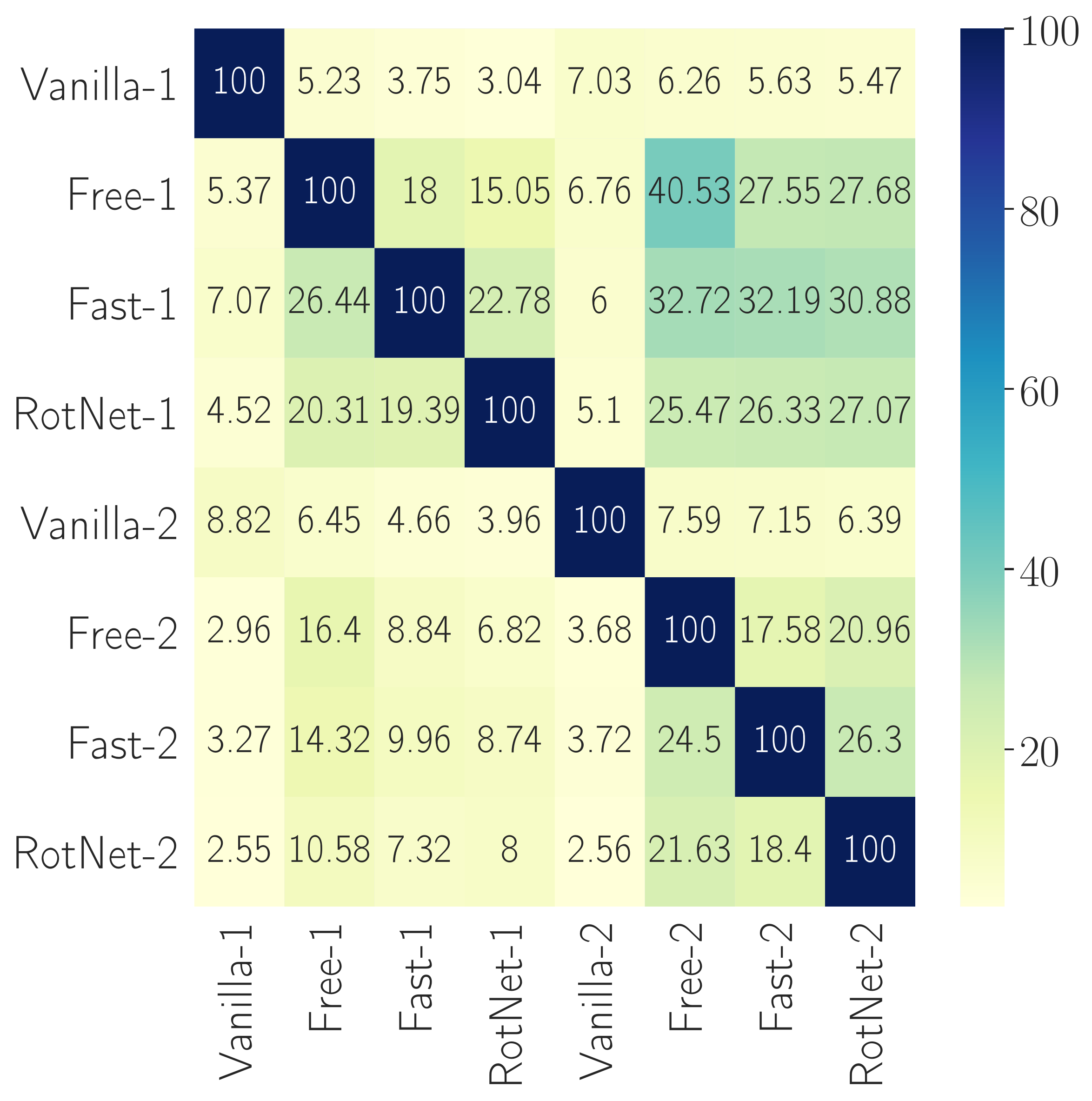}
		\caption*{CIFAR-10 (AdvFlow)}
	\end{subfigure}\\\vspace*{1em}
	\begin{subfigure}{.50\textwidth}
		\centering
		\includegraphics[width=1.0\linewidth]{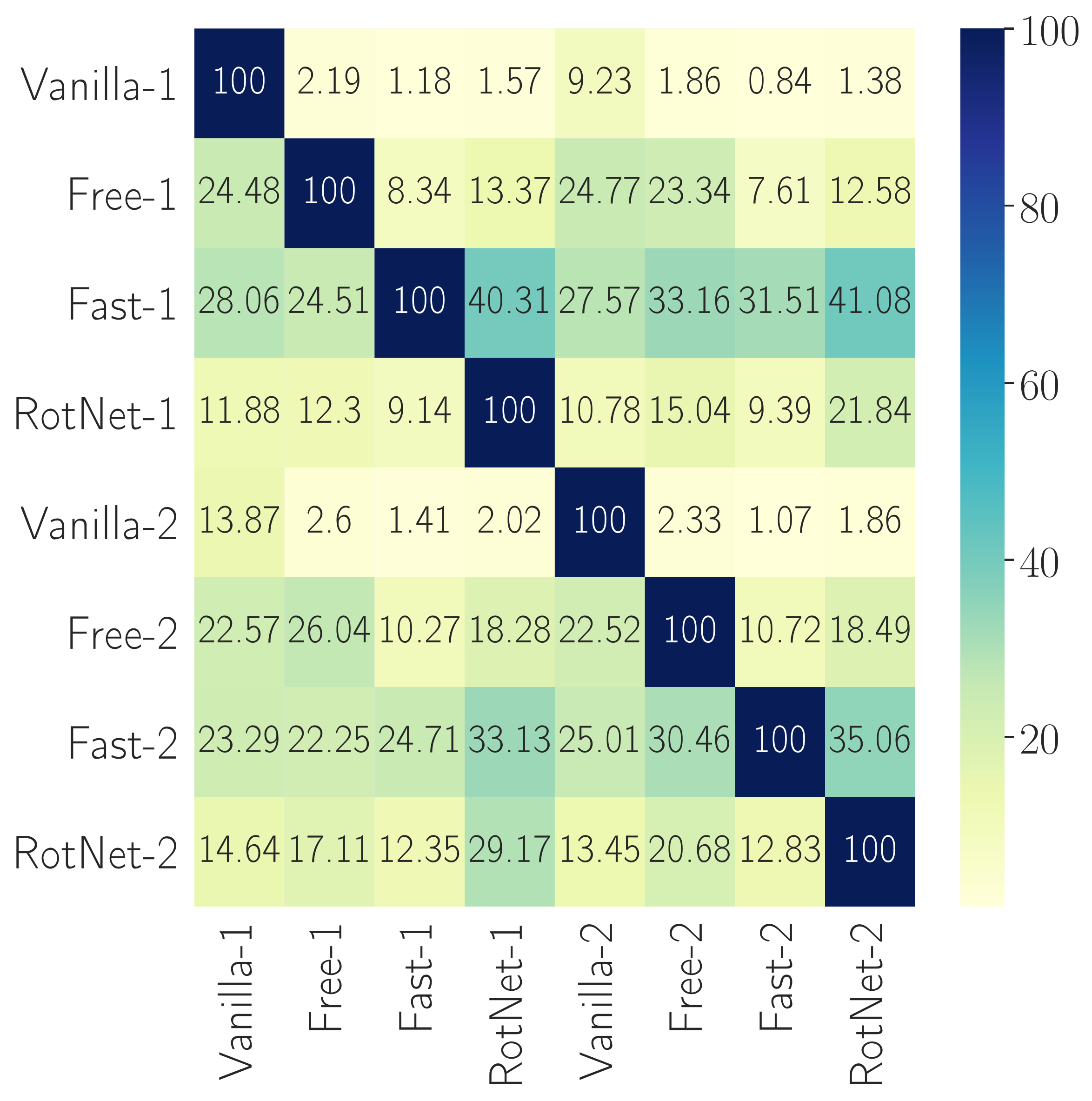}
		\caption*{{SVHN (${\mathcal{N}}$\textsc{Attack})}}
	\end{subfigure}\hspace*{1.5em}
	\begin{subfigure}{.50\textwidth}
		\centering
		\includegraphics[width=1.0\linewidth]{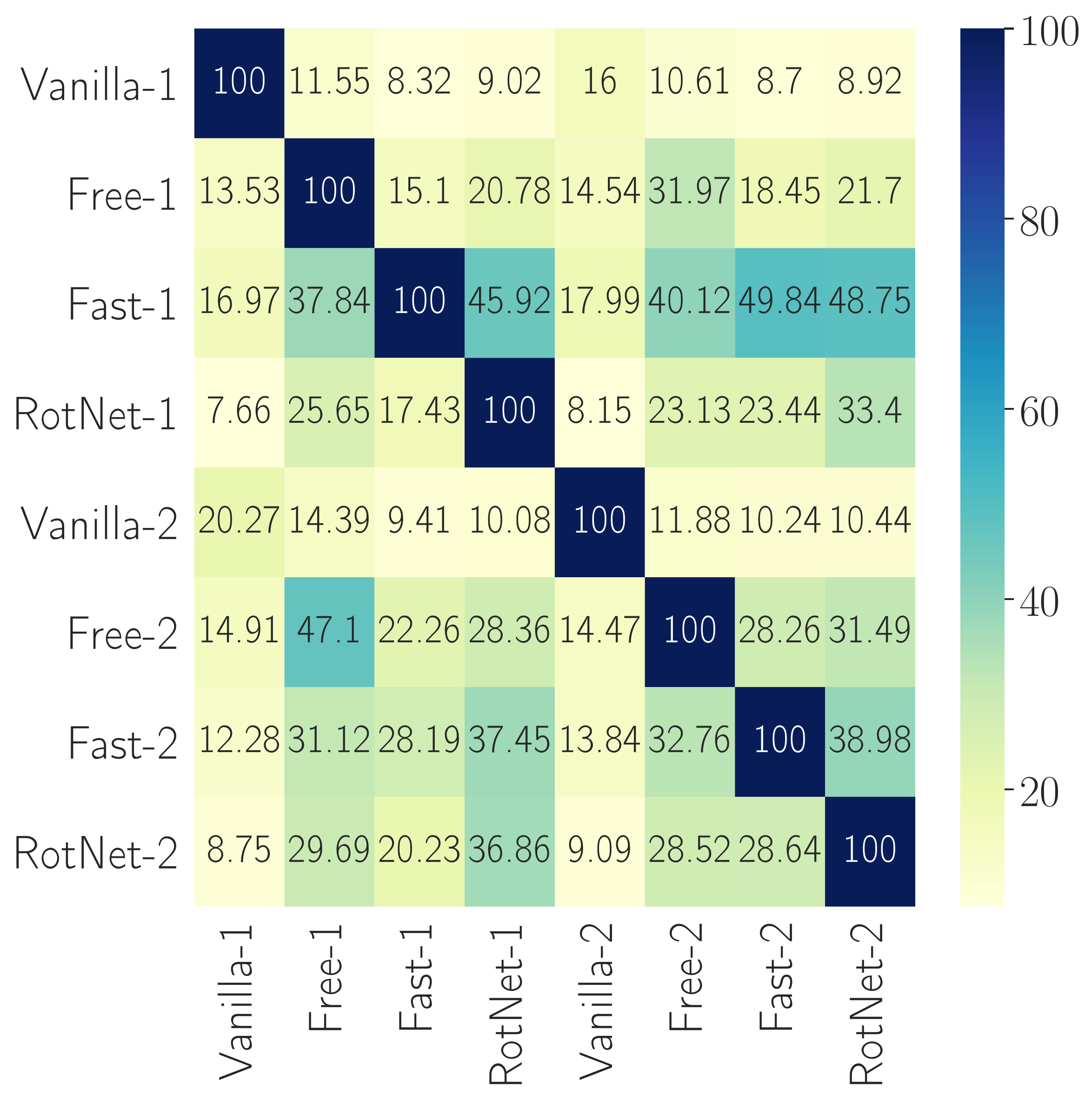}
		\caption*{{SVHN (AdvFlow)}}
	\end{subfigure}
	\caption{Confusion matrix of transferability for adversarial attacks generated by AdvFlow (ours) and ${\mathcal{N}}$\textsc{Attack}~\cite{li2019nattack}.
		Each entry shows the success rate of adversarial examples originally generated for the row-wise classifier to attack the column-wise model.
		Also, the numbers 1 and 2 in the name of each classifier indicates whether it has a Wide-ResNet-32~\citep{zagoruyko2016wresnet} or ResNet-50~\citep{he2016deep} architecture, respectively.}
	\label{fig:ap_confusion_trans}
\end{figure*}

\begin{figure}[htp]
	\centering
	\begin{subfigure}{.5\textwidth}
		\centering
		\includegraphics[width=1.0\textwidth]{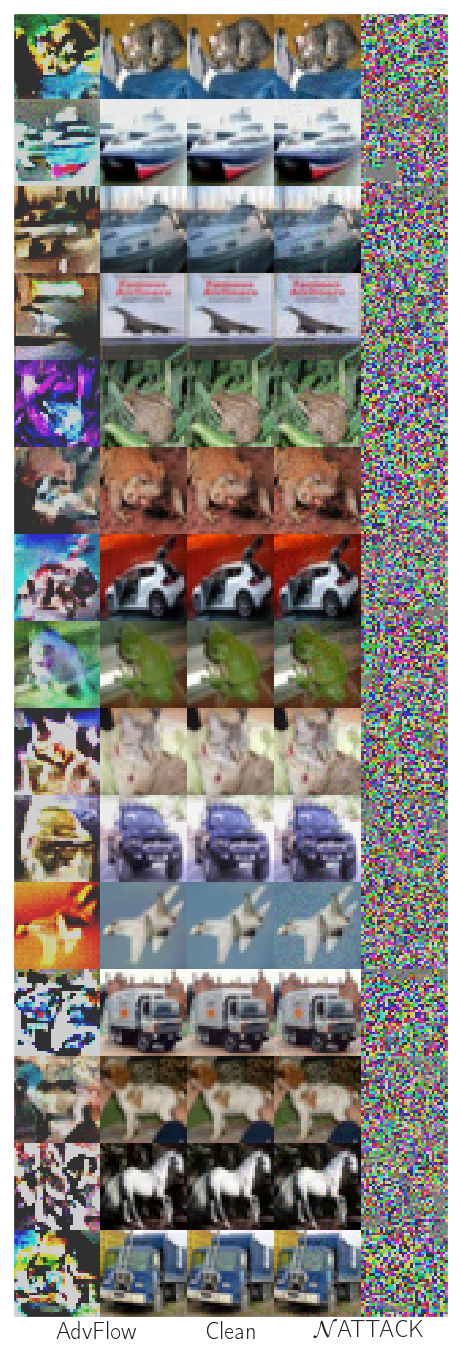}
		\caption*{CIFAR-10~\cite{krizhevsky2009learning}}
	\end{subfigure}\hspace*{0.2em}
	\begin{subfigure}{.5\textwidth}
		\centering
		\includegraphics[width=1.0\textwidth]{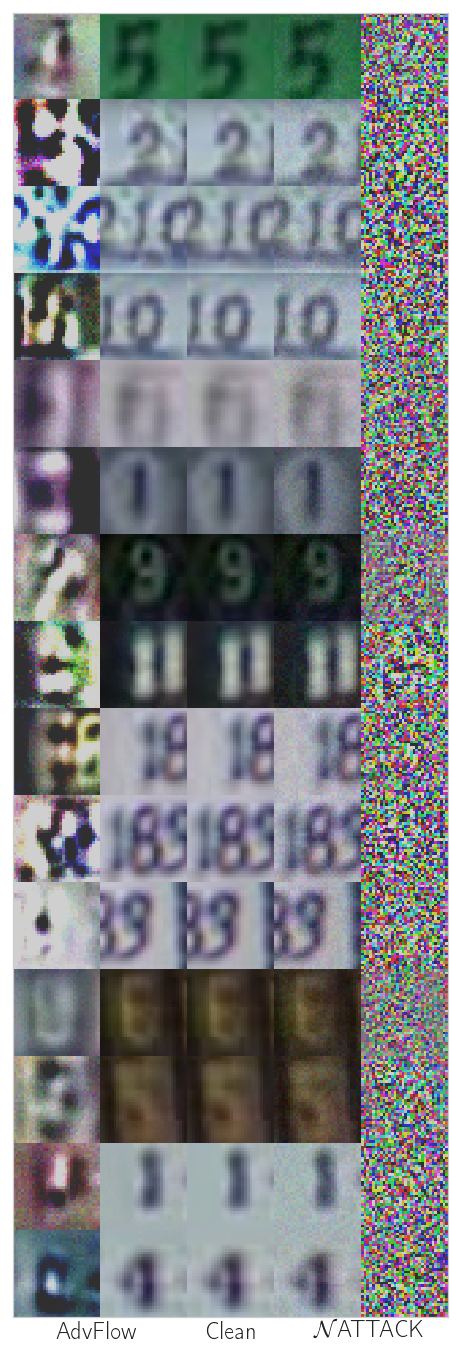}
		\caption*{SVHN~\cite{netzer2011reading}}
	\end{subfigure}
	\caption{Magnified difference and adversarial examples generated by AdvFlow (ours) and $\mathcal{N}$\textsc{Attack}~\cite{li2019nattack} alongside the clean data.
		As can be seen, the adversaries generated by AdvFlow are better disguised in the data, while $\mathcal{N}$\textsc{Attack}~\cite{li2019nattack} look noisy (better viewed in digital format).}
	\label{fig:samples}
\end{figure}
\clearpage
\subsection{Training Data and Its Effects}\label{sec:ap:training_data}

In the closing paragraph of Section~\ref{sec:sec:problem_statement} we discussed that while we are using the same training data as the classifier for our flow-based models, it does not have any effect on the performance of generated adversarial examples.
This claim is valid since we did not explicitly use this data to either generate adversarial examples or learn their features by which the classifier intends to distinguish them.
To support our claim, we replicate the experiments of Table~\ref{tab:success} for the case of CIFAR-10~\cite{krizhevsky2009learning}.
This time, however, we select $9000$ of the test data and pre-train our normalizing flow on them.
Then, we evaluate the performance of AdvFlow on the remaining $1000$ test data.
We then report the attack success rate and the average number of queries in Table~\ref{tab:ap:data_1000_9000} for this new scenario (Scenario 2) vs. the original case (Scenario 1).

As can be seen, the performance does not change in general.
The little differences between the two cases come from the fact that in Scenario 1, we had $50000$ training data to train our flow-based model, while in Scenario 2 we only trained our model on $9000$ training data.  
Also, here we are evaluating AdvFlow performance on $1000$ test data in contrast to the whole $10000$ test images used in assessing Scenario 1.
Finally, it should be noted that we get even a more balanced relative performance for the fair case where we split the training data $50/50$ between classifier and flow-based model.
However, since the performance of the classifier drops in this case, we only report the unfair situation here.

\begin{table*}[htp]
	\caption{Attack success rate and average (median) of the number of queries to generate an adversarial example for CIFAR-10~\citep{krizhevsky2009learning}.
		Scenario 1 corresponds to the case where we use the whole CIFAR-10 training data to train our normalizing flow.
		Scenario 2 indicates the experiment in which we train our flow-based model on $9000$ images from CIFAR-10 test data.  
		The architecture of the classifier in all of the cases is Wide-ResNet-32~\citep{zagoruyko2016wresnet}.
		Also, all attacks are with respect to $\ell_{\infty}$ norm with $\epsilon_{\max}=8/255$.}
	\label{tab:ap:data_1000_9000}
	\begin{center}
		\begin{small}
				\begin{tabular}{cccccc}
					\toprule
					\parbox[t]{2mm}{\multirow{2}{*}{\rotatebox[origin=c]{90}{Data}}}
					&                             & \multicolumn{2}{c}{Success Rate(\%) \textuparrow}   & \multicolumn{2}{c}{Avg. (Med.) of Queries \textdownarrow}\\
					\cmidrule(lr){3-4}\cmidrule(lr){5-6}
					&Defense                      & Scenario 1              & Scenario 2   & Scenario 1                                & Scenario 2\\
					\midrule
					\parbox[t]{2mm}{\multirow{4}{*}{\rotatebox[origin=c]{90}{CIFAR-10}}}
					&Vanilla                      & $99.42$                 & $98.91$      & $950.07$ ($400$)                          &  $949.78$ ($400$)\\
					&FreeAdv                      & $41.21$                 & $40.22$      & $923.58$ ($200$)                          &  $962.31$ ($200$)\\
					&FastAdv                      & $40.22$                 & $40.93$      & $963.77$ ($200$)                          &  $1114.68$ ($200$)\\
					&RotNetAdv                    & $40.67$                 & $40.32$      & $880.86$ ($200$)                          &  $876.57$ ($400$)\\
					\bottomrule
				\end{tabular}
		\end{small}
	\end{center}

\end{table*}

More interestingly, we observe that in case we train our flow-based model on some similar dataset to the original one, we can still get an acceptable relative performance.
More specifically, we train our flow-based model on CIFAR-100~\citep{krizhevsky2009learning} dataset instead of CIFAR-10~\citep{krizhevsky2009learning}.
Then, we perform AdvFlow on the test data of CIFAR-10~\citep{krizhevsky2009learning}.
We know that despite being visually similar, these two datasets have their differences in terms of classes and samples per class.

Table~\ref{tab:ap:cifar_100} shows the performance of AdvFlow in this case where it is pre-trained on CIFAR-100 instead of CIFAR-10.
As the results indicate, we can achieve a competitive performance despite our model being trained on a slightly different dataset.
Furthermore, a few adversarial examples from this model are shown in Figure~\ref{fig:cifar100_samples}.
We see that the perturbations are still more or less taking the shape of the data.

\begin{table*}[htp]
	\caption{Attack success rate and average (median) of the number of queries to generate an adversarial example for CIFAR-10~\citep{krizhevsky2009learning} test data.
		The \textit{train data} row shows the data that is used for training the normalizing flow part of AdvFlow. 
		The architecture of the classifier in all of the cases is Wide-ResNet-32~\citep{zagoruyko2016wresnet}.
		Also, all attacks are with respect to $\ell_{\infty}$ norm with $\epsilon_{\max}=8/255$.}
	\label{tab:ap:cifar_100}
	\begin{center}
		\begin{small}
				\begin{tabular}{cccccc}
					\toprule
					\parbox[t]{2mm}{\multirow{2}{*}{\rotatebox[origin=c]{90}{Test}}}
					&                             & \multicolumn{2}{c}{Success Rate(\%) \textuparrow}   & \multicolumn{2}{c}{Avg. (Med.) of Queries \textdownarrow}\\
					\cmidrule(lr){3-4}\cmidrule(lr){5-6}
					&Def/Train Data               & CIFAR-10                & CIFAR-100    & CIFAR-10                                  & CIFAR-100\\
					\midrule
					\parbox[t]{2mm}{\multirow{4}{*}{\rotatebox[origin=c]{90}{CIFAR-10}}}
					&Vanilla                      & $99.42$                 & $98.72$      & $950.07$ ($400$)                          &  $1198.03$ ($600$)\\
					&FreeAdv                      & $41.21$                 & $39.95$      & $923.58$ ($200$)                          &  $955.05$ ($200$)\\
					&FastAdv                      & $40.22$                 & $38.83$      & $963.77$ ($200$)                          &  $1017.66$ ($200$)\\
					&RotNetAdv                    & $40.67$                 & $39.28$      & $880.86$ ($200$)                          &  $910.55$ ($200$)\\
					\bottomrule
				\end{tabular}
		\end{small}
	\end{center}

\end{table*} 
 
 \begin{figure}[htp]
 	\centering
 	\includegraphics[width=0.5\textwidth]{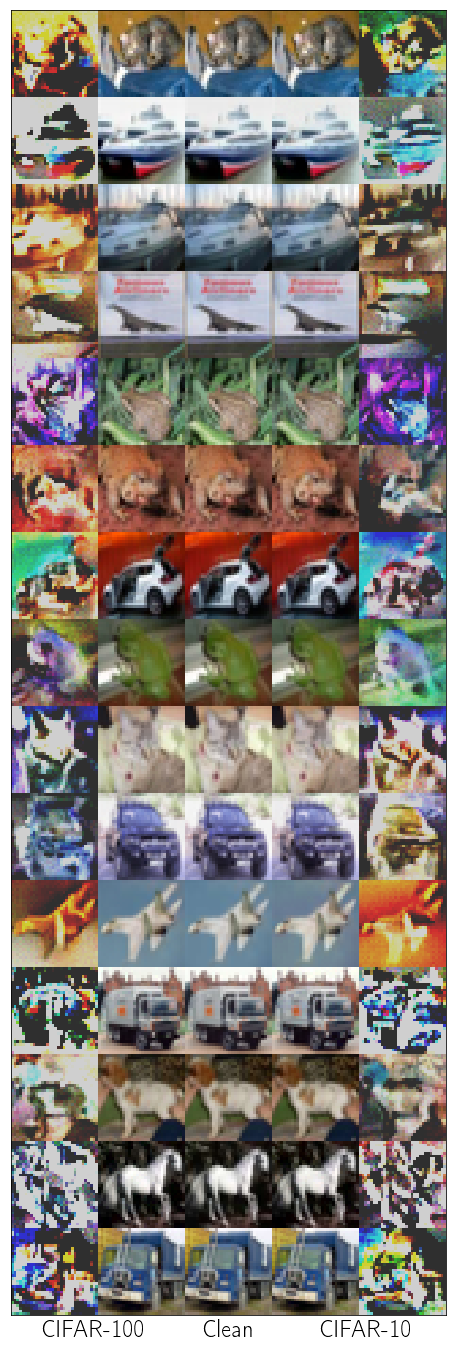}
 	\caption{Magnified difference and adversarial examples generated by AdvFlow alongside the clean data using flow-based models trained on CIFAR-100~\citep{krizhevsky2009learning} (left) and CIFAR-10~\citep{krizhevsky2009learning} (right).}
 	\label{fig:cifar100_samples}
 \end{figure}

\section{AdvFlow and Its Variations}\label{sec:ap:algs}

\subsection{AdvFlow}\label{sec:ap:advflow}

Algorithm~\ref{alg:advflow} summarizes the main adversarial attack approach introduced in this paper.
 
\begin{algorithm}[htp] 
	\caption{AdvFlow for inconspicuous black-box adversarial attacks\label{alg:advflow}} 
	\textbf{Input}: Clean data~${\mathbf{x}}$, true label~${y}$, pre-trained flow-based model~${\mathbf{f}(\cdot)}$.
	\\
	\textbf{Output}: Adversarial example~${\mathbf{x}'}.$
	\\
	\textbf{Parameters}: noise variance~${\sigma^2}$, learning rate~${\alpha}$, population size~${n_{p}}$, maximum number of queries~${Q}$.
	\begin{algorithmic}[1]
		\State Initialize ${\boldsymbol{\mu}}$ randomly.
		\State Compute ${\mathbf{z}_{clean} = \mathbf{f}^{-1}(\mathbf{x})}$.
		\For {$q=1,2,\ldots, \lfloor Q/n_p\rfloor$}
		\State Draw ${n_p}$ samples from ${\boldsymbol{\delta}_z}=\boldsymbol{\mu} + \sigma \boldsymbol{\epsilon}$ where $\boldsymbol{\epsilon}\sim\mathcal{N}(\boldsymbol{\epsilon}|\mathbf{0}, I)$.
		\State Set ${\mathbf{z}_k = \mathbf{z}_{clean} + {\boldsymbol{\delta}_z}_k}$ for all ${k=1,\ldots, n_p}$.
		\State Calculate ${\mathcal{L}_k = \mathcal{L}\big(\mathrm{proj}_{\mathcal{S}}\big(\mathbf{f}(\mathbf{z}_k)\big)\big)}$ for all ${k=1,\ldots, n_p}$.
		\State Normalize ${\hat{\mathcal{L}}_k = \big(\mathcal{L}_k - \mathrm{mean}(\boldsymbol{\mathcal{L}})\big)/\mathrm{std}(\boldsymbol{\mathcal{L}})}$.
		\State Compute ${\nabla_{\boldsymbol{\mu}}J(\boldsymbol{\mu}, \sigma) = \tfrac{1}{n_p}\sum_{k=1}^{n_p}\hat{\mathcal{L}}_k \boldsymbol{\epsilon}_{k}}$.
		\State Update ${\boldsymbol{\mu} \leftarrow \boldsymbol{\mu} - \alpha \nabla_{\boldsymbol{\mu}}J(\boldsymbol{\mu}, \sigma)}$.
		\EndFor
		\State Output ${\mathbf{x}'} = \mathrm{proj}_{\mathcal{S}}\big(\mathbf{f}(\mathbf{z}_{clean} + \boldsymbol{\mu})\big)$.
	\end{algorithmic} 
\end{algorithm}

\subsection{Greedy AdvFlow~\cite{dolatabadi2020greedy}}\label{sec:ap:greedy_advflow}

We can modify Algorithm~\ref{alg:advflow} so that it stops upon reaching a data point where it is adversarial.
To this end, we only have to actively check whether we have generated a data sample for which the C\&W cost of Eq.~\eqref{eq:CW_loss} is zero or not.
Besides, instead of using all of the generated samples to update the mean of the latent Gaussian ${\boldsymbol{\delta}_z}$, we can select the top-$K$ for which the C\&W loss is the lowest.
Then, we update the mean by taking the average of these latent space data points.
Applying these changes, we get a new algorithm coined \textit{Greedy AdvFlow}.
This approach is given in Algorithm~\ref{alg:advflow_greedy}.
 
\begin{algorithm}[htp] 
	\caption{Greedy AdvFlow for inconspicuous black-box adversarial attacks\label{alg:advflow_greedy}} 
	\textbf{Input}: Clean data~${\mathbf{x}}$, true label~${y}$, pre-trained flow-based model~${\mathbf{f}(\cdot)}$.
	\\
	\textbf{Output}: Adversarial example~${\mathbf{x}'}.$
	\\
	\textbf{Parameters}: noise variance~${\sigma^2}$, voting population~${K}$, population size~${n_{p}}$, maximum number of queries~${Q}$.
	\begin{algorithmic}[1]
		\State Initialize ${\boldsymbol{\mu}}$ randomly.
		\State Compute ${\mathbf{z}_{clean} = \mathbf{f}^{-1}(\mathbf{x})}$.
		\For {$q=1,2,\ldots, \lfloor Q/n_p\rfloor$}
		\State Draw ${n_p}$ samples from ${\boldsymbol{\delta}_z}=\boldsymbol{\mu} + \sigma \boldsymbol{\epsilon}$ where $\boldsymbol{\epsilon}\sim\mathcal{N}(\boldsymbol{\epsilon}|\mathbf{0}, I)$.
		\State Set ${\mathbf{z}_k = \mathbf{z}_{clean} + {\boldsymbol{\delta}_z}_k}$ for all ${k=1,\ldots, n_p}$.
		\State Calculate ${\mathcal{L}_k = \mathcal{L}\big(\mathrm{proj}_{\mathcal{S}}\big(\mathbf{f}(\mathbf{z}_k)\big)\big)}$ for all ${k=1,\ldots, n_p}$.
		\If {any $\mathcal{L}_k$ becomes $0$}:
			\State Output the ${\mathbf{x}'} = \mathrm{proj}_{\mathcal{S}}\big(\mathbf{f}(\mathbf{z}_{k})\big)$ for which $\mathcal{L}_k = 0$ as the adversarial example.
			\State break
		\EndIf
		\State Find the top-$K$ samples ${\mathbf{z}_k}$ with the lowest score ${\mathcal{L}_k}$.
		\State Update ${\boldsymbol{\mu} \leftarrow \tfrac{1}{K}\sum_{k \in \mathrm{top-}K}{\boldsymbol{\delta}_z}_k}$.
		\EndFor
	\end{algorithmic} 
\end{algorithm}

Table~\ref{tab:success_greedy} shows the performance of the proposed method with respect to AdvFlow.
As can be seen, this way we can improve the success rate and number of required queries.
\begin{table*}[htp]
	\caption{Attack success rate and average (median) of the number of queries to generate an adversarial example for CIFAR-10~\citep{krizhevsky2009learning}, and SVHN~\citep{netzer2011reading}.
		The architecture of the classifier in all of the cases is Wide-ResNet-32~\citep{zagoruyko2016wresnet}.
	Also, all attacks are with respect to $\ell_{\infty}$ norm with $\epsilon_{\max}=8/255$.
    The hyperparameters used for Greedy AdvFlow are the same as Table~\ref{tab:ap:AdvFlow_details}.
    In each iteration, we select the top-4 data samples to update the mean.}
	\label{tab:success_greedy}
	\begin{center}
		\begin{small}
				\begin{tabular}{cccccc}
					\toprule
					\parbox[t]{2mm}{\multirow{2}{*}{\rotatebox[origin=c]{90}{Data}}}&                        & \multicolumn{2}{c}{Success Rate(\%) \textuparrow}   & \multicolumn{2}{c}{Avg. (Med.) of Queries \textdownarrow}\\
					\cmidrule(lr){3-4}\cmidrule(lr){5-6}
					&Defense                      & Greedy AdvFlow          & AdvFlow      & Greedy AdvFlow                            & AdvFlow\\
					\midrule
					\parbox[t]{2mm}{\multirow{4}{*}{\rotatebox[origin=c]{90}{CIFAR-10}}}
					&Vanilla                      & $99.12$                 & $99.42$      & $991.98$ ($460$)                          &  $950.07$ ($400$)\\
					&FreeAdv                      & $41.06$                 & $41.21$      & $842.37$ ($180$)                          &  $923.58$ ($200$)\\
					&FastAdv                      & $40.06$                 & $40.22$      & $904.78$ ($200$)                          &  $963.77$ ($200$)\\
					&RotNetAdv                    & $40.50$                 & $40.67$      & $821.80$ ($180$)                          &  $880.86$ ($200$)\\
					\midrule
					\parbox[t]{2mm}{\multirow{4}{*}{\rotatebox[origin=c]{90}{SVHN}}}
					&Vanilla                      & $92.40$                 & $90.42$      & $1305.06$ ($540$)                         &  $1582.87$ ($800$)\\
					&FreeAdv                      & $52.57$                 & $50.63$      & $816.54$ ($200$)                          &  $1095.68$ ($200$)\\
					&FastAdv                      & $43.03$                 & $41.39$      & $781.06$ ($240$)                          &  $1046.45$ ($400$)\\
					&RotNetAdv                    & $45.43$                 & $44.37$      & $653.82$ ($160$)                          &  $923.59$ ($200$)\\
					\bottomrule
				\end{tabular}
		\end{small}
	\end{center}
\end{table*}

\subsection{AdvFlow for High-resolution Images}

Despite their ease-of-use in generating low-resolution images, high-resolution image generation with normalizing flows is computationally demanding.
This issue is even more pronounced in the case of images with high variabilities, such as the ImageNet~\cite{russakovsky2015imagenet} dataset, which may require a lot of invertible transformations to model them.
To cope with this problem, we propose an adjustment to our AdvFlow algorithm.
Instead of generating the image in the high-dimensional space, we first map it to a low-dimension space using bilinear interpolation.
Then, we perform the AdvFlow algorithm to generate the set of candidate examples.
Next, we compute the adversarial perturbations in the low-dimensional space and map them back to their high-dimensional representation using bilinear upsampling.
These perturbations are then added to the original target image, and the rest of the algorithm continues as before.
Figure~\ref{fig:ap:high_res} shows the block-diagram of the proposed solution for high-resolution data.
Moreover, the updated AdvFlow procedure is summarized in Algorithm~\ref{alg:advflow_high_res}.
Changes are highlighted in \textcolor{red}{red}.

\tikzstyle{function} = [rectangle, draw, fill=blue!20, text width=1.cm, text badly centered, node distance=1cm, minimum height=1.cm, inner sep=0pt]
\tikzstyle{line} = [draw, -latex']
\tikzstyle{downsampler} = [draw, trapezium, trapezium angle=67.5, fill=red!20, node distance=1.5cm, minimum height=2em, minimum width=9em, rotate=-90]
\tikzstyle{upsampler} = [draw, trapezium, trapezium angle=67.5, fill=red!20, node distance=1.5cm, minimum height=2em, minimum width=9em, rotate=90]
\tikzstyle{branch}=[fill,shape=circle,minimum size=3pt,inner sep=0pt]
\tikzstyle{block} = [circle, draw, fill=green!20, rounded corners, minimum height=0.05cm]

\begin{figure*}[b!]
	\centering
\begin{tikzpicture}[node distance = 1.25cm, auto]
\node [cloud] at (0,2) (Input) {\includegraphics[width=.1\textwidth]{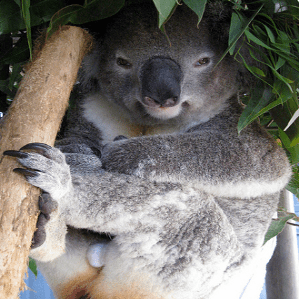}};
\node [branch] at (1.2,2) (br1){};
\node [cloud] at (1.2,0) (n1) {};
\node [downsampler] at (2.25, 2) (dw) {Downsampler};
\node [branch]  at (3,2) (br2){};
\node [function] at (4.2, 2) (alg) {NF};
\node [block] at (6, 2) (sub) {\textminus};
\node [upsampler] at (7.8, 2) (up) {\rotatebox{180}{~~Upsampler~~}};
\node [block] at (9.75, 2) (add) {+};
\node [cloud] at (10.65,2) (Output) {};

\node [cloud] at (3,2.5) (Input) {\includegraphics[width=.04\textwidth]{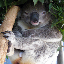}};

\node [cloud] at (5.4,2.8) (Input) {\includegraphics[width=.04\textwidth]{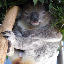}};
\node [cloud] at (5.3,2.7) (Input) {\includegraphics[width=.04\textwidth]{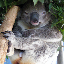}};
\node [cloud] at (5.2,2.6) (Input) {\includegraphics[width=.04\textwidth]{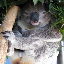}};
\node [cloud] at (5.1,2.5) (Input) {\includegraphics[width=.04\textwidth]{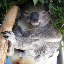}};

\node [cloud] at (7.0,2.8) (Input) {\includegraphics[width=.04\textwidth]{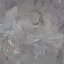}};
\node [cloud] at (6.9,2.7) (Input) {\includegraphics[width=.04\textwidth]{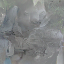}};
\node [cloud] at (6.8,2.6) (Input) {\includegraphics[width=.04\textwidth]{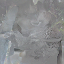}};
\node [cloud] at (6.7,2.5) (Input) {\includegraphics[width=.04\textwidth]{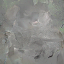}};

\node [cloud] at (9.25,3.1) (Input) {\includegraphics[width=.06\textwidth]{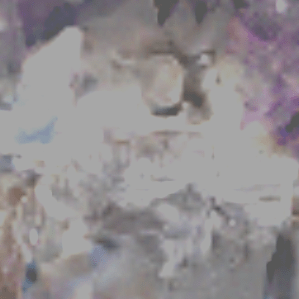}};
\node [cloud] at (9.05,2.9) (Input) {\includegraphics[width=.06\textwidth]{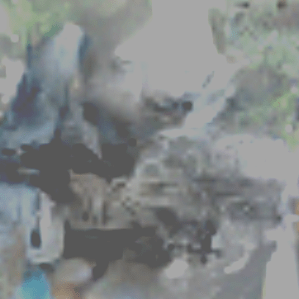}};
\node [cloud] at (8.85,2.7) (Input) {\includegraphics[width=.06\textwidth]{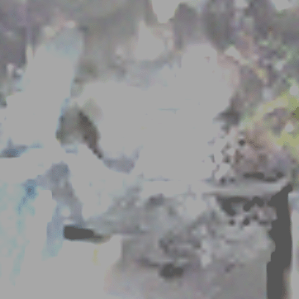}};
\node [cloud] at (8.65,2.5) (Input) {\includegraphics[width=.06\textwidth]{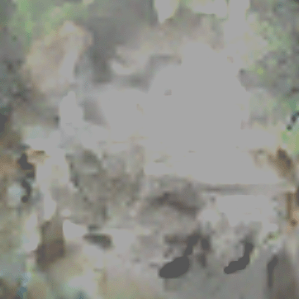}};

\node [cloud] at (11.75,2.6) (Input) {\includegraphics[width=.1\textwidth]{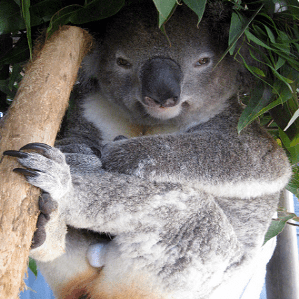}};
\node [cloud] at (11.55,2.4) (Input) {\includegraphics[width=.1\textwidth]{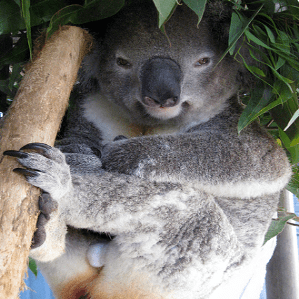}};
\node [cloud] at (11.35,2.2) (Input) {\includegraphics[width=.1\textwidth]{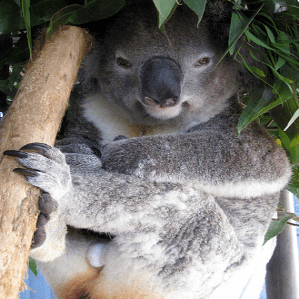}};
\node [cloud] at (11.15,2.) (Input) {\includegraphics[width=.1\textwidth]{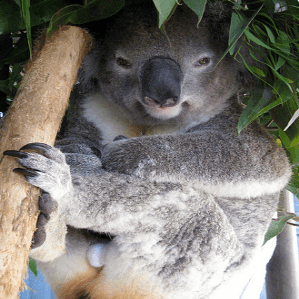}};

\draw[line width=0.4mm] (0.85,2) -- (br1);
\draw[-{Latex[length=3mm, width=1.5mm]}, line width=0.4mm] (br1) -- (dw);
\draw[line width=0.4mm] (dw) -- (br2);
\draw[-{Latex[length=3mm, width=1.5mm]}, line width=0.4mm] (br2) -- (alg);
\draw[-{Latex[length=3mm, width=1.5mm]}, line width=0.4mm] (alg) -- (sub);
\draw[-{Latex[length=3mm, width=1.5mm]}, line width=0.4mm] (sub) -- (up);
\draw[-{Latex[length=3mm, width=1.5mm]}, line width=0.4mm] (up) -- (add);
\draw[-{Latex[length=3mm, width=1.5mm]}, line width=0.4mm] (add) -- (Output);
\draw[-{Latex[length=3mm, width=1.5mm]}, line width=0.4mm] (br1) -- (1.2, 0) -| (add);
\draw[-{Latex[length=3mm, width=1.5mm]}, line width=0.4mm] (br2) -- (3., 1.) -| (sub);

\end{tikzpicture}
\caption{AdvFlow adjustment for high-resolution images. Instead of working with high-dimensional image, we downsample them. Then after generating candidate low-resolution perturbations, we map them to high-dimensions using a bilinear upsampler.}
\label{fig:ap:high_res}
\end{figure*}

\begin{algorithm}[htp] 
	\caption{AdvFlow for \textcolor{red}{high-resolution} black-box attack\label{alg:advflow_high_res}} 
	\textbf{Input}: Clean data~${\mathbf{x}}$, true label~${y}$, pre-trained flow-based model~${\mathbf{f}(\cdot)}$.
	\\
	\textbf{Output}: Adversarial example~${\mathbf{x}'}.$
	\\
	\textbf{Parameters}: noise variance~${\sigma^2}$, learning rate~${\alpha}$, population size~${n_{p}}$, maximum number of queries~${Q}$.
	\begin{algorithmic}[1]
		\State Initialize ${\boldsymbol{\mu}}$ randomly.
		\State \textcolor{red}{Downsample $\mathbf{x}$ and save it as $\mathbf{x}_{\mathrm{low}}$.}
		\State \textcolor{red}{Compute ${\mathbf{z}_{clean} = \mathbf{f}^{-1}(\mathbf{x}_{\mathrm{low}})}$.}
		\For {$q=1,2,\ldots, \lfloor Q/n_p\rfloor$}
		\State Draw ${n_p}$ samples from ${\boldsymbol{\delta}_z}=\boldsymbol{\mu} + \sigma \boldsymbol{\epsilon}$ where $\boldsymbol{\epsilon}\sim\mathcal{N}(\boldsymbol{\epsilon}|\mathbf{0}, I)$.
		\State Set ${\mathbf{z}_k = \mathbf{z}_{clean} + {\boldsymbol{\delta}_z}_k}$ for all ${k=1,\ldots, n_p}$.
		\State \textcolor{red}{Compute and upsample ${\boldsymbol{\gamma}_k =\mathbf{f}(\mathbf{z}_k) - \mathbf{x}_{\mathrm{low}}}$ for all ${k=1,\ldots, n_p}$.}
		\State \textcolor{red}{Calculate ${\mathcal{L}_k = \mathcal{L}\big(\mathrm{proj}_{\mathcal{S}}\big(\boldsymbol{\gamma}_k + \mathbf{x}\big)\big)}$ for all ${k=1,\ldots, n_p}$.}
		\State Normalize ${\hat{\mathcal{L}}_k = \big(\mathcal{L}_k - \mathrm{mean}(\boldsymbol{\mathcal{L}})\big)/\mathrm{std}(\boldsymbol{\mathcal{L}})}$.
		\State Compute ${\nabla_{\boldsymbol{\mu}}J(\boldsymbol{\mu}, \sigma) = \tfrac{1}{n_p}\sum_{k=1}^{n_p}\hat{\mathcal{L}}_k \boldsymbol{\epsilon}_{k}}$.
		\State Update ${\boldsymbol{\mu} \leftarrow \boldsymbol{\mu} - \alpha \nabla_{\boldsymbol{\mu}}J(\boldsymbol{\mu}, \sigma)}$.
		\EndFor
		\State \textcolor{red}{Upsample ${\boldsymbol{\gamma} = \mathbf{f}(\mathbf{z}_{clean} + \boldsymbol{\mu})} - \mathbf{x}_{\mathrm{low}}$.}
		\State \textcolor{red}{Output ${\mathbf{x}'} = \mathrm{proj}_{\mathcal{S}}\big(\boldsymbol{\gamma} + \mathbf{x}\big)$.}
	\end{algorithmic} 
\end{algorithm}

To test the performance of the proposed approach, we pre-train a flow-based model on a $64 \times 64$ version of ImageNet~\citep{russakovsky2015imagenet}.
The normalizing flow architecture and training hyperparameters are as shown in Table~\ref{tab:ap:nf_details}.
Furthermore, we use bandits with data-dependent priors~\citep{ilyas2019prior} and $\mathcal{N}$\textsc{Attack}~\cite{li2019nattack} to compare our model against them.
For bandits, we use the tuned hyperparameters for ImageNet~\citep{russakovsky2015imagenet} in the original paper~\citep{ilyas2019prior}.
Also, for $\mathcal{N}$\textsc{Attack}~\cite{li2019nattack} and AdvFlow we observe that the hyperparameters in Tables~\ref{tab:ap:NATTACK_details} and~\ref{tab:ap:AdvFlow_details} work best for the vanilla architectures in this dataset.
Thus, we kept them as before.

We use the nominated black-box methods to attack a classifier in less than $10,000$ queries.
We use pre-trained Inception-v3~\citep{szegedy2016rethinking}, ResNet-50~\citep{he2016deep}, and VGG-16~\citep{simonyan2015vgg} classifiers available on \texttt{torchvision} as our vanilla target models.
Also, a defended ResNet-50~\citep{he2016deep} model trained by \text{fast adversarial training}~\citep{wong2020fast} with FGSM ($\epsilon = 4/255$) is used for evaluation.
This model is available online on the official repository of fast adversarial training.\footnote{\href{https://github.com/anonymous-sushi-armadillo}{github.com/anonymous-sushi-armadillo}}   
  
Table~\ref{tab:success_imagenet} shows our experimental results on the ImageNet~\citep{russakovsky2015imagenet} dataset.
As can be seen, we get similar results to CIFAR-10~\citep{krizhevsky2009learning} and SVHN~\citep{netzer2011reading} experiments in Table~\ref{tab:success}: in case of vanilla architectures, we are performing slightly worse than $\mathcal{N}$\textsc{Attack}~\citep{li2019nattack}, while in the defended case we improve their performance considerably.
It should also be noted again that in all of the cases we are generating adversaries that look like the original data, and come from the same distribution.
This property is desirable in confronting adversarial example detectors.
Figure~\ref{fig:imagenet_samples} depicts a few adversarial examples generated by AdvFlow compared to $\mathcal{N}$\textsc{Attack}~\citep{li2019nattack} for a vanilla Inception-v3~\citep{szegedy2016rethinking} DNN classifier.
As seen, the AdvFlow perturbations tend to take the shape of the data to reduce the possibility of changing the underlying data distribution.
In contrast, $\mathcal{N}$\textsc{Attack} perturbations are pixel-level, independent additive noise that cause the adversarial example distribution to become different from that of the data.

\begin{table*}[htp]
	\caption{Attack success rate and average (median) of the number of queries needed to generate an adversarial example for ImageNet~\citep{russakovsky2015imagenet}.
		For a fair comparison, we first find the samples where all the attack methods are successful, and then compute the average (median) of queries for these samples.
		Note that for $\mathcal{N}$\textsc{Attack} and AdvFlow we check whether we arrived at an adversarial point every $200$ queries, hence the medians are multiples of $200$.
		All attacks are with respect to $\ell_{\infty}$ norm with $\epsilon_{\max}=8/255$.
	    * The accuracy is computed with respect to the $1000$ test data used for attack evaluation.}
	\label{tab:success_imagenet}
	\begin{center}
		\begin{small}
				\resizebox{\columnwidth}{!}{
				\begin{tabular}{ccccc}
					\toprule
					\parbox[t]{2mm}{\multirow{2}{*}{\rotatebox[origin=c]{90}{Data}}} &&Attack& \multicolumn{2}{c}{Bandits~\citep{ilyas2019prior} / $\mathcal{N}$\textsc{Attack}~\citep{li2019nattack} / SimBA~\citep{guo2019simba} / AdvFlow (ours)}\\
					\cmidrule(lr){4-5}
					&Arch.                               & Acc*(\%)          & \multicolumn{1}{c}{Success Rate(\%) \textuparrow}   & \multicolumn{1}{c}{Avg. (Med.) of Queries \textdownarrow}\\
					\midrule
					\parbox[t]{2mm}{\multirow{4}{*}{\rotatebox[origin=c]{90}{ImageNet}}}
					& Inception-v3                       & $99.20$           & $87.80$ / $\mathbf{95.06}$ / $80.95$ / $87.50$   & $1034.41$ ($430$) / $\mathbf{680.52}$ ($\mathbf{400}$) / $1481.12$ ($1142$) / $1516.34$ ($800$)\\
					& VGG16                              & $92.50$           & $95.46$ / $\mathbf{99.57}$ / $97.95$ / $97.51$   & $541.64$ ($\mathbf{166}$) / $\mathbf{395.61}$ ($200$) / $608.42$ ($486$) / $1239.03$ ($600$)\\
					& Van. ResNet                        & $95.00$           & $95.79$ / $\mathbf{99.47}$ / $98.42$ / $95.58$   & $948.90$ ($\mathbf{364}$) / $\mathbf{604.31}$ ($400$) / $701.92$ ($494$) / $1501.13$ ($800$)\\
					& Def. ResNet                        & $71.50$           & $50.77$ / $33.99$ / $47.55$ / $\mathbf{57.20}$   & $914.58$ ($404$) / $2170.82$ ($1200$) / $969.91$ ($696$) / $\mathbf{381.97}$ ($\mathbf{200}$)\\
					\bottomrule
				\end{tabular}}
		\end{small}
	\end{center}
\end{table*}

\begin{figure}[htp]
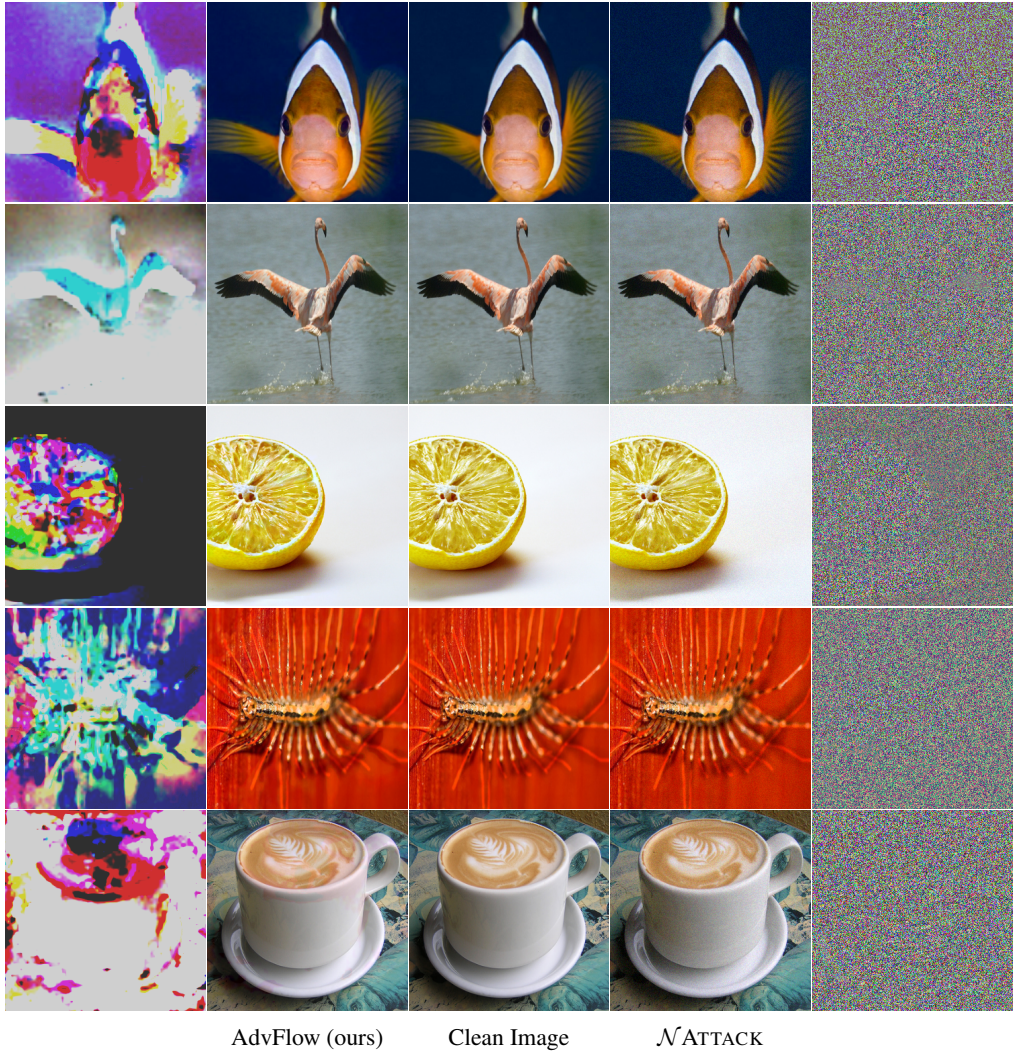

	\centering
	\begin{subfigure}{.19\textwidth}
		\centering
		\includegraphics[width=1.0\textwidth]{/Samples/AdvFlow/1_pert.png}
	\end{subfigure}
    \hspace*{-0.425em}
	\begin{subfigure}{.19\textwidth}
		\centering
		\includegraphics[width=1.0\textwidth]{/Samples/AdvFlow/1_adv.png}
	\end{subfigure}
	\hspace*{-0.425em}
	\begin{subfigure}{.19\textwidth}
		\centering
		\includegraphics[width=1.0\textwidth]{/Samples/AdvFlow/1_clean.png}
	\end{subfigure}
	\hspace*{-0.425em}
	\begin{subfigure}{.19\textwidth}
		\centering
		\includegraphics[width=1.0\textwidth]{/Samples/NATTACK/1_adv.png}
	\end{subfigure}
	\hspace*{-0.425em}
	\begin{subfigure}{.19\textwidth}
		\centering
		\includegraphics[width=1.0\textwidth]{/Samples/NATTACK/1_pert.png}
	\end{subfigure}\\
	\begin{subfigure}{.19\textwidth}
		\centering
		\includegraphics[width=1.0\textwidth]{/Samples/AdvFlow/2_pert.png}
	\end{subfigure}
	\hspace*{-0.425em}
	\begin{subfigure}{.19\textwidth}
		\centering
		\includegraphics[width=1.0\textwidth]{/Samples/AdvFlow/2_adv.png}
	\end{subfigure}
	\hspace*{-0.425em}
	\begin{subfigure}{.19\textwidth}
		\centering
		\includegraphics[width=1.0\textwidth]{/Samples/AdvFlow/2_clean.png}
	\end{subfigure}
	\hspace*{-0.425em}
	\begin{subfigure}{.19\textwidth}
		\centering
		\includegraphics[width=1.0\textwidth]{/Samples/NATTACK/2_adv.png}
	\end{subfigure}
	\hspace*{-0.425em}
	\begin{subfigure}{.19\textwidth}
		\centering
		\includegraphics[width=1.0\textwidth]{/Samples/NATTACK/2_pert.png}
	\end{subfigure}\\
	\begin{subfigure}{.19\textwidth}
		\centering
		\includegraphics[width=1.0\textwidth]{/Samples/AdvFlow/3_pert.png}
	\end{subfigure}
	\hspace*{-0.425em}
	\begin{subfigure}{.19\textwidth}
		\centering
		\includegraphics[width=1.0\textwidth]{/Samples/AdvFlow/3_adv.png}
	\end{subfigure}
	\hspace*{-0.425em}
	\begin{subfigure}{.19\textwidth}
		\centering
		\includegraphics[width=1.0\textwidth]{/Samples/AdvFlow/3_clean.png}
	\end{subfigure}
	\hspace*{-0.425em}
	\begin{subfigure}{.19\textwidth}
		\centering
		\includegraphics[width=1.0\textwidth]{/Samples/NATTACK/3_adv.png}
	\end{subfigure}
	\hspace*{-0.425em}
	\begin{subfigure}{.19\textwidth}
		\centering
		\includegraphics[width=1.0\textwidth]{/Samples/NATTACK/3_pert.png}
	\end{subfigure}\\
	\begin{subfigure}{.19\textwidth}
		\centering
		\includegraphics[width=1.0\textwidth]{/Samples/AdvFlow/4_pert.png}
	\end{subfigure}
	\hspace*{-0.425em}
	\begin{subfigure}{.19\textwidth}
		\centering
		\includegraphics[width=1.0\textwidth]{/Samples/AdvFlow/4_adv.png}
	\end{subfigure}
	\hspace*{-0.425em}
	\begin{subfigure}{.19\textwidth}
		\centering
		\includegraphics[width=1.0\textwidth]{/Samples/AdvFlow/4_clean.png}
	\end{subfigure}
	\hspace*{-0.425em}
	\begin{subfigure}{.19\textwidth}
		\centering
		\includegraphics[width=1.0\textwidth]{/Samples/NATTACK/4_adv.png}
	\end{subfigure}
	\hspace*{-0.425em}
	\begin{subfigure}{.19\textwidth}
		\centering
		\includegraphics[width=1.0\textwidth]{/Samples/NATTACK/4_pert.png}
	\end{subfigure}\\
	\begin{subfigure}{.19\textwidth}
		\centering
		\includegraphics[width=1.0\textwidth]{/Samples/AdvFlow/6_pert.png}
		\caption*{}
	\end{subfigure}
	\hspace*{-0.425em}
	\begin{subfigure}{.19\textwidth}
		\centering
		\includegraphics[width=1.0\textwidth]{/Samples/AdvFlow/6_adv.png}
		\caption*{AdvFlow (ours)}
	\end{subfigure}
	\hspace*{-0.425em}
	\begin{subfigure}{.19\textwidth}
		\centering
		\includegraphics[width=1.0\textwidth]{/Samples/AdvFlow/6_clean.png}
		\caption*{Clean Image}
	\end{subfigure}
	\hspace*{-0.425em}
	\begin{subfigure}{.19\textwidth}
		\centering
		\includegraphics[width=1.0\textwidth]{/Samples/NATTACK/6_adv.png}
		\caption*{$\mathcal{N}$\textsc{Attack}}
	\end{subfigure}
	\hspace*{-0.425em}
	\begin{subfigure}{.19\textwidth}
		\centering
		\includegraphics[width=1.0\textwidth]{/Samples/NATTACK/6_pert.png}
		\caption*{}
	\end{subfigure}
	\caption{Magnified difference and adversarial examples generated by AdvFlow (ours) and $\mathcal{N}$\textsc{Attack}~\cite{li2019nattack} alongside the clean data for ImageNet~\citep{russakovsky2015imagenet} dataset. The target network architecture is Inception-v3~\citep{szegedy2016rethinking}.}
	\label{fig:imagenet_samples}
\end{figure}
\clearpage

\subsection{AdvFlow for People in a Hurry!}

Alternatively, one can use the plain structure of AdvFlows for black-box adversarial attacks.
To this end, we are only required to initialize the normalizing flow randomly.
This way, however, we will be getting random-like perturbations as in $\mathcal{N}$\textsc{Attack}~\cite{li2019nattack} since we have not trained the flow-based model.
Using this approach, we can surpass the performance of the baselines in vanilla DNNs.
In fact, giving away a little bit of performance is the price we pay to force the perturbation to have a data-like structure so that the adversaries have a similar distribution to the data.
Table~\ref{tab:success_imagenet_random} summarizes the performance of randomly initialized AdvFlow in contrast to bandits with data-dependent priors~\citep{ilyas2019prior} and $\mathcal{N}$\textsc{Attack}~\cite{li2019nattack} for vanilla ImageNet~\citep{russakovsky2015imagenet} classifiers.
Also, Figure~\ref{fig:imagenet_samples_random} shows a few adversarial examples generated by randomly initialized AdvFlows in this case. 

\begin{table*}[htp]
	\caption{Attack success rate and average (median) of the number of queries needed to generate an adversarial example for ImageNet~\citep{russakovsky2015imagenet}.
		For a fair comparison, we first find the samples where all the attack methods are successful, and then compute the average (median) of queries for these samples.
		Note that for $\mathcal{N}$\textsc{Attack} and AdvFlow we check whether we arrived at an adversarial point every $200$ queries, hence the medians are multiples of $200$.
		All attacks are with respect to $\ell_{\infty}$ norm with $\epsilon_{\max}=8/255$.
		* The accuracy is computed with respect to the $1000$ test data used for attack evaluation.}
	\label{tab:success_imagenet_random}
	\begin{center}
		\begin{small}
				\resizebox{\columnwidth}{!}{
					\begin{tabular}{cccc}
						\toprule
						&Attack& \multicolumn{2}{c}{Bandits~\citep{ilyas2019prior} / $\mathcal{N}$\textsc{Attack}~\citep{li2019nattack} / SimBA~\citep{guo2019simba} / AdvFlow (Random Init.)}\\
						\cmidrule(lr){3-4}
						Arch.                              & Acc*(\%)      & \multicolumn{1}{c}{Success Rate(\%) \textuparrow} & \multicolumn{1}{c}{Avg. (Med.) of Queries \textdownarrow}\\
						\midrule
						Inception-v3                       & $99.20$       & $87.80$ / $95.06$ / $80.95$ / $\mathbf{97.78}$    & $1081.13$ ($452$) / $745.10$ ($400$) / $1537.71$ ($1173$) / $\mathbf{375.00}$ ($\mathbf{200}$)\\
						VGG16                              & $92.50$       & $95.46$ / $\mathbf{99.57}$ / $97.95$ / $99.35$    & $586.14$ ($174$)  / $418.33$ ($\mathbf{200}$) / $637.96$ ($503$) / $\mathbf{299.89}$ ($\mathbf{200}$)\\
						Van. ResNet                        & $95.00$       & $95.79$ / $\mathbf{99.47}$ / $99.16$ / $98.42$    & $1053.44$ ($415$) / $672.53$ ($400$) / $758.32$ ($523$) / $\mathbf{370.63}$ ($\mathbf{200}$)\\
						\bottomrule
				\end{tabular}}
		\end{small}
	\end{center}
	\vskip -0.1in
\end{table*}
\begin{figure}[htp]
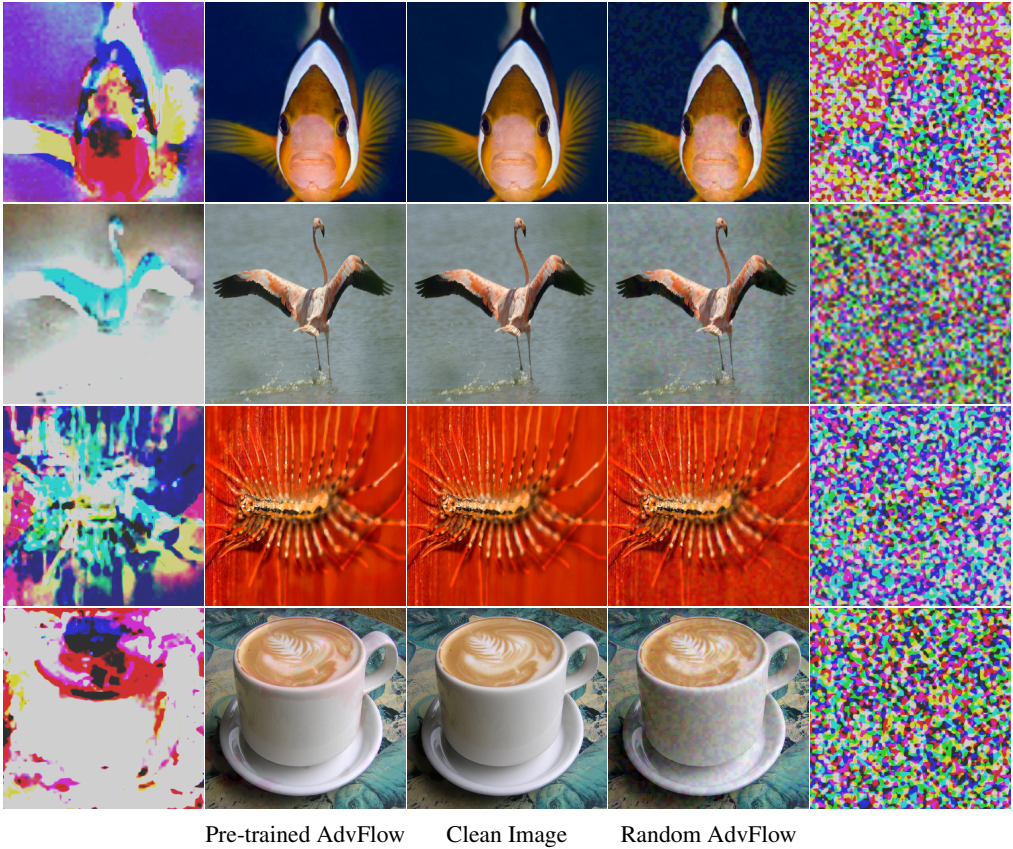

	\centering
	\begin{subfigure}{.19\textwidth}
		\centering
		\includegraphics[width=1.0\textwidth]{/Samples/AdvFlow/1_pert.png}
	\end{subfigure}
	\hspace*{-0.425em}
	\begin{subfigure}{.19\textwidth}
		\centering
		\includegraphics[width=1.0\textwidth]{/Samples/AdvFlow/1_adv.png}
	\end{subfigure}
	\hspace*{-0.425em}
	\begin{subfigure}{.19\textwidth}
		\centering
		\includegraphics[width=1.0\textwidth]{/Samples/AdvFlow/1_clean.png}
	\end{subfigure}
	\hspace*{-0.425em}
	\begin{subfigure}{.19\textwidth}
		\centering
		\includegraphics[width=1.0\textwidth]{/Samples/Random/1_adv.png}
	\end{subfigure}
	\hspace*{-0.425em}
	\begin{subfigure}{.19\textwidth}
		\centering
		\includegraphics[width=1.0\textwidth]{/Samples/Random/1_pert.png}
	\end{subfigure}\\
	\begin{subfigure}{.19\textwidth}
		\centering
		\includegraphics[width=1.0\textwidth]{/Samples/AdvFlow/2_pert.png}
	\end{subfigure}
	\hspace*{-0.425em}
	\begin{subfigure}{.19\textwidth}
		\centering
		\includegraphics[width=1.0\textwidth]{/Samples/AdvFlow/2_adv.png}
	\end{subfigure}
	\hspace*{-0.425em}
	\begin{subfigure}{.19\textwidth}
		\centering
		\includegraphics[width=1.0\textwidth]{/Samples/AdvFlow/2_clean.png}
	\end{subfigure}
	\hspace*{-0.425em}
	\begin{subfigure}{.19\textwidth}
		\centering
		\includegraphics[width=1.0\textwidth]{/Samples/Random/2_adv.png}
	\end{subfigure}
	\hspace*{-0.425em}
	\begin{subfigure}{.19\textwidth}
		\centering
		\includegraphics[width=1.0\textwidth]{/Samples/Random/2_pert.png}
	\end{subfigure}\\
	\begin{subfigure}{.19\textwidth}
		\centering
		\includegraphics[width=1.0\textwidth]{/Samples/AdvFlow/4_pert.png}
	\end{subfigure}
	\hspace*{-0.425em}
	\begin{subfigure}{.19\textwidth}
		\centering
		\includegraphics[width=1.0\textwidth]{/Samples/AdvFlow/4_adv.png}
	\end{subfigure}
	\hspace*{-0.425em}
	\begin{subfigure}{.19\textwidth}
		\centering
		\includegraphics[width=1.0\textwidth]{/Samples/AdvFlow/4_clean.png}
	\end{subfigure}
	\hspace*{-0.425em}
	\begin{subfigure}{.19\textwidth}
		\centering
		\includegraphics[width=1.0\textwidth]{/Samples/Random/4_adv.png}
	\end{subfigure}
	\hspace*{-0.425em}
	\begin{subfigure}{.19\textwidth}
		\centering
		\includegraphics[width=1.0\textwidth]{/Samples/Random/4_pert.png}
	\end{subfigure}\\
	\begin{subfigure}{.19\textwidth}
		\centering
		\includegraphics[width=1.0\textwidth]{/Samples/AdvFlow/6_pert.png}
		\caption*{}
	\end{subfigure}
	\hspace*{-0.425em}
	\begin{subfigure}{.19\textwidth}
		\centering
		\includegraphics[width=1.0\textwidth]{/Samples/AdvFlow/6_adv.png}
		\caption*{Pre-trained AdvFlow}
	\end{subfigure}
	\hspace*{-0.425em}
	\begin{subfigure}{.19\textwidth}
		\centering
		\includegraphics[width=1.0\textwidth]{/Samples/AdvFlow/6_clean.png}
		\caption*{Clean Image}
	\end{subfigure}
	\hspace*{-0.425em}
	\begin{subfigure}{.19\textwidth}
		\centering
		\includegraphics[width=1.0\textwidth]{/Samples/Random/6_adv.png}
		\caption*{Random AdvFlow}
	\end{subfigure}
	\hspace*{-0.425em}
	\begin{subfigure}{.19\textwidth}
		\centering
		\includegraphics[width=1.0\textwidth]{/Samples/Random/6_pert.png}
		\caption*{}
	\end{subfigure}
	\caption{Magnified difference and adversarial examples generated by AdvFlow in trained and un-trained scenarios for ImageNet~\citep{russakovsky2015imagenet} dataset. The target network architecture is Inception-v3~\citep{szegedy2016rethinking}.}
	\label{fig:imagenet_samples_random}
\end{figure}
\end{document}